\title{Perceptual Kalman Filters: Online State Estimation under a Perfect Perceptual-Quality Constraint}
\author{%
  Dror Freirich\\
 Technion - Israel Institute of Technology \\
 \texttt{drorfrc@gmail.com}
  \And
  Tomer Michaeli\\
  Technion - Israel Institute of Technology \\
  \texttt{tomer.m@ee.technion.ac.il} \\
   \And
 Ron Meir\\
 Technion - Israel Institute of Technology \\
  \texttt{rmeir@ee.technion.ac.il} \\
}
\begin{document}

\maketitle

\begin{abstract}
Many practical settings call for the reconstruction of temporal signals from corrupted or missing data. Classic examples include decoding, tracking, signal enhancement and denoising. Since the reconstructed signals are ultimately viewed by humans, it is desirable to achieve reconstructions that are pleasing to human perception.
Mathematically, perfect perceptual-quality is achieved when the distribution of restored signals is the same as that of natural signals, a requirement which has been heavily researched in static estimation settings (i.e.~when a whole signal is processed at once). 
Here, we study the problem of optimal \emph{causal}  filtering under a perfect perceptual-quality constraint, which is a task of fundamentally different nature.  
Specifically, we analyze a Gaussian Markov signal observed through a linear noisy transformation. In the absence of perceptual constraints, the Kalman filter is known to be optimal in the MSE sense for this setting. Here, we show that adding the perfect perceptual quality constraint (i.e.~the requirement of temporal consistency), introduces a fundamental dilemma whereby the filter may have to ``knowingly'' ignore new information revealed by the observations in order to conform to its past decisions. This often comes at the cost of a significant increase in the MSE (beyond that encountered in static settings). Our analysis goes beyond the classic innovation process of the Kalman filter, and introduces the novel concept of an unutilized information process. Using this tool, we present a recursive formula for perceptual filters, 
and demonstrate the qualitative effects
of perfect perceptual-quality estimation on a video reconstruction problem.
\end{abstract}

\section{Introduction}

In many settings, it is desired to reconstruct a temporal signal from corrupted or missing data. Examples include decoding of transmitted communications, tracking targets based on noisy measurements, enhancing audio signals, and denoising videos.  
Traditionally, restoration quality has been assessed by distortion
measures such as MSE. As a result, numerous methods targeted the minimization of such measures, including
the seminal work of \citet{kalman1960filtering}. However, in applications involving human perception,
one may favor reconstructions that cannot be told apart from valid signals. Mathematically, such \textit{perfect perceptual
quality} can be achieved only if the distribution of restored signals is the same as that of ``natural'' signals.

Interestingly, it has been shown that good {perceptual
quality} generally comes at the price of poor distortion and vice versa. This phenomenon, known as the \textit{perception-distortion tradeoff}, was first studied in \citep{blau2018perception}, and was later fully characterized in \citep{freirich2021theory} for the particular setting where distortion is measured by MSE and perception is measured by the Wasserstein-$2$ distance between the distribution of estimated signals and the distribution of real signals.  However, to date, all existing works addressed the static (non-temporal) setting, in which the entire corrupted signal is available for processing all at once. This setting is fundamentally different from situations involving temporal signals, in which the corrupted signal is processed causally over time, such that each sample is reconstructed only based on observations up to the current time.

\begin{wrapfigure}{R}{0.5\linewidth}
\center \includegraphics[viewport=60bp 100bp 870bp 450bp,clip,width=.99\linewidth]{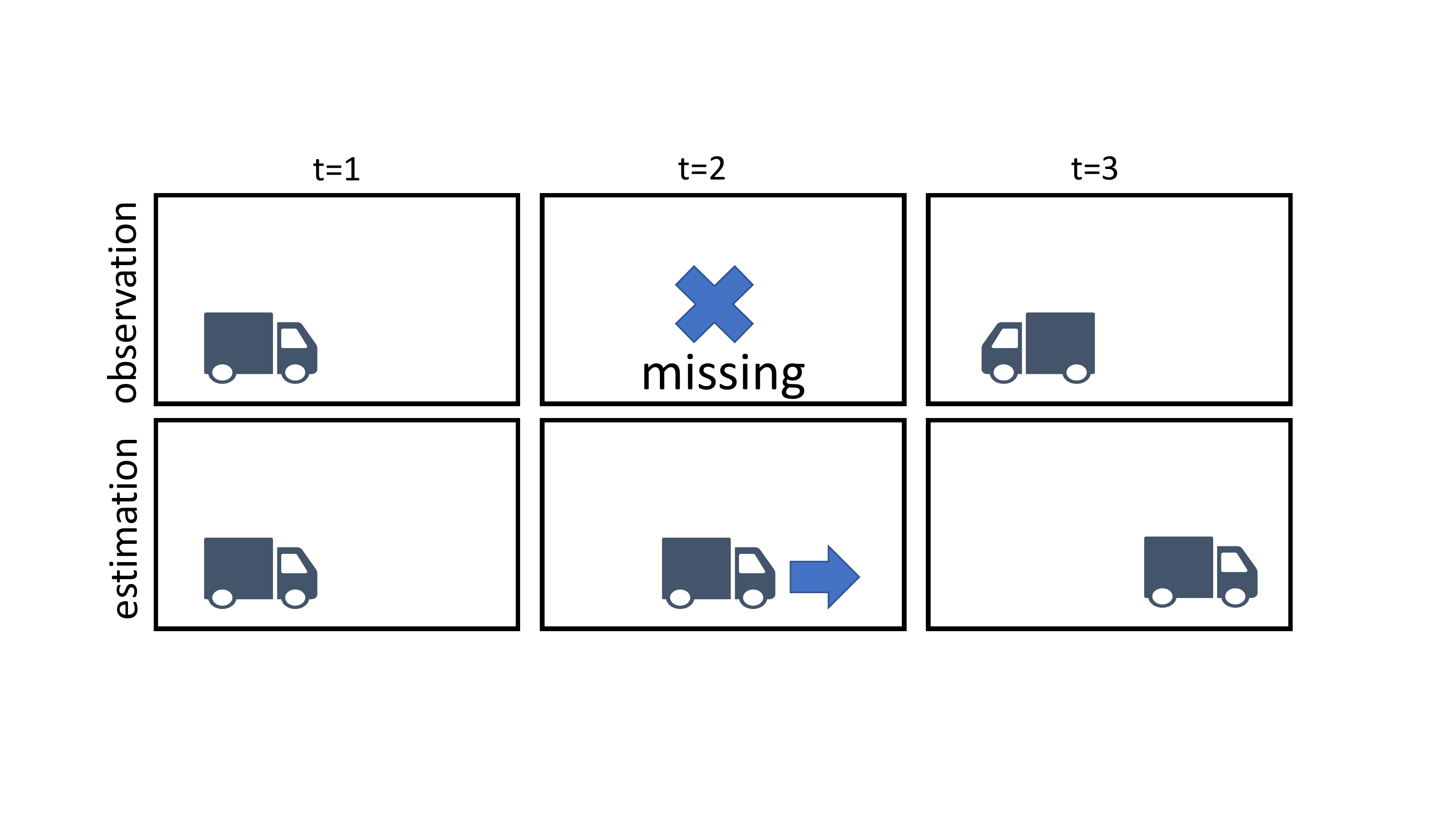}
\caption[The temporal consistency dilemma.]{\textbf{The temporal consistency dilemma.} \label{fig:dilemma}
Estimation cannot suddenly change the motion in the output video, because such an abrupt change would deviate from natural video statistics. Thus, although the method is aware of its mistake, it may have to stick to its past decisions.}
\end{wrapfigure}

To illustrate the inherent difficulty in causal estimation, consider video restoration tasks like denoising, super-resolution, or frame completion (see Fig.~\ref{fig:dilemma}). Achieving high perceptual quality in those tasks requires generating restored videos whose spatio-temporal distribution matches that of natural videos. Particularly, an incorrect temporal distribution may lead to flickering artifacts~\citep{perez2018perceptual} or to unnaturally slow dynamics~\citep{chu2020learning}. To comply with this requirement, the restoration method needs to `hallucinate' motion whenever the dynamics cannot be accurately determined from the measurements. For example, it may be impossible to determine whether a car is standing still or moving slowly from just a few noisy frames, yet the restoration method must generate \emph{some} (nearly) constant velocity in order to comply with the statistics of natural videos. However, as more measurements become available, the uncertainty may be reduced, and it may become evident that the hallucinated dynamics were in fact incorrect. When this happens, the method cannot suddenly change the motion in the output video, because such an abrupt change would deviate from natural video statistics. Thus, although the method ``becomes aware'' of its mistake, it may have to stick to its past decisions for a while. A natural question is, therefore:
\begin{center} 
    \textit{What is the precise cost of temporal consistency in online restoration?}
\end{center} 

In this paper, we study this question in the setting where the signal to be restored, $x_t$, is a discrete-time Gaussian Markov  process, and the measurements $y_t$ are noisy linear transformations of the signal's state. 
We address the problem of designing a \textit{causal filter} for estimating $x_t$ from $y_t$, where the distribution law of the filter's output, $\hat{x}_t$, is constrained to be the same as that of $x_t$ (perfect perceptual quality).
We show that this temporal consistency constraint indeed comes at the cost of increased MSE compared to filters that only enforce the correct distribution per time step, but not joint distributions across time steps. To derive a recursive form for linear perceptual filters, we introduce the novel concept of an \textit{unutilized information} process, which is the portion of accumulated information that does not depend on past estimates. We provide a closed-form expression for the MSE of such filters and show how to design their coefficients to minimize different objectives. We further establish a special class of perceptual
filters, based on the classic \textit{innovation} process, which has an explicit solution.  
We analyze the evolution of MSE over time for perceptual filters and for non-perceptual ones in several numerical setups. Finally, we demonstrate the qualitative effects of perfect perceptual-quality estimation on a simplified video reconstruction problem.

\paragraph{Related work}
Many works proposed practical algorithms for achieving high (spatio-temporal) perceptual quality in video restoration tasks.%
\citet{bhattacharjee2017temporal}
improved temporal coherence by using a loss that penalizes discontinuities between sequential frames. 
\citet{perez2018perceptual}
suggested a recurrent generator architecture whose inputs include the low-resolution current frame (at time $t$), the high-resolution reconstruction of the previous one (at time $t-1$), and a low-resolution version of the previous frame, aligned to the current one. The model is trained using losses that encourage it to conserve the joint statistics
between consecutive frames.
\citet{chu2020learning} introduced a temporally coherent GAN
architecture (TecoGAN). Their generator's input includes again a warped version of the previously generated frame, where each discriminator input consists of 3 consecutive frames, either generated or ground-truth. They also introduced a bi-directional loss that encourages long-term consistency by avoiding temporal accumulation of artifacts, and another loss which measures the similarity between motions. 
More  recent progress in generating temporally coherent videos includes \textit{Make-a-video} \citep{singer2022make}  that expands a text-to image model with spatio-temporal convolutional and attention layers, and Video-Diffusion Models \citep{ho2022video} that use a diffusion model with a network architecture adapted to video.
In the context of online restoration, we mention the work of \citet{Kim_2018_ECCV} which presented a GAN architecture for real-time video deblurring, where restoration is done sequentially. They introduce a network layer for dynamically (at test-time) blending  features between consecutive frames. This mechanism enables generated features to propagate into future frames, thus improving consistency. We note, however, that our work is the first to provide a theoretical/mathematical framework, and a closed-form solution for a special case.

\section{Preliminaries: The distortion-perception tradeoff}
\label{sec::prelim:DP}
Let $x,y$ be random vectors taking values in $\mathbb{R}^{n_{x}}$
and $\mathbb{R}^{n_{y}}$, respectively, with joint probability $p_{xy}$. Suppose we want to estimate $x$ based on $y$, such that the estimator $\hat{x}$ satisfies two requirements: (i) It has a low distortion $\EE[d(x,\hat{x})]$, where $d(\cdot,\cdot)$ is some measure of discrepancy between signals; (ii) It has a good perceptual quality, \textit{i.e.}~it achieves a low value of $d_{p}(p_{x},p_{\hat{x}})$, where $d_{p}(\cdot,\cdot)$ is a divergence
between probability measures.
\citet{blau2018perception} studied the best possible distortion that can be achieved under a given level of perceptual quality, by introducing the \textit{distortion-perception} function 
\begin{equation}
D(P)=\min_{p_{\hat{x}|y}}\left\{ \EE[d(x,\hat{x})]\;:\;d_{p}(p_{x},p_{\hat{x}})\leq P\right\}. \label{eq:D_P::General_definition}
\end{equation}
\citet{freirich2021theory} provided a complete characterization of $D(P)$ for the case where $d$ is the squared-error and $d_p$ is the Wasserstein-$2$ distance. Particularly, in the Gaussian case, they developed a closed-form expression for the optimal estimator. 

In this paper we discuss estimation with perfect perceptual quality, namely $P = 0$. 
In this case, \citep[Thm.~4]{freirich2021theory}
implies that if $x$ and $y$ are zero-mean, jointly-Gaussian with covariances $\Sigma_x,\Sigma_{y} \succ0$, and
$x^*=\EEb{x|y}$,
then a MSE-optimal perfect perceptual-quality estimator is obtained by 
\begin{equation} 
\hat{x}=\TT^*x^*+w, \, 
\label{eq::Gaussians:optimalT}
\quad {\TT}^{*}\triangleq \Sigma_{x}^{\hlf}(\Sigma_{x}^{\hlf}\Sigma_{x^*}\Sigma_{x}^{\hlf})^{\hlf}\Sigma_{x}^{-\hlf}\Sigma_{x^*}^{\dagger},
\end{equation}
where $w$ is a zero-mean Gaussian noise with covariance
$\Sigma_{w}=\Sigma_{x}-\TT^{*}\Sigma_{x^*}^{}\TT^{*\T}$,
independent of $y$ and $x$, and $\Sigma_{x^*}^{\dagger}$ is the Moore-Penrose inverse of $\Sigma_{x^*}$.
For the more general case where $\Sigma_x \succeq 0$, a similar result can be obtained by using Theorem~\ref{thm:=00005BOlkin-1982,-Thm.4=00005D} in the Appendix.

\section{Problem formulation}
\label{sec::problem setting}

We consider a state
$x_{k}\in\mathbb{R}^{n_{x}}$ with linear dynamics driven by Gaussian noise, and observations $y_k \in \R^{n_y}$ that are linear transformations of $x_{k}$ perturbed by  Gaussian noise,
\begin{align}
x_{k}&=A_{k}x_{k-1}+q_{k},& k=1,...,T, \label{eq:kalman:setting1} \\
y_{k}&=C_{k}x_{k}+r_{k},& k=0,...,T.\label{eq:eq:kalman:setting2}
\end{align}
Here, the noise vectors $q_{k}\sim\mathcal{N}(0,Q_{k})$ and $r_{k}\sim\mathcal{N}(0,R_{k})$ are independent white Gaussian
processes, 
and $x_{0}\sim\mathcal{N}(0,P_0)$ is independent of $q_{1},r_{0}$. For convenience, we will sometimes refer to $P_{0}$ as $Q_{0}$. The matrices $A_{k}$, $C_{k}$, $Q_{k}$, $R_{k}$
and $P_{0}$ are deterministic system parameters with appropriate dimensions, and
assumed to be known. 

Our goal is to construct an estimated sequence $\hat{X}_{0}^{T} = (\hat{x}_{0},\ldots,\hat{x}_{T})$ based on the measurements $Y_0^T=(y_{0},\ldots,y_{T})$, which minimizes the cost 
\begin{align}
\C(\hat{x}_{0},\ldots,\hat{x}_{T}) &= 
\sum_{k=0}^{T}\w_{k}\EEb{\|{x}_{k}-\hat{x}_{k}\|^{2}}, \label{eq::cost_C:def}
\end{align}
for some given weights $\w_k\geq0$. Importantly, we want to do so under the following two constraints.
\begin{align}
    &\mathrm{Temporal~causality:~}\qquad\qquad\quad  \hat{x}_{k}\sim p_{\hat{x}_{k}}(\cdot|y_{0},\ldots,y_{k},\hat{x}_{0},\ldots,\hat{x}_{k-1}), \label{eq:causality} \\
    &\mathrm{Perfect~perceptual~quality:~} \qquad p_{\hat{X}_{0}^{T}}=p_{X_{0}^{T}}.\label{eq:perfect_perception}
\end{align}

Note that Condition \eqref{eq:perfect_perception} requires not only that every estimated sample have the same distribution as the original
one, but also that the \emph{joint} distribution of every subset of reconstructed samples be identical to that of the corresponding subset of samples in the original sequence.
In the context of video processing, this means that not only does every recovered frame have to look natural, but also that motion must look natural. This perfect perceptual quality constraint is what sets our problem apart from the classical Kalman filtering problem, which considers only the causality constraint. Since we will make use of the Kalman filter, let us briefly summarize it.

\paragraph{The Kalman filter (no perceptual quality constraint)}
Let 
$\hat{x}^*_{k|s}\triangleq
\EEb{x_k|y_{0},\dots,y_{s}}$
denote the estimator of $x_k$ based on all observations up to time $s$, which minimizes the MSE. 
The celebrated Kalman filter \citep{kalman1960filtering} is an efficient method for calculating 
the \textit{Kalman optimal state} 
$\hat{x}_{k}^{*}\equiv\hat{x}^*_{k|k}$ recursively without having to store all observations up to time $k$. It is given by the recurrence 
\begin{equation}
\hat{x}_{k}^{*}=A_{k}\hat{x}_{k-1}^{*}+K_{k}\II_{k},
\label{eq::kalman:recurrence}
\end{equation}
where $K_{k}$ is the \textit{optimal Kalman gain} \citep{kalman1960filtering}, whose recursive calculation is given in Algorithm~\ref{alg:Kalman-Filter} in the Appendix.
The vector $\II_{k}$ is the \textit{innovation} process,
\begin{equation}
\II_{k}=y_{k}-C_{k}\hat{x}^*_{k|k-1},
\end{equation}
describing the new information carried by the observation $y_k$ over the optimal prediction based on the observations up to time $k-1$, which is given by $\hat{x}^*_{k|k-1}=A_k\hat{x}_{k-1}^*$.
The innovation $\II_{k}$ is uncorrelated with all observations up to time $k-1$,
which guaranties the MSE optimality of the estimation. The calculation
of the Kalman state is also summarized in Alg.~\ref{alg:Kalman-Filter}.
Pay attention to the innovation process $\II_{k}$, its covariance $S_{k}$
and gain $K_{k}$, which we will build upon. Our notations are summarized in Table~\ref{tab:pkl:Definitions-and-Notations} in the Appendix. 
Note that since the Kalman filter minimizes the MSE at each timestep, it also minimizes \eqref{eq::cost_C:def} regardless of the choice of $\w_k$, but it generally fails to fulfill \eqref{eq:perfect_perception}. As we will see later, taking \eqref{eq:perfect_perception} into consideration, the choice of a specific optimization objective does affect the optimal filter's identity.

\paragraph{Temporally-inconsistent perceptual filter}
A naive way to try to improve the perceptual quality of the Kalman filter would be to require that each $\hat{x}_k$ be distributed like $x_k$ (but without constraining the joint distribution of samples). In the context of video processing, each frame generated by such a filter would look natural, but motion would not necessarily look natural. This problem can be solved using the result \eqref{eq::Gaussians:optimalT}, which gives the optimal ``temporally-inconsistent'' perceptual estimator
\begin{equation}
\hat{x}^{\ntc}_{k}=\TT^*_{k}\hat{x}_{k}^{*}+w_{k}=\TT^*_{k}\left(A_{k}\hat{x}_{k-1}^{*}+K_{k}\II_{k}\right)+w_{k},
\label{eq::pkl:ntc-estimator}
\end{equation}
with $\TT^*_{k}$ and $w_k$ from \eqref{eq::Gaussians:optimalT}. These quantities depend only on the covariances of $x_k,\hat x^*_k$, which can be computed recursively using the Kalman method. The MSE of this estimator can be computed in closed-form \citep{freirich2021theory} as 
\begin{align}
\EEb{\|x_k - \hat x_k\|^2 } = 
d^*_k + \tr{\Sigma_{x_k} + \Sigma_{\hat x^*_k} - 2\left(\Sigma_{x_k}^{\frac{1}{2}}\Sigma_{\hat x^*_k}\Sigma_{x_k}^{\frac{1}{2}}\right)^{\frac{1}{2}} }, 
\end{align}
where $d^*_k$ is the MSE of the Kalman filter, which can also be computed recursively.

\paragraph{Our setting (with the perceptual quality constraint)}

Going back to our setting, 
one may readily recognize that perceptually reconstructing the signal $X_{0}^{T}$ from the full measurement sequence $Y_{0}^{T}$ is also a special case of the Gaussian perceptual restoration problem discussed in 
Section~\ref{sec::prelim:DP}, only applied to the entire sequence of states and measurements. Generally, this estimate already achieves a higher MSE than the estimate that minimizes the MSE without the perceptual constraint. 
However, in our setting we have the additional causality constraint \eqref{eq:causality}.
Requiring both constraints \eqref{eq:perfect_perception} and \eqref{eq:causality} might incur an additional cost, as illustrated by the following example, where applying each one of them does not restrict the optimal solution, but together they result in a higher MSE.
\begin{example}
Let $T=1$ and consider the process $(x_0,x_1)=(q_0,q_0)$, where $q_0 \sim \mathcal{N}(0,1)$, with observations $(y_0,y_1) = (0,x_1)$.
Assume we want to minimize the error at time $k=1$ (namely $(\alpha_0,\alpha_1)=(0,1)$ in \eqref{eq::cost_C:def}). Then, considering only the causality constraint \eqref{eq:causality}, the estimator $(\hat{x}_0,\hat{x}_1)=(y_0,y_1)$ is optimal. Indeed, it is causal and it achieves zero MSE. Similarly, considering only the perceptual quality constraint \eqref{eq:perfect_perception}, the estimator $(\hat{x}_0,\hat{x}_1)=(y_1,y_1)$ is optimal. Indeed, it is distributed like $(x_0,x_1)$ and it also achieves zero MSE. 
However, when demanding both conditions, $\hat{x}_0$ must be based on no information (as $y_0=0$), and it must be drawn from the prior distribution $\mathcal{N}(0,1)$ in order to be distributed like $x_0$. Furthermore, to obey \eqref{eq:perfect_perception}, we must have $\hat{x}_1=\hat{x}_0$. Therefore, the optimal estimator in this case is $(\hat{x}_0,\hat{x}_1)=(\tilde{q}_0,\tilde{q}_0)$, where $\tilde{q}_0 \sim \mathcal{N}(0,1)$ is independent of $q_0$. The MSE achieved by this estimator is $2$.
\end{example}

\section{Perfect perceptual-quality filters}

The perceptual constraint \eqref{eq:perfect_perception} dictates that the estimator must be of the form
\begin{equation}
\hat{x}_{k}=A_{k}\hat{x}_{k-1}+J_{k},\quad \hat{x}_0 = J_0,\label{eq:xperc::form}
\end{equation}
where $J_{k}=\hat{x}_{k}-A_{k}\hat{x}_{k-1}$ is distributed as $\mathcal{N}(0,Q_{k})$ and is independent
of $\hat{X}_{0}^{k-1}$. 
Note the similarity between \eqref{eq:xperc::form} and the MSE-optimal state \eqref{eq::kalman:recurrence}, in which $J_k=K_k\II_k$. Here, however, this choice is not valid due to the constraint on the output distribution. 
In terms of temporal consistency, an estimator of the form \eqref{eq:xperc::form} 
guarantees that previously presented features obey the natural dynamics of the domain, while newly generated estimates do not contradict the previous ones.
In order
 to maintain causality \eqref{eq:causality}, $J_k$ must be of the form
\begin{equation}
J_{k}\sim p_{J_{k}}(\cdot|y_{0},\ldots,y_{k},\hat{x}_{0},\ldots,\hat{x}_{k-1}),
\label{eq::pkl:Jk is causal}
\end{equation}
\textit{i.e.}, $J_k$ is independent of future measurements $Y_{k+1}^T$ given $Y_0^k$. As a consequence, $J_k$ is uncorrelated with $\II_{k+n}$ for all $n\geq1$. 

We now discuss linear estimators, where $Y_0^T$ and $J_0^T$ (hence $\hat{X}_0^T$) are jointly Gaussian. Our first result is as follows (see proof in App.~\ref{app:optimality_lin}). 
\begin{theorem}\label{thm:generalFilter}
Under the cost \eqref{eq::cost_C:def}, there exists an optimal linear estimator of the form
\begin{equation}
    J_k = \pi_k \II_k + \phi_k \UII_k +w_k, \label{eq::pkl::Jk:innovatonUpsilon}
\end{equation}
$\pi_k\in \R^{n_x \times n_y}$ and $\phi_k\in \R^{n_x \times (kn_y)}$ are the filter's coefficients, $w_k$ is an independent Gaussian noise with covariance
    $\Sigma_{w_k} = Q_k - \pi_k S_k \pi_k^\T - \phi_k \Sigma_{\UII_k} \phi_k^\T \succeq 0$, $v_k$ is the process of \emph{unutilized information}
\begin{equation}
    \UII_k \triangleq \II_0^{k-1} - \EE\left[\II_0^{k-1}\middle|\hat{X}_{0}^{k-1} \right], \, \UII_0=0.
    \label{eq::pkl:uii_k}
\end{equation}
\end{theorem}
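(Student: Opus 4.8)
The plan is to characterize \emph{all} linear causal estimators that satisfy the perceptual constraint, show that each of them already has the stated form, and then obtain existence of a minimizer from a routine compactness argument. Working in the jointly-Gaussian regime, the causality requirement \eqref{eq::pkl:Jk is causal} forces $J_k = \EE[J_k\mid Y_0^k,\hat X_0^{k-1}] + w_k$, where the conditional mean is a linear function of $(Y_0^k,\hat X_0^{k-1})$ and $w_k$ is independent of $(Y_0^k,\hat X_0^{k-1})$. Since the innovations $\II_0^k$ are an invertible linear transformation of $Y_0^k$, I would write this most general causal form as
\begin{equation}
J_k = \pi_k \II_k + \Pi_{<k}\,\II_0^{k-1} + \Phi\,\hat X_0^{k-1} + w_k,
\end{equation}
for coefficient matrices $\pi_k,\Pi_{<k},\Phi$.

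The key step is to impose the two requirements coming from \eqref{eq:xperc::form}: that $J_k\sim\mathcal N(0,Q_k)$ and that $J_k$ be independent of $\hat X_0^{k-1}$, which in the Gaussian case is equivalent to $\mathrm{Cov}(J_k,\hat X_0^{k-1})=0$. Here I would exploit the orthogonality of the innovation: $\II_k$ is uncorrelated with $Y_0^{k-1}$ and with the past filter noises $w_0^{k-1}$, hence with $\hat X_0^{k-1}$ entirely, so the $\pi_k\II_k$ term is automatically admissible and free. The remaining condition $\Pi_{<k}\,\mathrm{Cov}(\II_0^{k-1},\hat X_0^{k-1}) + \Phi\,\Sigma_{\hat X_0^{k-1}} = 0$ then pins down $\Phi = -\Pi_{<k}\,\mathrm{Cov}(\II_0^{k-1},\hat X_0^{k-1})\,\Sigma_{\hat X_0^{k-1}}^{\dagger}$ (any component of $\Phi$ on $\Ker \Sigma_{\hat X_0^{k-1}}$ multiplies a degenerate direction and may be set to zero). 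Substituting back, the bracket $\II_0^{k-1} - \mathrm{Cov}(\II_0^{k-1},\hat X_0^{k-1})\Sigma_{\hat X_0^{k-1}}^{\dagger}\hat X_0^{k-1}$ is exactly $\II_0^{k-1} - \EE[\II_0^{k-1}\mid \hat X_0^{k-1}] = \UII_k$, so with $\phi_k \triangleq \Pi_{<k}$ the estimator collapses to $J_k = \pi_k\II_k + \phi_k\UII_k + w_k$, which is \eqref{eq::pkl::Jk:innovatonUpsilon}.

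It then remains to read off the covariance. I would verify that $\II_k$, $\UII_k$ and $w_k$ are mutually uncorrelated — $\UII_k$ lies in the span of $\{\II_0^{k-1},\hat X_0^{k-1}\}$ and is thus orthogonal both to $\II_k$ (whiteness together with independence of the past noise) and to the fresh noise $w_k$ — so that $Q_k = \mathrm{Cov}(J_k) = \pi_k S_k\pi_k^\T + \phi_k\Sigma_{\UII_k}\phi_k^\T + \Sigma_{w_k}$, yielding $\Sigma_{w_k} = Q_k - \pi_k S_k\pi_k^\T - \phi_k\Sigma_{\UII_k}\phi_k^\T \succeq 0$ as claimed. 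For existence I would note that the cost \eqref{eq::cost_C:def} is continuous in the parameters $(\pi_k,\phi_k)_{k=0}^T$, while the constraints $\Sigma_{w_k}\succeq0$ confine $(\pi_k,\phi_k)$ (on the ranges of $S_k,\Sigma_{\UII_k}$, the only directions affecting $J_k$) to a compact set, so a minimizer is attained.

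The main obstacle I anticipate is keeping the argument honest about the fact that $\UII_k$ and $\Sigma_{\UII_k}$ are not fixed objects but depend on the filter's \emph{past} coefficients through $\hat X_0^{k-1}$; the representation above holds for any fixed choice of the past filter, yet the joint optimization over all time steps is coupled, so the compactness and continuity claims underpinning existence must be stated for the whole parameter vector $(\pi_k,\phi_k)_{k=0}^T$ rather than one step at a time. Care is likewise needed with the possible singularity of $\Sigma_{\hat X_0^{k-1}}$, which is why the Moore--Penrose inverse appears and why the coefficient components along degenerate directions must be argued to be immaterial.
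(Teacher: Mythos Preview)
Your derivation of the form \eqref{eq::pkl::Jk:innovatonUpsilon} is correct and in fact more direct than the paper's: you impose the orthogonality $\mathrm{Cov}(J_k,\hat X_0^{k-1})=0$ on the generic causal linear ansatz and immediately recognize the resulting combination as $\UII_k$. The paper instead writes $\varepsilon_k=\EE[J_k\mid \II_0^T,J_0^{k-1}]$, expands it linearly, and then uses $\EE[J_k\mid J_0^{k-1}]=0$ to eliminate the $J_0^{k-1}$ terms; the algebra lands in the same place.

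There are, however, two points where the paper's route differs materially. First, the paper proves a stronger statement than you do: its Appendix~B (titled ``Optimality of linear filters'') shows that the linear optimum is optimal among \emph{all} feasible estimators, not only linear ones. The mechanism is a preliminary lemma (Theorem~\ref{thm::pkl:causal jk independent of innovation}) establishing that the causality condition forces $\EE[J_k\II_{k+n}^\T]=0$ for every $n\ge1$ even when $J_k$ is non-Gaussian; since the cost \eqref{eq::cost_C:def} depends on $(J_0^T,\II_0^T)$ only through second moments, the jointly-Gaussian process with the same cross-covariances is again feasible and attains the same cost. You sidestep this entirely by assuming joint Gaussianity at the outset, which is fine given the sentence preceding the theorem in the main text, but it is worth knowing what the full appendix argument buys. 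Second, for existence the paper parametrizes the feasible set not by $(\pi_k,\phi_k)_{k=0}^T$ but by the full covariance $\Sigma_{(J_0^T,\II_0^T)}$, which lies in the intersection of the PSD cone with the affine constraints ``block-diagonal equals $\mathrm{diag}(Q_k),\mathrm{diag}(S_k)$'' and ``cross-block is lower block-triangular''. That set is compact and the cost is linear in it, so a minimizer exists without ever confronting the dependence of $\Sigma_{\UII_k}$ on past coefficients---precisely the obstacle you flag in your last paragraph.
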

Note that with this form for $J_k$, the state $\hat{x}_k$ is indeed a function of the observations $Y_0^k$ and the previous states $\hat{X}_0^{k-1}$. Intuitively, $\UII_k$ is the part of the information in the observations, which has no correlation with the information used to construct the past estimates  $\hat{X}_0^{k-1}$. Thus, from the standpoint of the filter's output, this information has not yet been introduced. As opposed to the innovation $\II_k$, the process $\UII_k$ is not white, and it is affected by the choices of $\pi_t$ and $\phi_t$ up to time $k-1$. However, $\II_k$ is always independent of $\UII_k$, since $\II_k$ is independent of $\II_0^{k-1}$ and $J_0^{k-1}$, which constitute $\UII_k$.

 \begin{wrapfigure}{r}{0.62\linewidth}
\centering \includegraphics[viewport=65bp 120bp 890bp 470bp,clip,width=1.0\linewidth]{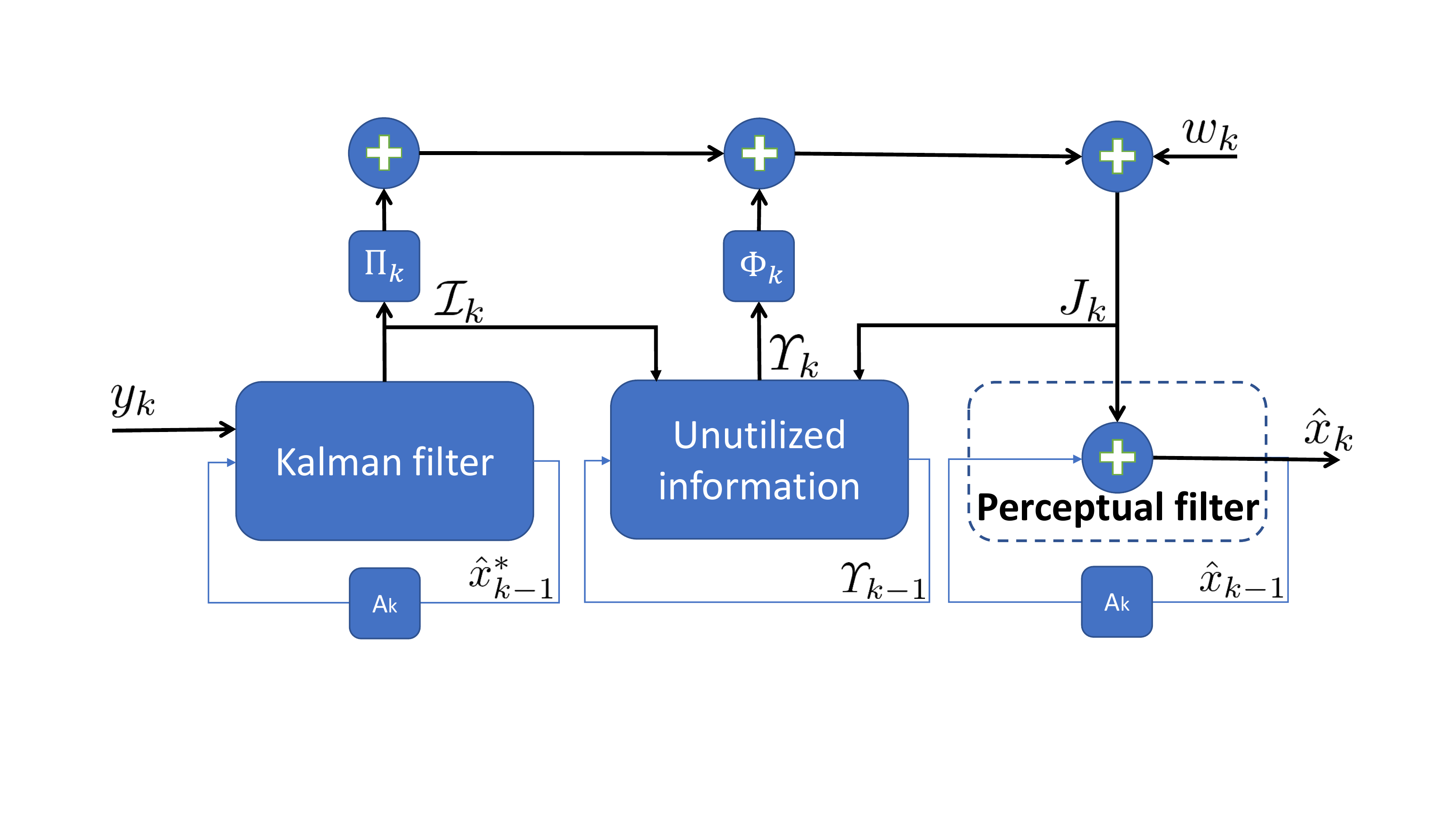}
\caption[Linear perceptual filtering.]{\textbf{Recursive perceptual filtering (Sec.~\ref{subsec:Error-analysis}).} \label{fig:filter sketch} The state estimator $\hat{x}_k$ consists of the previous state $\hat{x}_{k-1}$, and the innovation and unutilized information processes. The unutilized information state $\varUpsilon_{k+1}$ is updated using the previously unutilized information $\varUpsilon_k$ 
and the newly-arriving information $\II_k$. The currently utilized information, arriving from $J_k$, is then subtracted from $\varUpsilon_{k+1}$. 
}
\end{wrapfigure}

The filter of Theorem~\ref{thm:generalFilter} is causal but not recursive. Specifically, although it is possible to obtain $\UII_{k+1} , \Sigma_{\UII_{k+1}}$ given the coefficients $\{\pi_t,\phi_t\}_{t=0}^k$ (see App.~\ref{app:filter_deriv}), the dimension of $v_k$ grows with time (it is $k n_y$), thus increasing the cost of computing $\phi_k v_k$. Furthermore, determining the coefficients $\{\pi_k,\phi_k\}_{k=0}^T$ that minimize the objective \eqref{eq::cost_C:def} (which is done a-priori in an offline manner) requires solving a large optimization problem, as the total size of all coefficients is $O(n_xn_yT^2)$. To perform an efficient optimization over these coefficients, we next suggest two simplified versions of this form, which may generally be sub-optimal but easier to optimize.

\subsection{Recursive-form filters}
\label{subsec:Error-analysis}

Recall that the optimal Kalman state $\hat{x}_{k}^{*}$ achieves the minimal possible MSE, which is given by 
$d_{k}^{*}=\mathbb{E}\left[\|\hat{x}_{k}^{*}-x_{k}\|^{2}\right]=\tr{ P_{k|k} }$,
where $P_{k|k}$ is the error covariance, given explicitly in Alg.~\ref{alg:Kalman-Filter}. By the orthogonality principle, any other estimator $\hat{x}_{k}$ based
on the observations $Y_0^k$, satisfies
\begin{equation}\label{eq:dStarPlusMSE}
\EEb{\|x_{k}-\hat{x}_{k}\|^{2}}
=d_{k}^{*}+\EEb{\|\hat{x}_{k}-\hat{x}_{k}^{*}\|^{2}}.
\end{equation}
Now, consider an estimator $\hat{x}_{k}$ of the form (\ref{eq:xperc::form}),
and let 
$D_{k}\triangleq \EE[ (\hat{x}_{k}^{*}-\hat{x}_{k})(\hat{x}_{k}^{*}-\hat{x}_{k})^{\T}]$.
Since we choose $J_{k}$ to be normally distributed and independent
of $\hat{x}_{k-1}$, it is easy to see that $D_{k}$ obeys the \textit{Lyapunov
difference equation}
\begin{flalign}
D_{k}  \!= \! A_{k}D_{k-1}A_{k}^{\T}\!+\!K_{k}S_{k}K_{k}^{\T}\!+\!Q_{k} \!-\!\EE[J_{k}\II_{k}^{\T}]K_{k}^{\T}\!-\!K_{k}\EE[\II_{k}J_{k}^{\T}]
 \!-\!A_{k}\EE[\hat{x}_{k-1}^{*}J_{k}^{\T}]\!-\!\EE[J_{k}\hat{x}_{k-1}^{*\T}]A_{k}^{\T}.
 \label{eq:D_k::Lyapunov_general}
\end{flalign}
As we see, the choice of $J_{k}$ affects current (and future)
errors by its correlation with the two independent
components,
$
(A_{k}\hat{x}_{k-1}^{*},K_{k}\II_{k})
$. 
Let us now consider filters of the form
\begin{equation} \label{eq::pkl:J_k::fullJk}
J_{k}=\varPhi_{k}A_{k}\Upsilon_{k}+\Pi_{k}K_{k}\II_{k}+w_{k},\quad w_{k}\sim\mathcal{N}\left(0,\Sigma_{w_{k}}\right) ,
\end{equation}
where here, by slight abuse of notation, we define the process of unutilized information as
\begin{equation} \label{eq::pkl:ups_k}
\Upsilon_{k}\triangleq\hat{x}_{k-1}^{*}-\mathbb{{E}}\left[\hat{x}_{k-1}^{*}|\hat{x}_{0},\ldots,\hat{x}_{k-1}\right],
\quad \Upsilon_0=0.
\end{equation}
The matrices $\Pi_{k},\varPhi_{k}\in \R^{n_x \times n_x}$ are coefficients such that 
\begin{equation} \label{eq::pkl:Sigma_w_full}
\Sigma_{w_{k}}=Q_{k}-\varPhi_{k}A_{k}\Sigma_{\Upsilon_{k}}A_{k}^{\top}\varPhi_{k}^{\top}-\Pi_{k}M_{k}\Pi_{k}^{\top}\succeq0,
\end{equation}
where we denote the Kalman update covariance by
$M_{k}\triangleq K_{k}S_{k}K_{k}^{\T}$. This guarantees that $J_k\mathcal\sim {N}(0,Q_{k})$, as desired.
Importantly, as opposed to $v_k$, the dimension of $\Upsilon_{k}$ is \emph{fixed}, namely it does not grow with time $k$. 
Note that since $\hat{x}_{k-1}^*$ is a linear combination of $(\II_0,\ldots,\II_{k-1})$, \eqref{eq::pkl:J_k::fullJk} is a special choice of $\pi_k$ and $\phi_k$ in \eqref{eq::pkl::Jk:innovatonUpsilon} where coefficient size does not grow with $k$ as well. $\varUpsilon_k$ and its covariance $\Sigma_{\varUpsilon_{k}}$ are given via a recursive form, illustrated in Fig.~\ref{fig:filter sketch} (and derived in App.~\ref{app:filter_deriv}):
\begin{equation}
\label{eq:pkf::recursiveUpsSigma}
    \varUpsilon_{k+1} = A_k\varUpsilon_{k} +K_k \II_k -  \varPsi_k Q_k^\dagger J_k, 
\,
\Sigma_{\varUpsilon_{k+1}} 
= A_k \Sigma_{\varUpsilon_k} A_k^\top +M_k -\varPsi_k      Q_k^\dagger \varPsi_k^\T,
\end{equation}
\begin{equation}
\label{eq:pkf::recursivePsi}
\varPsi_k = M_k \Pi_k^\top +A_k \Sigma_{\varUpsilon_k} A_k^\T \varPhi_k^\top. 
\end{equation} 

Note again that unlike the innovation $\II_k$, $\Upsilon_{k}$ might not be a white process, but we have
that $\Upsilon_{k}$ is independent of the filter's output $\hat{X}_{0}^{k-1}$ and $\II_k$. Equation \eqref{eq:D_k::Lyapunov_general} now takes the form
\begin{flalign} \label{eq:D_k::Lyapunov_fullJk}
D_{k} =&A_{k}D_{k-1}A_{k}^{\top}+Q_{k}+M_{k} -\Pi_{k}M_{k}-M_{k}\Pi_{k}^{\top}
 -A_{k}\Sigma_{\Upsilon_{k}}A_{k}^{\top}\varPhi_{k}^{\top}-\varPhi_{k}A_{k}\Sigma_{\Upsilon_{k}}A_{k}^{\top},
\end{flalign}
where we observe that $\Sigma_{\Upsilon_{k}}$ may depend on the choice of $\left\{ \Pi_{t},\varPhi_{t}\right\} _{t=0}^{k-1}$.  In order to retrieve an optimal filter, one should
perform optimization of the desired objective over $\left\{ \Pi_{t},\varPhi_{t}\right\} _{t=0}^{T}$,
under the constraints given in \eqref{eq::pkl:Sigma_w_full}. From \eqref{eq:dStarPlusMSE}, minimizing the cost \eqref{eq::cost_C:def} boils down to minimizing $\sum_{k=0}^T\alpha_k \tr{D_k}$ subject to the constraints in \eqref{eq::pkl:Sigma_w_full}, which is an optimization problem over only $O(n_x^2T)$ parameters. 
\subsection{An Exactly solvable reduction: Perceptual Kalman Filter}
\label{sec::Perceptual Kalman filters}

We now consider an additional reduction, which allows to obtain a closed form solution for the filter's coefficients. Specifically, a reduced-size filter can be obtained by using the form \eqref{eq:xperc::form} and \eqref{eq::pkl:J_k::fullJk}
with the sub-optimal choice $\varPhi_k\equiv0$, namely
\begin{equation}
J_{k}=\Pi_{k}K_{k}\II_{k}+w_{k}.\label{eq:PKF:Jkform}
\end{equation}
The meaning of this choice is that only newly-observed information is used for updating estimation at each stage, while non-utilized information from previous time steps is discarded. Here, $\Pi_{k}$ is a $n_{x}\times n_{x}$ coefficient matrix, and
$w_{k}\sim\mathcal{N}(0,Q_{k}-\Pi_{k}M_{k}\Pi_{k}^{\T})$
is a Gaussian noise, uncorrelated with all other states, observations
and noises in the system up to time $k$. Again, note that $\II_{k}$
is independent of the measurements up to time $k-1$, hence this choice
makes $J_{k}$ independent of $\hat{X}_{0}^{k-1}$.
These innovation-based corrections resemble the mechanism exploited in \eqref{eq::kalman:recurrence},
hence we will refer to optimal filters of the form \eqref{eq:PKF:Jkform} as $\emph{perceptual}$ Kalman filters (PKF).

Now, by a straightforward substitution, \eqref{eq:D_k::Lyapunov_general}
becomes
\begin{equation}
D_{k}=A_{k}D_{k-1}A_{k}^{\T}+Q_{k}+M_{k}-\Pi_{k}M_{k}-M_{k}\Pi_{k}^{\T},\quad k=0,\ldots, T,\label{eq:D_k::Lyapunov_PKF}
\end{equation}
where we consider $Q_{0}=P_{0}$, $M_{0}=\Sigma_{\hat{x}_{0}^{*}}$ and
$D_{-1}=0$. As before, minimizing \eqref{eq::cost_C:def} boils down to minimizing $\sum_{k=0}^T\alpha_k \tr{D_k}$, 
and in order for \eqref{eq:PKF:Jkform} to be well-defined,
we should enforce the constraints 
$Q_{k}-\Pi_{k}M_{k}\Pi_{k}^{\T}\succeq0,\,k=0,\ldots,T$.
For simplicity, we consider the time-invariant case where $A_{k}\equiv A$, so that the optimization objective becomes
\begin{equation}
\begin{cases}
\min_{\{\Pi_{k}\}_{k=0}^{T}}&\sum_{k=0}^{T}\w_{k}\tr{D_{k}}  \\
\st & 
D_k = \sum_{t=0}^k A^{k-t} \left[ \mathscr{Q}_t -\Pi_{t}M_{t}-M_{t}\Pi_{t}^{\T} \right] \left(A^\T\right)^{k-t}, \quad k=0,\ldots,T,
\\ & 
Q_{k}-\Pi_{k}M_{k}\Pi_{k}^{\T}\succeq0,\quad k=0,\ldots,T, 
\end{cases}\label{eq:Agen:traceproblem}
\end{equation}
where we denoted $\mathscr{Q}_k=Q_k+M_k$.
Substituting $D_k$, we can rewrite the objective as
\begin{align}
\sum_{k=0}^T \tr{  \sum_{t=k}^T \w_t A^{t-k}\left[ \mathscr{Q}_k -2\Pi_{k}M_{k} \right] \left(A^\T\right)^{t-k}}. 
\end{align}
As we can see, optimization over a particular coefficient $\Pi_k$ does not affect other summands of the external sum. Therefore, each $\Pi_k$ can be optimized separately. Minimizing the cost at the $k$-th step is equivalent to
\begin{equation}
\max_{\Pi_k}\tr{\Pi_{k}M_{k}\sum_{t=k}^{T}\w_{t}(A^{t-k})^{\T}A^{t-k}} \quad \mathrm{s.t.}\quad Q_{k}-\Pi_{k}M_{k}\Pi_{k}^{\T}\succeq0.
\label{eq:stepk:traceproblem}
\end{equation}
Let us denote 
$B_{k}\triangleq\sum_{t=k}^{T}\w_{t}(A^{t-k})^{\T}A^{t-k}=\w_kI+A^\T B_{k+1} A$. As we now show, this optimization problem possesses a closed-form solution under a mild assumption (which is satisfied \text{e.g.}~when $Q_{k}\succ0$). The proof is given in Appendix~\ref{app::extremalSDP}.

\begin{theorem}
\label{Thm:pkf:optimal_perc_gain} Assume that $\im{B_k M_{k} B_k}\subseteq \im{Q_k}$ for every $k$. Let $M_{B}\triangleq B_kM_kB_k$ and denote $\Omega=\left\{\Pi_k: Q_{k}-\Pi_{k}M_{k}\Pi_{k}^{\T}\succeq0\right\}$. 
Then the optimal value in \eqref{eq:stepk:traceproblem} is given by
\begin{equation}
\max_{\Pi_k\in\Omega}\tr{ \Pi_k M_k B_k} =\tr{ \left(M_{B}^{1/2}Q_kM_{B}^{1/2}\right)^{\hlf}} ,
\end{equation}
and is achieved by the optimal coefficient (which is generally not unique)
\begin{equation}
\Pi_k^{*}=Q_kM_{B}^{1/2}\left(M_{B}^{1/2}Q_kM_{B}^{1/2}\right)^{1/2\dagger}M_{B}^{\dagger1/2}B_k.\label{eq:stepk:optimalPercepGain}
\end{equation}
\end{theorem}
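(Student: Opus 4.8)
The plan is to strip the problem \eqref{eq:stepk:traceproblem} of its map structure by a change of variables and then recognize the reduced problem as a nuclear-norm duality. First I would substitute $G\triangleq\Pi_k M_k^{\hlf}$ and $W\triangleq M_k^{\hlf}B_k$, and observe that both the objective and the constraint see $\Pi_k$ only through $G$: indeed $\Pi_k M_k B_k = GW$ and $\Pi_k M_k\Pi_k^{\T}=GG^{\T}$, so \eqref{eq:stepk:traceproblem} is equivalent to maximizing $\tr{GW}$ subject to $GG^{\T}\preceq Q_k$ (a valid gain is then recovered as $\Pi_k=G M_k^{\hlf\dagger}$, since the optimal $G$ will be seen to have its rows in $\im{M_k}$). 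A short argument shows every feasible $G$ has $\im{G}\subseteq\im{Q_k}$: for $u\in\Ker Q_k$, $u^{\T}GG^{\T}u\le u^{\T}Q_k u=0$ forces $G^{\T}u=0$. This permits the factorization $G=Q_k^{\hlf}\tilde G$ with $\tilde G=Q_k^{\hlf\dagger}G$, under which the constraint reads $\tilde G\tilde G^{\T}\preceq I$, i.e.\ $\tilde G$ is a contraction.

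Next I would rewrite the objective as $\tr{GW}=\tr{\tilde G\,WQ_k^{\hlf}}$ and invoke the standard duality $\max_{\|\tilde G\|_{\mathrm{op}}\le1}\tr{\tilde G N}=\|N\|_{*}=\tr{(N^{\T}N)^{\hlf}}$, whose ``$\le$'' direction is von Neumann's trace inequality ($\tr{\tilde G N}\le\sum_i\sigma_i(\tilde G)\sigma_i(N)\le\sum_i\sigma_i(N)$) and whose attainment is given by the polar factor of $N$. Taking $N=WQ_k^{\hlf}$ gives $N^{\T}N=Q_k^{\hlf}W^{\T}WQ_k^{\hlf}$ with $W^{\T}W=B_kM_kB_k=M_B$, so the optimal value is $\tr{(Q_k^{\hlf}M_BQ_k^{\hlf})^{\hlf}}$. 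To land on the stated form I would finally note that $Q_k^{\hlf}M_BQ_k^{\hlf}$ and $M_B^{\hlf}Q_kM_B^{\hlf}$ share the nonzero spectrum of $Q_kM_B$ (the $XY$/$YX$ eigenvalue identity), hence their square roots have equal trace, yielding $\tr{(M_B^{\hlf}Q_kM_B^{\hlf})^{\hlf}}$.

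For the optimizer, the maximizing contraction is $\tilde G^{*}=(N^{\T}N)^{\hlf\dagger}N^{\T}$, and pushing it back through $G^{*}=Q_k^{\hlf}\tilde G^{*}$ and $\Pi_k^{*}=G^{*}M_k^{\hlf\dagger}$ produces one optimal gain. To reconcile this with the symmetric closed form \eqref{eq:stepk:optimalPercepGain} I would use the same square-root identity relating $Q_k^{\hlf}(Q_k^{\hlf}M_BQ_k^{\hlf})^{\hlf\dagger}Q_k^{\hlf}$ and $M_B^{\hlf}(M_B^{\hlf}Q_kM_B^{\hlf})^{\hlf\dagger}M_B^{\hlf}$ that underlies the two equivalent expressions for the Gaussian transport map $\TT^{*}$ in \eqref{eq::Gaussians:optimalT}. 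The cleaner route, however, is direct verification of the claimed $\Pi_k^{*}$: using $B_kM_kB_k=M_B$ and the pseudo-inverse identities $M_B^{\dagger\hlf}M_B=M_B^{\hlf}$, one checks $\Pi_k^{*}M_k\Pi_k^{*\T}=Q_k^{\hlf}PQ_k^{\hlf}\preceq Q_k$ for an orthogonal projection $P$ (so it is feasible) and $\tr{\Pi_k^{*}M_kB_k}=\tr{(M_B^{\hlf}Q_kM_B^{\hlf})^{\hlf}}$, which meets the upper bound; non-uniqueness is immediate from the freedom in the pseudo-inverses.

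The hard part will be the rank-deficient bookkeeping: $Q_k$, $M_k$, and $M_B$ may all be singular, so each factorization, the contraction reformulation, and the equivalence of the two closed forms must be tracked through the correct images and kernels. This is exactly where the hypothesis $\im{B_kM_kB_k}\subseteq\im{Q_k}$ enters, ensuring the relevant pseudo-inverses compose as genuine projections and that the clean value and the explicit optimizer are attained by a \emph{feasible} $\Pi_k$. Verifying nuclear-norm duality with attainment in this degenerate regime is the remaining technical point, but it is classical.
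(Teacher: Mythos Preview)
Your approach is correct and genuinely different from the paper's. The paper recasts the constraint via Schur complements as the SDP $\left[\begin{smallmatrix}Q&\Pi\\\Pi^{\T}&M\end{smallmatrix}\right]\succeq0$ and then, following \citet{olkin1982distance}, constructs a bespoke dual over pairs $(S,S^{-})$ of reflexive generalized inverses, proves weak duality by the quadratic-form trick $[R^{\T},-G^{\T}B]\,X\,[R;-BG]\succeq0$ (Lemma~\ref{lem:duality}), and closes the gap by exhibiting matching primal and dual optima; the image hypothesis $\im{M_B}\subseteq\im{Q_k}$ enters in that dual verification (Lemmas~\ref{lem:images_eq}--\ref{lem:images_ineq}). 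You instead linearize through $G=\Pi_kM_k^{\hlf}$, factor $G=Q_k^{\hlf}\tilde G$ to reduce the constraint to a pure contraction bound, and invoke the textbook nuclear-norm duality $\max_{\|\tilde G\|_{\mathrm{op}}\le1}\tr{\tilde GN}=\|N\|_{*}$, reading off the value from $N^{\T}N=Q_k^{\hlf}M_BQ_k^{\hlf}$ and the $A^{\T}A$/$AA^{\T}$ spectral identity. This is shorter and uses only standard matrix-analysis tools; what the paper's route buys is an explicit link to the Gaussian optimal-transport extremal problem that motivates the whole work, plus a dual certificate. One small remark: your final paragraph slightly overstates where the hypothesis is needed in \emph{your} argument---the feasibility check $\Pi_k^{*}M_k\Pi_k^{*\T}=Q_k^{\hlf}PQ_k^{\hlf}$ with $P=A(A^{\T}A)^{\dagger}A^{\T}$, $A=Q_k^{\hlf}M_B^{\hlf}$, is an orthogonal projection unconditionally, so your direct verification of \eqref{eq:stepk:optimalPercepGain} and the value formula actually go through without it; in the paper the hypothesis is consumed by the dual half, which you bypass.
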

For a closed form solution under the alternative assumption that
$\im{M_{k}}\subseteq \im{Q_k}$, as well as a discussion of stationary filters, please see the Appendix. The Perceptual Kalman filter (PKF) obtained from Thm.~\ref{Thm:pkf:optimal_perc_gain} is summarized in Alg.~\ref{alg:Perceptual-Kalman-Filter}. 

\begin{algorithm}[t]
\caption{Perceptual Kalman Filter (PKF) \label{alg:Perceptual-Kalman-Filter}}
\begin{algorithmic}

\STATE {\bfseries Input}: Kalman Filter outputs $\{\II_{k},K_{k},S_{k}\}_{k=0}^{T}$,
weights $\{\w_{k}\}_{k=0}^{T}$,  matrices $A,P_0,\{Q_k\}_{k=1}^{T}$ satisfying $\im{B_k M_{k} B_k}\subseteq \im{Q_k}$ for all $k=0\ldots,T$. 

{\bfseries initialize}: $\hat{x}_{0}=\Pi_{0}K_0y_{0}+w_{0}$, $w_{0}\sim \mathcal{N}\left(0,P_{0}-\Pi_{0}M_{0}\Pi_{0}^{\T}\right)$, where $\Pi_{0}$ is given by \eqref{eq:stepk:optimalPercepGain} and $B_{0}=\sum_{t=0}^{T}\w_{t}(A^{t})^{\T}A^{t}$. 

\FOR{$k=1$ {\bfseries to} $T$ }

\STATE \textbf{calculate}: $M_{k}=K_{k}S_{k}K_{k}^{\T}$, $B_{k}=\sum_{t=k}^{T}\w_{t}(A^{t-k})^{\T}A^{t-k}$,
$M_{B}=B_{k}M_{k}B_{k}$.

\STATE \textbf{compute optimal gain} \eqref{eq:stepk:optimalPercepGain}: 
$\Pi_{k}=Q_{k}M_{B}^{\hlf}\left(M_{B}^{\hlf}Q_{k}M_{B}^{\hlf}\right)^{\hlf\dagger}M_{B}^{\dagger\hlf}B_{k}M_{k}M_{k}^{\dagger}
$.

\STATE \textbf{sample}: $w_{k}\sim \mathcal{N}\left(0,Q_{k}-\Pi_{k}M_{k}\Pi_{k}^{\T}\right).$

\STATE \textbf{update state}: $\hat{x}_{k}=A_{}\hat{x}_{k-1}+\Pi_{k}K_{k}\II_{k}+w_{k}$.
\ENDFOR
\end{algorithmic}
\end{algorithm}

\newpage
\section{Numerical demonstrations}
\label{sec::numericals}

\definecolor{Gray}{gray}{0.9}
\newcolumntype{g}{>{\columncolor{Gray}}c}

\begin{wraptable}{R}{0.62\textwidth}
\caption{List of demonstrated filters.}
    \label{tab::pkl:filters demonstrated}
    \centering
    \small
    \begin{tabular}{ccccc}
         & \multirow{2}{*}{\textbf{description}} & \multirow{2}{*}{\textbf{def.}}& \multicolumn{2}{c}{\textbf{perfect-perception}} 
         \\ 
         & & &per-sample&temporal
         \\
         \hline
         \rowcolor{Gray} $\hat{x}_{\kal}^*$& Kalman filter & Alg.\ref{alg:Kalman-Filter} & \xmark & \xmark 
         \\
         \multirow{2}{*}{$\hat{x}_{\ntc}$} & Per-sample quality &\multirow{2}{*}{\eqref{eq::pkl:ntc-estimator} } & \multirow{2}{*}{ \cmark }& \multirow{2}{*}{\xmark} 
         \\
         &(no temporal)& & &
         \\
         \rowcolor{Gray} {$\hat{x}_{\opt}$} & Optimized filter & {\eqref{eq::pkl:J_k::fullJk} }& {\cmark} & {\cmark} 
         \\
         \multirow{2}{*}{$\hat{x}_{\auc}$} & PKF & \multirow{2}{*}{Alg.\ref{alg:Perceptual-Kalman-Filter}}& \multirow{2}{*}{\cmark} & \multirow{2}{*}{\cmark} 
         \\
         &(total cost)& & &
         
         \\
         \rowcolor{Gray}  & PKF & &  & 
         \\\rowcolor{Gray}
         \multirow{-2}{*}{$\hat{x}_{\minT}$}&(terminal cost)& \multirow{-2}{*}{Alg.\ref{alg:Perceptual-Kalman-Filter}} & \multirow{-2}{*}{\cmark} & \multirow{-2}{*}{\cmark} 
         
         \end{tabular}
\end{wraptable}

We now revisit our main question: \textit{what is the cost of temporal consistency in online restoration?} In addition, as we have seen in Sec. \ref{sec::Perceptual Kalman filters}, the relaxation $\varPhi_k=0$, yielding the Perceptual Kalman filters, reduces the complexity of computation, possibly at the cost of higher errors. It is natural, then, to ask what is the cost of this simplification.
In the following experiments, we compare the performance of several filters; $\hat{x}^*_{\kal}$ and $\hat{x}_{\ntc}$  correspond to the Kalman filter and the temporally-inconsistent filter \eqref{eq::pkl:ntc-estimator} (which does not possess perfect-perceptual quality). The estimate 
 $\hat{x}_{\opt}$ is generated by a  perfect-perception filter obtained by numerically optimizing the coefficients in \eqref{eq::pkl:J_k::fullJk}, where the cost is the MSE at termination time, $\C_\mathrm{T}=\EE[\|\hat{x}_{T}-x_{T}\|^{2}]$. The estimates
$\hat{x}_{\auc},\hat{x}_{\minT}$ correspond to PKF outputs (Alg.~\ref{alg:Perceptual-Kalman-Filter}) minimizing the \textit{total cost} (area under curve) $\C_{\auc}=\sum_{k=0}^{T}\EE[\|\hat{x}_{k}-x_{k}\|^{2}]$ and the \textit{terminal cost} $C_T$, respectively.
The filters are summarized in Table \ref{tab::pkl:filters demonstrated}. Full details and additional experimental results are given in App.~\ref{app:numerical}.

\subsection{Harmonic oscillator}

\begin{wrapfigure}{R}{0.6\linewidth}
\centering \centering 
\includegraphics[viewport=12bp 10bp 1380bp 675bp,clip,width=1.0\linewidth]
{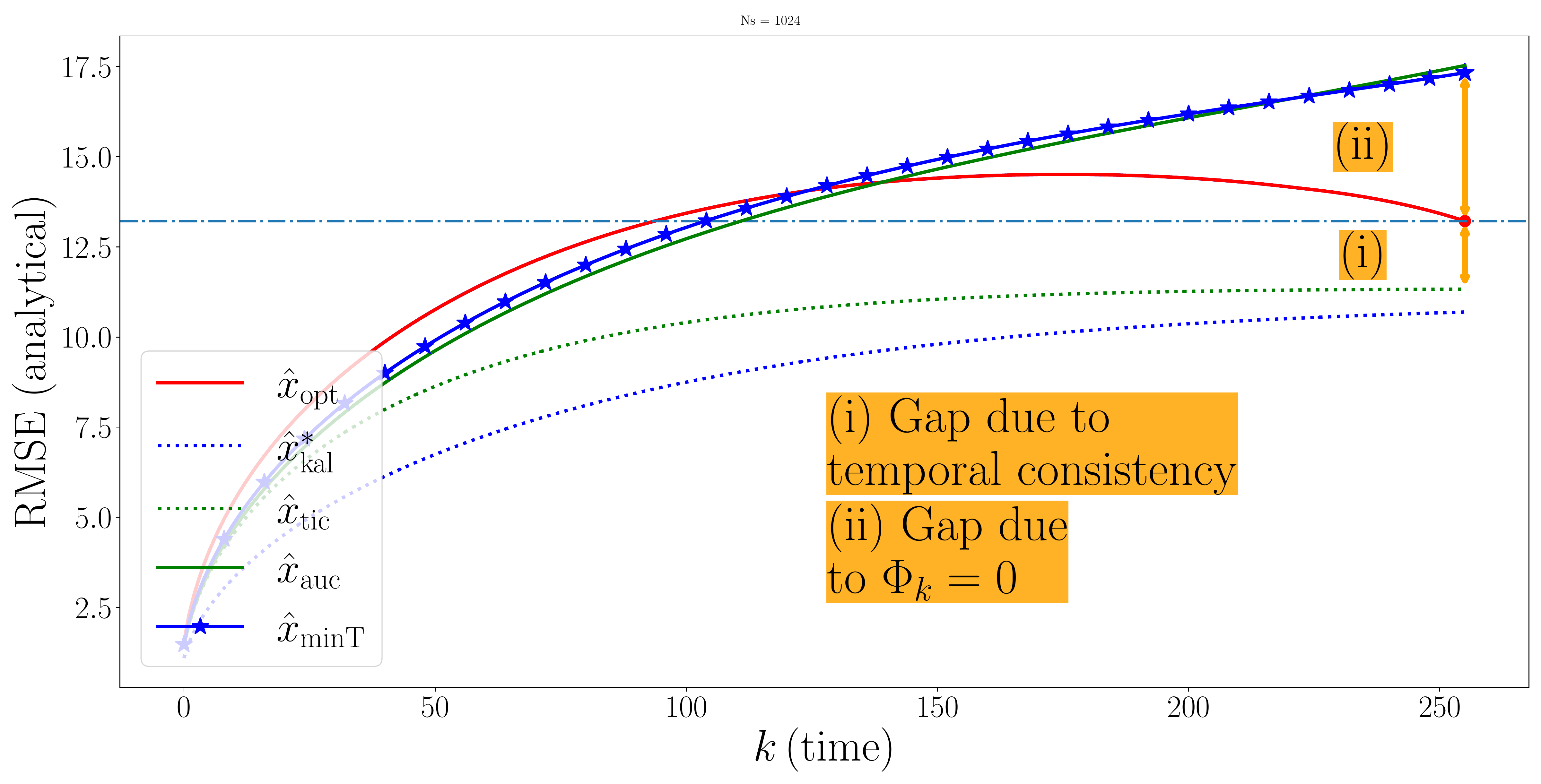}
\caption{
\textbf{MSE on Harmonic oscillator.} \label{fig::harmonic_osc:MSE_fullview}
We observe the difference in distortion between the perfect-perceptual state $\hat{x}_{\opt}$, optimized according to \eqref{eq::pkl:J_k::fullJk}, and $\hat{x}_{\ntc}$. This additional cost is due to the perceptual constraint on the joint distribution. Also note the gap between MSE of the optimized estimator and $\hat{x}_{\minT}$ which is due to the sub-optimal choice of coefficients, $\varPhi_k=0$.
}
\end{wrapfigure}

We start with a simple $2$-D example.
Specifically, we consider a harmonic oscillator, where the state $x_k \in \R^2$ corresponds to position and velocity, and the observation at time $t$ is a noisy versions of the position at time $t-\tfrac{1}{2} \Delta_t$, where $\Delta_t$ is the sampling period (see App.~\ref{app:numerical} for details). Figure~\ref{fig::harmonic_osc:MSE_fullview} shows the MSE for $\hat{x}_{\opt}$ and the sub-optimal PKF outputs $\hat{x}_{\auc},\hat{x}_{\minT}$. We observe that the PKF estimations are indeed not MSE optimal at time $T$. However, their RMSE at time $T$ is only $\sim 30\%$ higher than that of $\hat{x}_{\opt}$ and they have the advantage that they can be solved analytically and require computing only half of the coefficients ($\Pi_k$).
The estimates $\hat{x}^*_{\kal}$ and $\hat{x}_{\ntc}$ achieve lower MSE than $\hat{x}_{\opt}$, however they do not possess perfect-perceptual quality. \textit{The difference in MSE between the
filters $\hat{x}_{\opt}$ and $\hat{x}_{\ntc}$ is the cost of temporal consistency in online estimation for this setting}.

\subsection{Dynamic texture}

We now illustrate the qualitative effects of perceptual estimation in a simplified video restoration setting.
Specifically, we consider a video of a ``dynamic texture'' of waves in a lake. Such dynamic textures are accurately modeled by linear dynamics of a Gaussian latent representation \citep{doretto2003dynamic}, whose parameters we learn from a real video. Here, frames are generated from a latent $128$-dimensional state $x^{FA}_k$ which corresponds to their \textit{Factor-Analysis} (FA) decomposition (see \textit{e.g.} \citep[Sec. 12.2.4]{bishop2006pattern} for more details).  
Thus, $512 \times  512\times3$ frames in the video domain are created through an affine transformation of $x^{FA}_k$. 
Linear observations $y_k \in \R^{32\times32}$ are given in the frame (pixel) domain, by 
$16\times$ downsampling the $Y$-channel of the generated ground-truth frames, and adding white Gaussian noise.  All filtering is done in the latent domain, and then transformed to the pixel domain. MSE is also calculated in the FA domain. The exact settings can be found in App.~\ref{app:numerical}.

In the first  experiment, measurements are supplied up to frame $k=127$ and then stop (Fig. \ref{fig::pkf:river_demo:predictexp}), letting the different filters predict the next, unobserved, frames of the sequence. 
We can see that until frame $k = 127$, all filters reconstruct the reference frames well. Starting from time $k=128$, when measurements stop, the Kalman filter slowly fades into a static, blurry output which is the average frame value in this setting. This is a non-`realistic' video; Neither the individual frames nor the temporal (static) behavior are natural to the domain. 
Our perfect-perceptual filter, $\hat{x}_{\auc}$, keeps generating a `natural' video, both spatially and temporally. This makes its MSE grow faster\footnote{More visual details, including ground-truth clips and empirical error can be found in the Appendix. Please see the supplementary video for the full videos.}.

\begin{figure}
\centering \centering 
\includegraphics[viewport=12bp 300bp 1155bp 420bp,clip,width=0.85\linewidth]{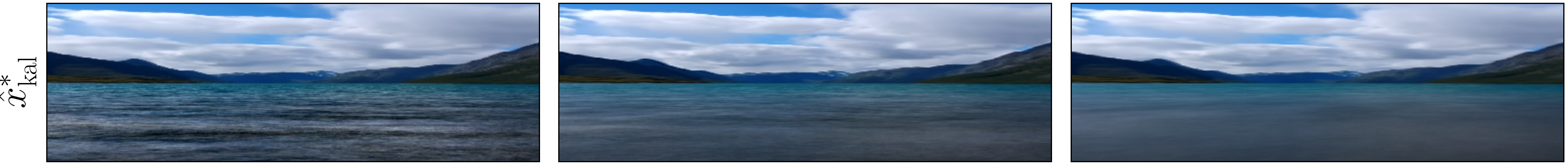}
\includegraphics[viewport=12bp 10bp 1155bp 170bp,clip,width=0.85\linewidth]{plots/river/river_wc10_ours.pdf}
\includegraphics[viewport=12bp 10bp 1155bp 350bp,clip,width=0.77\linewidth]{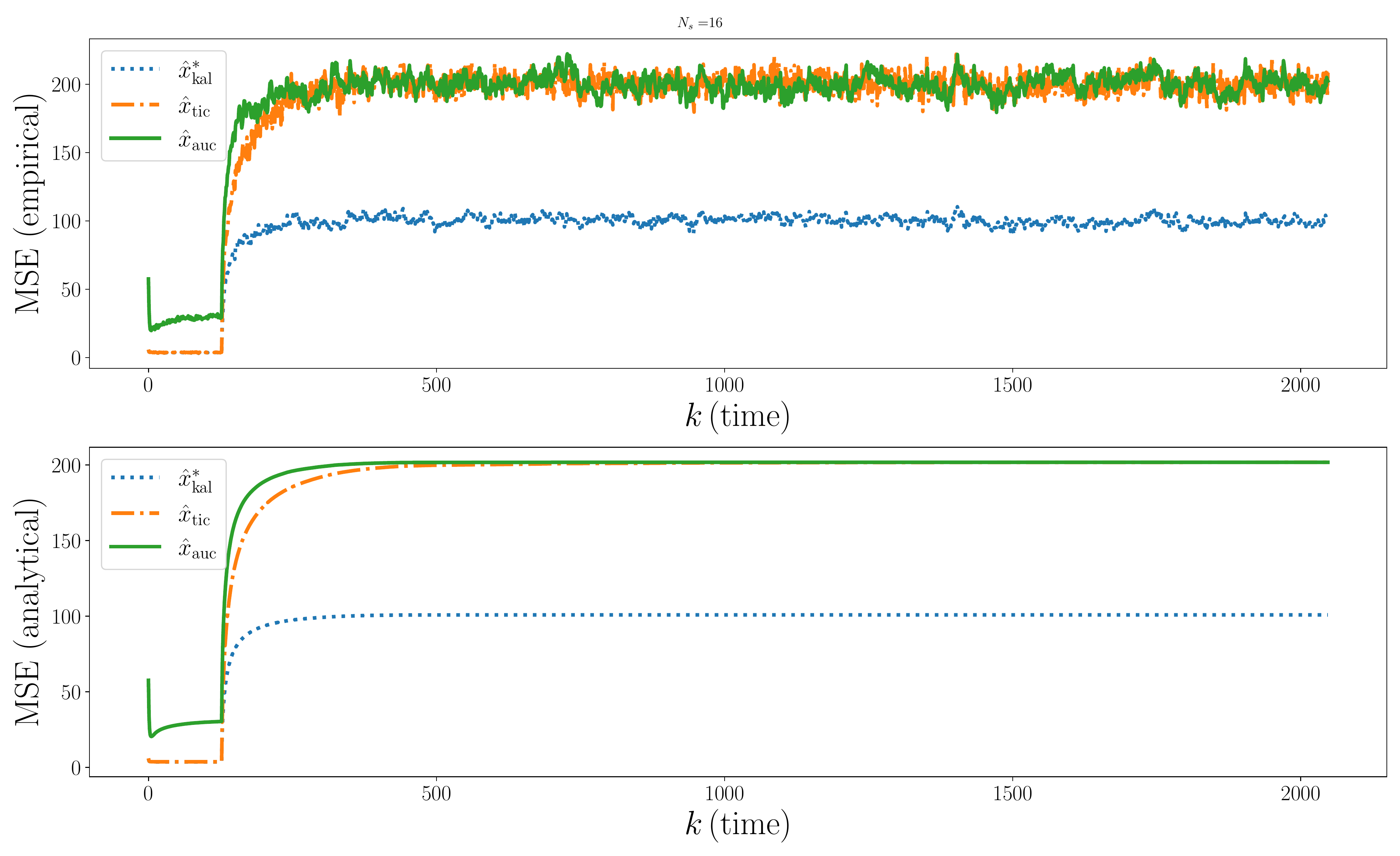}
\caption{\label{fig::pkf:river_demo:predictexp}
\textbf{Frame prediction on a dynamic texture domain.}   
In this experiment, measurements are supplied only up to frame $k=127$ and the filter's task is to predict the unobserved future frames.  Observe that $\hat{x}^*_{\kal}$ fades into a  blurred average frame, while the perceptual filter $\hat{x}_{\auc}$ generates a natural video, both spatially and temporally. This makes its MSE grow faster.
}     
\end{figure}

We now perform a second experiment, where measurements are set to zero until frame $k=512$.  At times $k > 512$ they are given again by the noisy, downsampled frames as described above.
In Fig. \ref{fig::pkf:river_demo:genexp} we present the outcomes of the different filters. 
We first note that up to frame $k=512$, there is no observed information, hence outputs are actually being generated according to priors.
The Kalman filter outputs  a static, average frame. The filter
$\hat{x}_{\ntc}$ randomizes each frame independently, leading to unnatural random movement with flickering features. 
At frame $k=513$, when observations become available, $\hat{x}^*_{\kal}$ and $\hat{x}_{\ntc}$ get updated immediately, creating an inconsistent, non-smooth motion between frames $512$ and $513$. The PKF output $\hat{x}_{\auc}$, on the other hand, maintains a smooth motion.
Since the outputs of inconsistent filters rapidly becomes similar to the ground-truth, their errors drop. The perfect-perceptual filter, $\hat{x}_{\auc}$, remains consistent with its previously generated frames and the natural dynamics of the model, hence its error decays more slowly.

\begin{figure}
\centering \centering 
\includegraphics[viewport=12bp 10bp 1155bp 420bp,clip,width=0.85\linewidth]{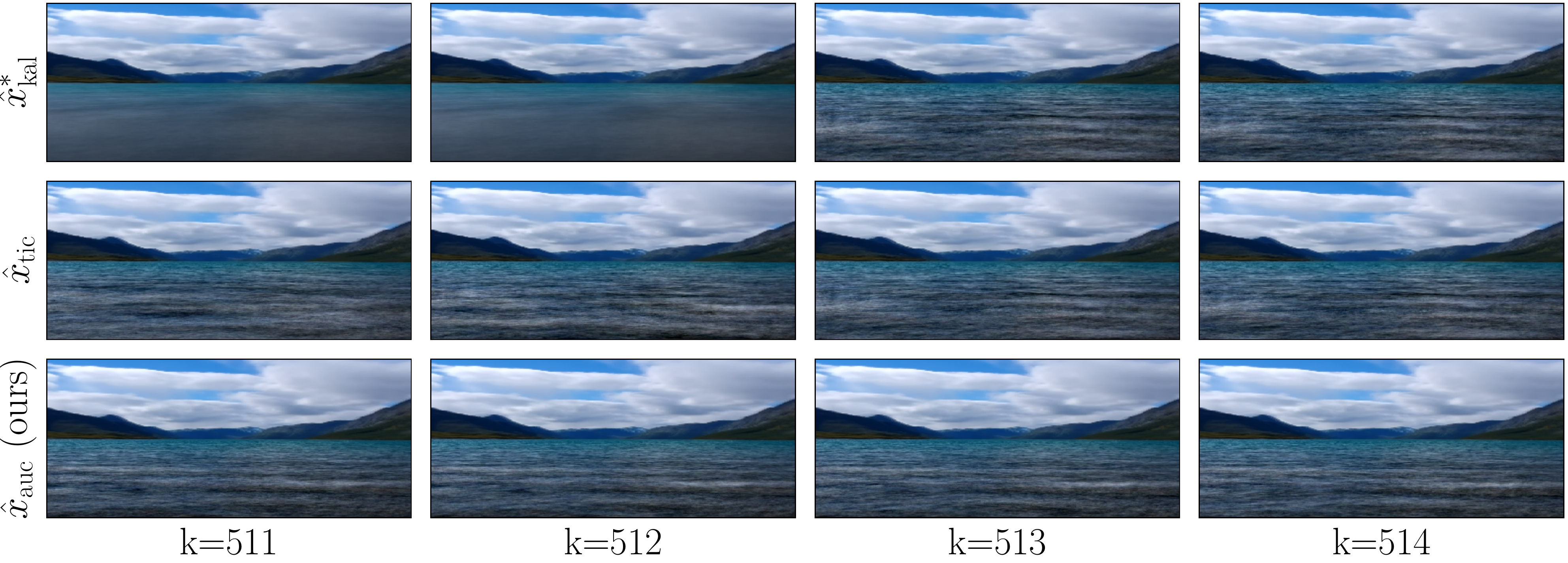}
\includegraphics[viewport=12bp 10bp 1155bp 350bp,clip,width=0.77\linewidth]{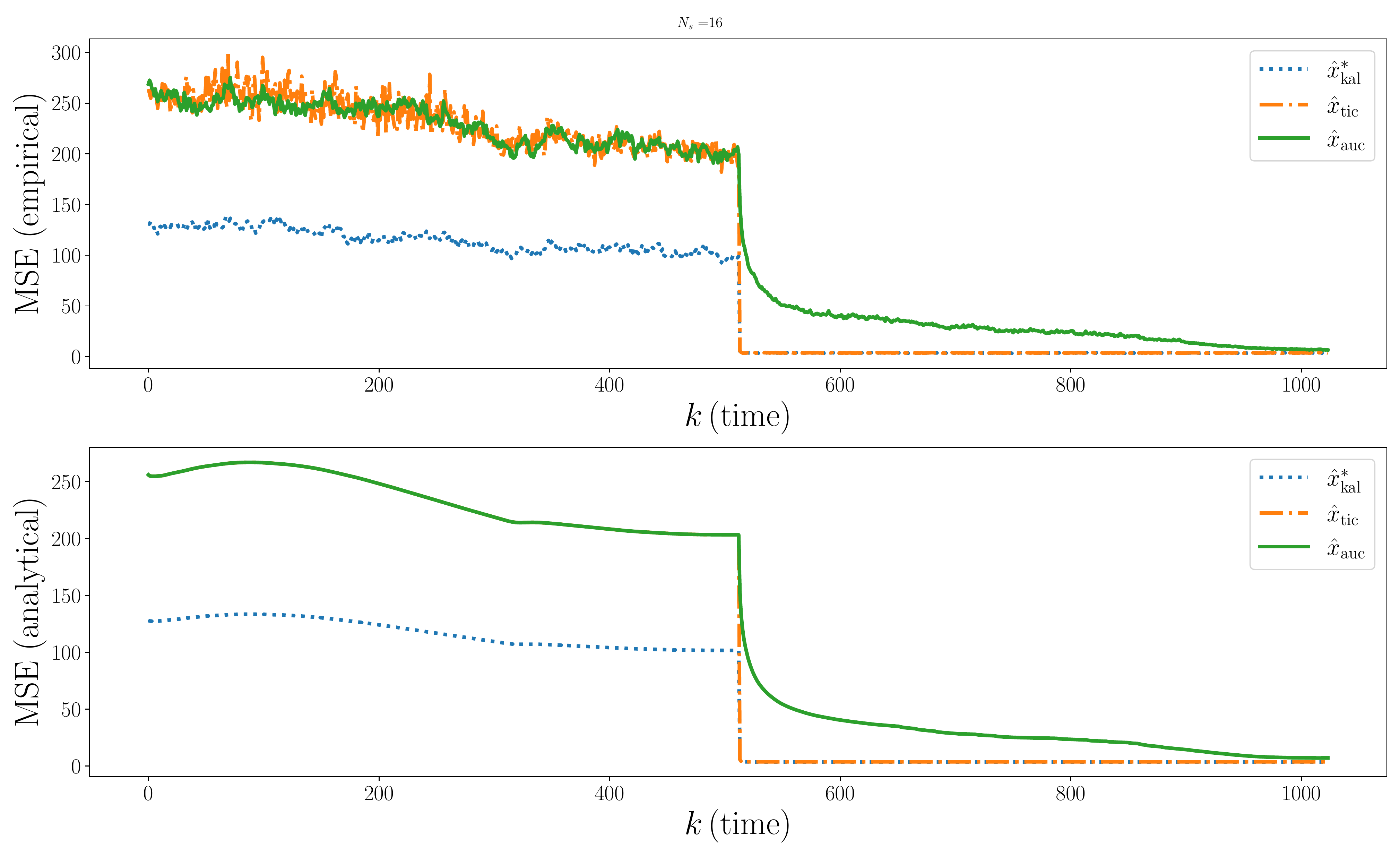}
\caption{\label{fig::pkf:river_demo:genexp}
\textbf{Frame generation on a dynamic texture domain.}   
In the first half of the demo ($k\leq 512$), there are no observations, hence the reference signal is restored according to prior distribution. The filters with no perfect-perceptual quality constraint in the temporal domain generate non-realistic frames (Kalman filter output $\hat{x}_{\kal}^*$) or unnatural motion ($\hat{x}_{\ntc}$). Perceptual filter $\hat{x}_\auc$ is constrained by previously generated frames and the natural dynamics of the domain, hence its MSE decays slower.
}     
\end{figure}
\newpage

\bibliography{dpreferences,pkalreferences}
\bibliographystyle{plainnat}

\newpage
\appendix

\begin{center}
\hrule height 4pt
\vskip 0.25in 
\vskip -\parskip%
{  \bf \LARGE \centering Perceptual Kalman Filters: Online State Estimation under a Perfect Perceptual-Quality Constraint - Supplementary Material}
 \vskip 0.29in 
  \vskip -\parskip
  \hrule height 1pt
  \vskip 0.09in%
  \end{center}

In App.~\ref{app::kalman filter} we provide a detailed theoretical background on the Kalman Filter, its properties and recursive calculation.
In App.~\ref{app:optimality_lin} we prove that under our perfect perceptual filtering setting, there exists a \emph{linear} optimal filter (Thm.~\ref{thm:generalFilter}).
In App.~\ref{app::direct approach} we discuss a direct, non-recursive method for optimizing perceptual filter coefficients.
In App.~\ref{app:filter_deriv} we derive the recursive expression for the filter given in \eqref{eq::pkl:J_k::fullJk}.
In App.~\ref{app::extremalSDP} we find a closed-form solution for PKF coefficients by proving Theorem~\ref{Thm:pkf:optimal_perc_gain}. In this appendix, we also give some brief overview on the extremal problem of finding a minimal distance between distributions.
App.\ref{app::stationary} contains a discussion about stationary perceptual Kalman filters in
the steady-state regime.
We summarize all definitions and notations in App.~\ref{app::notation}.
Finally, in App.\ref{app:numerical} we give full details for all numerical demonstrations, and present additional empirical and visual results. More results are provided in the supplementary video.

\section{The Kalman Filter algorithm}
\label{app::kalman filter}
In this Section we supply a detailed reminder of the Kalman filter Algorithm.
The celebrated Kalman filter \citep{kalman1960filtering} assumes a state
$x_{k}\in\mathbb{R}^{n_{x}}$, where dynamics are modeled as deterministic
linear functions perturbed by a Gaussian noise, and observations $y_k \in \R^{n_y}$ are
linear functions of $x_{k}$ with an additive noise
\begin{align}
x_{k}=&A_{k}x_{k-1}+q_{k},&& q_{k}\sim\mathcal{N}(0,Q_{k}),&& k=1,...,T, \label{appeq:kalman:setting1} \\
y_{k}=&C_{k}x_{k}+r_{k},&& r_{k}\sim\mathcal{N}(0,R_{k}),&& k=0,...,T.\label{appeq:eq:kalman:setting2}
\end{align}
The noise terms $q_{k}$ and $r_{k}$ are independent white Gaussian
processes with zero mean and covariances $Q_{k},R_{k}$, respectively.
$x_{0}$ is assumed to have a zero-mean Gaussian distribution with
covariance $P_{0}$, independent of $q_{1},r_{0}$. For convenience, we will sometimes refer to $P_{0}$ as $Q_{0}$. The matrices $A_{k},C_{k},Q_{k}.R_{k}$
and $P_{0}$ are system parameters with appropriate dimensions, and
assumed to be known. Considering the MSE distortion, we denote
\begin{equation}
\hat{x}_{k|s}\triangleq\mathop{\mathrm{argmin}}_{\hat{x}}\mathbb{E}\left[\|x_{k}-\hat{x}\|^{2}|y_{0},\dots,y_{s}\right],
\end{equation}
namely the optimal MSE estimator of the state at time $k$, given measurements up to time $s$. Under the assumptions mentioned above, Kalman filters produce the mean state estimate $\hat{x}_{k|k}$, an MSE-optimal estimator of $x_{k}$ given the observations up to
time $k$. The \textit{Kalman optimal state} $\hat{x}_{k}^{*}\equiv\hat{x}_{k|k}$
is given by the recurrence 
\begin{equation}
\hat{x}_{k}^{*}=A_{k}\hat{x}_{k-1}^{*}+K_{k}\II_{k},
\end{equation}
where $K_{k}$ is the \textit{optimal Kalman gain} \citep{kalman1960filtering}, given explicitly in Algorithm \ref{alg:Kalman-Filter}.

The vector $\II_{k}$ is the \textit{innovation} process,
\begin{equation}
\II_{k}=y_{k}-C_{k}\hat{x}_{k|k-1},
\end{equation}
describing the contribution of the new observation $y_k$ over the optimal prediction based on previous observations.
Since we are in the Linear-Gaussian setup, we have that the innovation
state $\II_{k}$ is orthogonal to the measurements $y_{0},\dots,y_{k-1}$,
guaranteeing the MSE optimality of the estimation. The calculation
of Kalman state is summarized in Algorithm \ref{alg:Kalman-Filter}.

\begin{algorithm}[H]
\caption{\label{alg:Kalman-Filter}Kalman Filter}

\begin{algorithmic}
\STATE \textbf{initialize:} $\hat{x}_{0}^{*}=K_{0} y_{0}=P_{0}C_{0}^{\T}\Sigma_{Y_{0}}^{-1}y_{0},\quad P_{0|0}=P_{0}-P_{0}C_{0}^{\T}\Sigma_{Y_{0}}^{-1}C_{0}P_{0},\II_0 = y_0$, $S_0=C_0P_0C_0^\T + R_0$.

\FOR{$k=1$ to $T$ }

\STATE calculate prior: $\hat{x}_{k|k-1}=A_{k}\hat{x}_{k-1|k-1},\quad P_{k|k-1}=A_{k}P_{k-1|k-1}A_{k}^{\T}+Q_{k}$
\STATE calculate Innovation: $\II_{k}=y_{k}-C_{k}\hat{x}_{k|k-1}^{},\quad S_{k}=C_{k}P_{k|k-1}C_{k}^{\T}+R_{k}$
\STATE Kalman gain: $K_{k}=P_{k|k-1}C_{k}^{\T}S_{k}^{-1}$
\STATE update (posterior): $\hat{x}_{k}^{*}=\hat{x}_{k|k}=\hat{x}_{k|k-1}+K_{k}\II_{k},\quad P_{k|k}=(I-K_{k}C_{k})P_{k|k-1}$
\ENDFOR
\end{algorithmic}
\end{algorithm}

\newpage
\section{Optimality of linear filters (proof of Thm.~\ref{thm:generalFilter})}\label{app:optimality_lin}

In this section we show that under a family of optimality criteria \eqref{eq::cost_C:def} and perfect-perceptual quality and causality constraints (\ref{eq:causality}-\ref{eq:perfect_perception}), linear filters of the form \eqref{eq::pkl::Jk:innovatonUpsilon} are optimal.
We start with the following.

\begin{theorem}
\label{thm::pkl:causal jk independent of innovation}
Let $Y_{0}^{T}=(y_{0},\ldots,y_{T})$ be the set of measurements \eqref{eq:eq:kalman:setting2}, and
let $(J_{0}^{T},Y_{0}^{T})$ be a joint distribution s.t. $J_{k}$
is independent of $y_{k+n}$ given $Y_{0}^{k}$ for all $k\in[0,T]$
and $n\geq1$. Then, 
\begin{equation}
\EEb{J_{k}\II_{k+n}^{\T}}=0.
\end{equation}
\end{theorem}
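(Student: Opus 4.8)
The plan is to show that the cross-covariance vanishes by conditioning on the measurements and combining two facts: (i) for $n\ge1$ the innovation $\II_{k+n}$ is zero-mean and, being a linear image of the jointly-Gaussian measurement vector, is \emph{independent} of all past measurements $Y_0^{k+n-1}$ (the orthogonality property of the Kalman innovation, recalled in App.~\ref{app::kalman filter}); and (ii) the causality constraint \eqref{eq:causality}, which makes $J_k$ conditionally independent of the future measurements given the past.

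For the main reduction, I would first note that since the one-step predictor $\hat{x}^*_{k+n|k+n-1}$ is a linear function of $Y_0^{k+n-1}$, the innovation $\II_{k+n}=y_{k+n}-C_{k+n}\hat{x}^*_{k+n|k+n-1}$ is a (deterministic, linear) function of $Y_0^{k+n}$. Because $\II_{k+n}$ is therefore $Y_0^{k+n}$-measurable, the tower property gives
\begin{equation*}
\EEb{J_k\II_{k+n}^{\T}}=\EEb{\EEb{J_k\mid Y_0^{k+n}}\,\II_{k+n}^{\T}}.
\end{equation*}
Invoking the causality hypothesis in the joint form $J_k\perp Y_{k+1}^{k+n}\mid Y_0^k$, conditioning on the larger set $Y_0^{k+n}$ adds nothing, so $\EEb{J_k\mid Y_0^{k+n}}=\EEb{J_k\mid Y_0^k}=:g(Y_0^k)$, a function of $Y_0^k$ alone, and the target becomes $\EEb{g(Y_0^k)\,\II_{k+n}^{\T}}$.

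To finish, I would use that for $n\ge1$ we have $k\le k+n-1$, so $Y_0^k$ is a sub-vector of $Y_0^{k+n-1}$. Since $\II_{k+n}$ is uncorrelated with $Y_0^{k+n-1}$ and the whole measurement sequence is jointly Gaussian, $\II_{k+n}$ is independent of $Y_0^k$, hence independent of the $Y_0^k$-measurable quantity $g(Y_0^k)$. The cross-moment then factorizes as $\EEb{g(Y_0^k)}\,\EEb{\II_{k+n}}^{\T}$, which is zero because innovations are zero-mean.

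The main obstacle I anticipate is precisely the reduction $\EEb{J_k\mid Y_0^{k+n}}=\EEb{J_k\mid Y_0^k}$: the theorem as written supplies only the \emph{pairwise} conditional independences $J_k\perp y_{k+n}\mid Y_0^k$, whereas the tower step uses the \emph{joint} independence from the block $Y_{k+1}^{k+n}$. I would either adopt the joint version as the standing causality assumption (it is exactly what \eqref{eq:causality} encodes) or, to rely only on the stated pairwise hypothesis, replace the tower step by expanding $\II_{k+n}=\sum_{j=0}^{k+n}G_j y_j$ linearly in the measurements and computing $\EEb{J_k y_j^{\T}}=\EEb{g(Y_0^k)\,y_j^{\T}}$ for $j\le k$ and $\EEb{J_k y_j^{\T}}=\EEb{g(Y_0^k)\,\EEb{y_j\mid Y_0^k}^{\T}}$ for $j>k$ (the latter from pairwise independence); the sum then collapses to $\EEb{g(Y_0^k)\,\EEb{\II_{k+n}\mid Y_0^k}^{\T}}$, and $\EEb{\II_{k+n}\mid Y_0^k}=0$ by the same independence-and-zero-mean argument, giving the claim.
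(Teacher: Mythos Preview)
Your alternative argument (the one that uses only the pairwise hypothesis) is correct, and it is genuinely different from the paper's route. The paper proves the claim by \emph{induction on $n$}, working in the innovation basis: it writes $y_{k+n}=\sum_{t=0}^{k+n}H_{k+n,t}\II_t$, conditions on $\II_0^k$, uses the pairwise independence $J_k\perp y_{k+n}\mid Y_0^k$ once to replace $\EEb{J_ky_{k+n}^\T\mid\II_0^k}$ by $\hat J_k(\hat y_{k+n}^k)^\T$, and then invokes the inductive hypothesis $\EEb{J_k\II_t^\T}=0$ for $k+1\le t\le k+n-1$ to kill the leftover terms. Your version is more direct: you expand $\II_{k+n}$ in the \emph{measurement} basis, apply the pairwise hypothesis to each future $y_j$ separately, and collapse the sum to $\EEb{g(Y_0^k)\,\EEb{\II_{k+n}\mid Y_0^k}^\T}=0$ via the Kalman orthogonality $\II_{k+n}\perp Y_0^{k+n-1}$. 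This avoids the induction entirely and is arguably cleaner; the paper's inductive argument, on the other hand, makes more explicit how the cancellation propagates through the innovation coefficients $H_{k+n,t}$.

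You were right to flag the ``main'' tower reduction $\EEb{J_k\mid Y_0^{k+n}}=\EEb{J_k\mid Y_0^k}$ as needing the \emph{joint} conditional independence $J_k\perp Y_{k+1}^{k+n}\mid Y_0^k$: the theorem's hypothesis is only pairwise, and the paper's own proof (like your alternative) never invokes the joint version. So the first option you list would weaken the statement, while your second option matches both the stated hypothesis and the paper's level of generality.
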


\begin{proof}
Denote $\hat{J}_{k}=\EEb{J_{k}|\II_{0}^{k}}$. We can write the measurements
as a linear function of the innovations, $y_{k}=\sum_{t=0}^{k}H_{k,t}\II_{t}$.
We have 
\begin{equation}
\hat{y}_{k+n}^{k+n-1}\triangleq \EEb{y_{k+n}|Y_{0}^{k+n-1}}=\EEb{y_{k+n}|\II_{0}^{k+n-1}}=\sum_{t=0}^{k+n-1}H_{k+n,t}\II_{t},
\end{equation}
and
\begin{equation}
\hat{y}_{k+n}^{k}\triangleq \EEb{y_{k+n}|Y_{0}^{k}}=\EEb{y_{k+n}|\II_{0}^{k}}=\sum_{t=0}^{k}H_{k+n,t}\II_{t}=\EEb{\hat{y}_{k+n}^{k+n-1}|\II_{0}^{k}}.
\end{equation}

For any $k$ and $n=1$,  $\II_{k+1} = y_{k+1}-\hat{y}_{k+1}^k$, and therefore
\begin{equation}
\EEb{J_{k}\II_{k+1}^{\T}}
=\EEb{\EEb{J_{k}\left[y_{k+1}-\hat{y}_{k+1}^{k}\right]^{\T}|\II_{0}^{k}}}
=\EEb{\hat{J}_{k}\left[\hat{y}_{k+1}^{k}\right]^{\T}-\EEb{J_{k}|\II_{0}^{k}}\left[\hat{y}_{k+1}^{k}\right]^{\T}}=0.
\end{equation}
This is due to the facts that $J_{k}$ and $y_{k+1}$ are independent
given the condition, and that $\hat{y}_{k+1}^{k}$ is a deterministic
function of $\II_{0}^{k}$.

Now, assume we know that $\EEb{J_{k}\II_{t}^{\T}}=0$ for $k+1\leq t\leq k+n-1$.
We can write 
\begin{align}
\EEb{J_{k}\II_{k+n}^{\T}}
&=\EEb{\EEb{J_{k}\left[y_{k+n}-\hat{y}_{k+n}^{k+n-1}\right]^{\T}|\II_{0}^{k}}}
\nonumber\\&=\EEb{\hat{J}_{k}\left[\hat{y}_{k+n}^{k}\right]^{\T}-\EEb{J_{k}\sum_{t=0}^{k+n-1}\II_{t}^{\T}H_{k+n,t}^{\T}|\II_{0}^{k}}}
\nonumber\\&=\EEb{\hat{J}_{k}\left[\hat{y}_{k+n}^{k}\right]^{\T}-\EEb{J_{k}\sum_{t=0}^{k}\II_{t}^{\T}H_{k+n,t}^{\T}|\II_{0}^{k}}-\EEb{J_{k}\sum_{t=k+1}^{k+n-1}\II_{t}^{\T}H_{k+n,t}^{\T}|\II_{0}^{k}}}
\nonumber\\&=\EEb{\hat{J}_{k}\left[\hat{y}_{k+n}^{k}\right]^{\T}-\hat{J}_{k}\left[\hat{y}_{k+n}^{k}\right]^{\T}-\sum_{t=k+1}^{k+n-1}\EEb{J_{k}\II_{t}^{\T}|\II_{0}^{k}}H_{k+n,t}^{\T}}
\nonumber\\&=-\sum_{t=k+1}^{k+n-1}\EEb{J_{k}\II_{t}^{\T}}H_{k+n,t}^{\T}
\nonumber\\&=0.
\end{align}

\end{proof}

We now show that for every filter which is feasible under \eqref{eq:causality} and \eqref{eq:perfect_perception}, one can find a linear filter, jointly Gaussian with the measurement set, attaining the same cost objective.
\begin{theorem} \label{thm::pkl:joint gaussian equivalent}
Let $Y_{0}^{T}=(y_{0},\ldots,y_{T})$ be the set of measurements \eqref{eq:eq:kalman:setting2}, and let $\mathcal{J}_{0}^{T}=(\mathcal{J}_{0},\ldots,\mathcal{J}_{T})$ be jointly distributed with $Y_{0}^{T}$
such that:
\begin{enumerate}[(i)]
\item \label{enum:cond_i} $\mathcal{J}_{0}^{T}\sim \mathcal{N}\left(0,\mathrm{diag}\{P_{0},Q_{1},\ldots,Q_{T}\}\right)$.
\item \label{enum:cond_ii} $\mathcal{J}_{k}$ is independent of $y_{k+n}$ given $Y_{0}^{k}$
for all $k\in[0,T]$ and $n\geq1$. 
\item \label{enum:cond_iii} $\sum_{k=0}^{T}\w_{k}\EEb{\|x_{k}-\chi_{k}\|^{2}}=\C$, where $\chi_{k}$
is the process given by $\chi_{k}=A_{k}\chi_{k-1}+\mathcal{J}_{k}$ with
$\chi_{0}=\mathcal{J}_{0}$.
\end{enumerate}
Then, there exists a joint Gaussian distribution $(J_{0}^{T},Y_{0}^{T})$
in which \eqref{enum:cond_i} and \eqref{enum:cond_ii} hold, and the estimator given by 
\begin{equation}
    \hat{x}_k = A_k\hat{x}_{k-1}+J_k, \quad \hat{x}_0 = J_0
\end{equation}
achieves the same cost \eqref{enum:cond_iii}, namely $\sum_{k=0}^{T}\w_{k}\EEb{\|x_{k}-\hat{x}_{k}\|^{2}}=C$.

Furthermore, we can write
\begin{equation}
\label{appeq::J_k linear full form}
J_{k}=\pi_{k}\II_{k}+\phi_{k}\UII_{k}+w_{k},
\end{equation}
where 
\begin{equation}
\UII_{k}=\II_{0}^{k-1}-\EEb{\II_{0}^{k-1}|J_{0}^{k-1}}
\end{equation}
and $w_{k}$ is a white Gaussian noise, independent of $Y_{0}^{T}$
and $J_{0}^{k-1}$.
\end{theorem}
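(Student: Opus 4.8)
The plan is to run a \emph{Gaussianization} argument, exploiting the fact that the quadratic cost in \eqref{enum:cond_iii}, as well as the second-order content of \eqref{enum:cond_ii}, depend on the joint law of $(X_0^T,Y_0^T,\mathcal{J}_0^T)$ only through its covariance. Since $(X_0^T,Y_0^T)$ is already jointly Gaussian (it is the linear-Gaussian model \eqref{eq:kalman:setting1}--\eqref{eq:eq:kalman:setting2}), I would keep this pair fixed and append a Gaussian $J_0^T$: let $(X_0^T,Y_0^T,J_0^T)$ be the zero-mean jointly Gaussian vector whose covariance equals that of $(X_0^T,Y_0^T,\mathcal{J}_0^T)$. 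This is well defined, because a genuine covariance is positive semidefinite, and it leaves the marginal law of $(X_0^T,Y_0^T)$ unchanged. The estimator $\hat{x}_k=A_k\hat{x}_{k-1}+J_k$ and the given process $\chi_k$ are then the \emph{same} linear functional of $J_0^k$ and of $\mathcal{J}_0^k$, respectively (obtained by unrolling the recursion).

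First I would verify the three requirements. Condition \eqref{enum:cond_i} is immediate: $J_0^T$ is Gaussian with covariance $\mathrm{diag}\{P_0,Q_1,\ldots,Q_T\}$, so the $J_k$ are independent with $J_k\sim\mathcal{N}(0,Q_k)$. For \eqref{enum:cond_iii}, observe that $\EEb{\|x_k-\hat{x}_k\|^2}$ is a fixed quadratic form in the blocks $\EEb{x_kx_k^\T}$, $\EEb{J_tJ_{t'}^\T}$ and $\EEb{x_kJ_t^\T}$, all of which coincide with the corresponding blocks for $\mathcal{J}$ by construction; summing over $k$ with weights $\w_k$ reproduces $\C$.

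The causality requirement \eqref{enum:cond_ii} is where the preceding result enters. Because $\mathcal{J}$ obeys \eqref{enum:cond_ii}, Theorem~\ref{thm::pkl:causal jk independent of innovation} gives $\EEb{\mathcal{J}_k\II_t^\T}=0$ for $t>k$; since each $\II_t$ is linear in $Y_0^T$ and we matched $\mathrm{Cov}(J,Y)=\mathrm{Cov}(\mathcal{J},Y)$, the same identity passes to $J$, i.e.\ $\EEb{J_k\II_t^\T}=0$ for $t>k$. Using the innovation representation $y_{k+n}-\EEb{y_{k+n}\mid Y_0^k}=\sum_{t=k+1}^{k+n}H_{k+n,t}\II_t$ from the proof of Theorem~\ref{thm::pkl:causal jk independent of innovation}, the conditional cross-covariance $\mathrm{Cov}(J_k,y_{k+n}\mid Y_0^k)$ then vanishes; for jointly Gaussian variables a vanishing partial covariance is equivalent to conditional independence, which is precisely \eqref{enum:cond_ii} for $J$.

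Finally, for the representation \eqref{appeq::J_k linear full form}, I would work in the Gaussian Hilbert space and use the orthogonal decomposition $\mathrm{span}\{\II_0^k,J_0^{k-1}\}=\mathrm{span}\{\II_k\}\oplus\mathrm{span}\{\UII_k\}\oplus\mathrm{span}\{J_0^{k-1}\}$, whose three summands are mutually orthogonal: $\II_k$ is white, hence orthogonal to $\II_0^{k-1}$ and (by the causality just established) to $J_0^{k-1}$, while $\UII_k=\II_0^{k-1}-\EEb{\II_0^{k-1}\mid J_0^{k-1}}$ is by definition the part of $\II_0^{k-1}$ orthogonal to $J_0^{k-1}$. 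Projecting $J_k$ onto this span, the component along $J_0^{k-1}$ disappears because \eqref{enum:cond_i} makes $J_k$ independent of $J_0^{k-1}$, leaving $J_k=\pi_k\II_k+\phi_k\UII_k+w_k$ with $w_k$ the Gaussian residual. I expect the main obstacle to be the bookkeeping that upgrades $w_k$ to a genuine white noise independent of the \emph{entire} record: $w_k$ is orthogonal to $\mathrm{span}\{\II_0^k,J_0^{k-1}\}$ by construction, but one must also show $w_k\perp\II_{k+1}^T$, which follows by propagating the relations $\EEb{J_k\II_t^\T}=0$ and $\EEb{J_s\II_t^\T}=0$ (for $t>k\ge s$) together with whiteness of the innovations through the definitions of $\II_k$ and $\UII_k$, and which simultaneously yields that the $w_k$ are uncorrelated across $k$.
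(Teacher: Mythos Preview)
Your proposal is correct and follows essentially the same Gaussianization strategy as the paper: match second moments, invoke Theorem~\ref{thm::pkl:causal jk independent of innovation} to transfer $\EEb{J_k\II_t^\T}=0$ for $t>k$, use the innovation expansion of $y_{k+n}$ to get vanishing conditional covariance (hence Gaussian conditional independence), and observe that the quadratic cost is preserved because it depends only on second-order statistics.

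The only difference worth noting is in how the residual $w_k$ is defined. You project $J_k$ onto $\mathrm{span}\{\II_0^k,J_0^{k-1}\}$ and then have to argue separately that $w_k\perp\II_{k+1}^T$, which you correctly flag as the remaining bookkeeping. The paper instead sets $w_k=J_k-\EEb{J_k\mid Y_0^T,J_0^{k-1}}$, so independence of $w_k$ from the \emph{entire} observation record $Y_0^T$ and from $J_0^{k-1}$ is automatic; the work then shifts to showing that this conditional expectation actually reduces to $\EEb{J_k\mid \II_0^k,J_0^{k-1}}$ (because both $J_k$ and $J_0^{k-1}$ are independent of $\II_{k+1}^T$), after which conditioning further on $J_0^{k-1}$ kills the $J_0^{k-1}$-coefficients and leaves $\pi_k\II_k+\phi_k\UII_k$. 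Either route closes the argument; the paper's ordering buys the ``independent of $Y_0^T$'' claim for free, while yours makes the orthogonal decomposition of the span more transparent.
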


\begin{proof}
Let $(J_{0}^{T},Y_{0}^{T})$ be the Gaussian distribution defined
by the moments of $(\mathcal{J}_{0}^{T},Y_{0}^{T})$ up to second
order. We observe that from Theorem~\ref{thm::pkl:causal jk independent of innovation}
above, $J_{k}$ is independent
of all future innovations $\II_{k+n}$, namely it is based only on measurements up to time $k$. 
Using the notions of Theorem~\ref{thm::pkl:causal jk independent of innovation}'s proof,  
\begin{align}
    \EEb{(J_k-\hat{J}_k)(y_{k+n}-\hat{y}_{k+n}^k)^\T|Y_0^k} & = \EEb{(J_k-\hat{J}_k)\sum_{t=k+1}^{k+n}\II_t^\T H_{k+n,t}^\T|\II_0^k}
    \nonumber\\
    &= \sum_{t=k+1}^{k+n} \left[ \EEb{J_k \II_t^\T | \II_0^k}
    -\hat{J}_k\EEb{\II_t^\T|\II_0^k} \right] H_{k+n,t}^\T
\nonumber\\
    &=
    \sum_{t=k+1}^{k+n} \EEb{\EEb{J_k \II^\T_t|\II_{t},\II_0^k}|\II_0^k}H_{k+n,t}^\T
\nonumber\\
    &=
    \sum_{t=k+1}^{k+n} \EEb{\EEb{J_k |\II_{t},\II_0^k}\II^\T_t|\II_0^k}H_{k+n,t}^\T
\nonumber\\
    &=
    \sum_{t=k+1}^{k+n} \EEb{\EEb{J_k |\II_0^k}\II^\T_t|\II_0^k}H_{k+n,t}^\T
\nonumber\\
    &=
    \sum_{t=k+1}^{k+n} \EEb{J_k |\II_0^k}\EEb{\II^\T_t|\II_0^k}H_{k+n,t}^\T
\nonumber\\
    &=0.
\end{align}
This means that $J_k$ and $y_{k+n}$ are independent given $Y_0^k$, which proves \eqref{enum:cond_ii}.

From \eqref{eq:D_k::Lyapunov_general} we see that the cost functional depends only on the second order statistics of $(\mathcal{J}_{0}^{T},\II_{0}^{T})$ which are identical to those of $({J}_{0}^{T},\II_{0}^{T})$, hence \eqref{enum:cond_iii} holds:
\begin{equation}
\sum_{k=0}^{T}\w_{k}\EEb{\|x_{k}-\hat{x}_{k}\|^{2}}=\sum_{k=0}^{T}\w_{k}\EEb{\|x_{k}-\chi_{k}\|^{2}}=\C.
\end{equation}

To prove \eqref{appeq::J_k linear full form}, we now write
\begin{equation}
J_{k}=\varepsilon_{k}+w_{k},
\end{equation}
where $\varepsilon_{k}=\EEb{J_{k}|Y_{0}^{T},J_{0}^{k-1}}$, and $w_{k}=J_{k}-\EEb{J_{k}|Y_{0}^{T},J_{0}^{k-1}}$
is independent of $Y_{0}^{T}$ and $J_{0}^{k-1}$. Now, since both
$J_{k}$ and $J_{0}^{k-1}$ are independent of $\II_{k+1}^{T}$,
\begin{equation}
\label{eqn::pkl: epsk as linear sum}
\varepsilon_{k}
=\EEb{J_{k}|Y_{0}^{T},J_{0}^{k-1}}
=\EEb{J_{k}|\II_{0}^{k},J_{0}^{k-1}}
=\sum_{t=0}^{k}\phi_{k,t}\II_{t}+\sum_{t=0}^{k-1}\psi_{k,t}J_{t}.
\end{equation}
$J_{k}$ is independent of $J_{0}^{k-1}$, thus 
\begin{equation}
\EEb{J_{k}|J_{0}^{k-1}}
=\EEb{\EEb{J_{k}|\II_{0}^{T},J_{0}^{k-1}}|J_{0}^{k-1}}
=0.
\end{equation}
Conditioning both sides of \eqref{eqn::pkl: epsk as linear sum} on $J_{0}^{k-1}$ and taking expectations,
\begin{equation}
\label{eqn::pkl: epsk as linear sum after condition}
0=\sum_{t=0}^{k}\phi_{k,t}\EEb{\II_{t}|J_{0}^{k-1}}+\sum_{t=0}^{k-1}\psi_{k,t}J_{t}.
\end{equation}
Note that $\EEb{\II_{k}|J_{0}^{k-1}}=0$, which together with 
\eqref{eqn::pkl: epsk as linear sum after condition}
implies
\begin{equation}
\varepsilon_{k}
=\phi_{k,k}\II_{k}+\sum_{t=0}^{k-1}\phi_{k,t}\left[\II_{t}-\EEb{\II_{t}|J_{0}^{k-1}}\right]
=\pi_{k}\II_{k}+\phi_{k}\UII_{k}.
\end{equation}

Now, all we have left to show is that $w_{k}$ is a white sequence.
Since $w_{k+n}$ ($n\geq1$) is independent of $J_{0}^{k}$ and $\II_{0}^{T}$ (which also constitute $\UII_k$),
it is easy to obtain
\begin{equation}
\EEb{w_{k+n}w_{k}^{\T}}=\EEb{w_{k+n}\left[J_{k}-\pi_{k}\II_{k}-\phi_{k}\UII_{k}\right]^{\T}}
=0.
\end{equation}
\end{proof}

\begin{corollary}
Given a cost objective of the form $\C=\sum_{k=0}^{T}\w_{k}\EEb{\|x_{k}-\hat{x}_{k}\|^{2}}$,
there exists a \textit{linear} filter of the form
\begin{equation}
J_{k}=\pi_{k}\II_{k}+\phi_{k}\UII_{k}+w_{k},
\end{equation}
 such that 
\begin{align}
\hat{x}_{0}&=J_0
\\
\hat{x}_{k}&=A_{k}\hat{x}_{k-1}+J_{k}, \, k=1,\ldots,T
\end{align}
 is an optimal estimator under the perfect perceptual quality and
causality constraints (\ref{eq:causality}-\ref{eq:perfect_perception}). 
\end{corollary}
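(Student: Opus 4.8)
The plan is to obtain the Corollary as an assembly of the two theorems just proved, supplemented by a compactness argument guaranteeing that an optimizer actually exists. First I would observe that, because of the perfect perceptual quality constraint \eqref{eq:perfect_perception}, every feasible estimator has $\Sigma_{\hat x_k}=\Sigma_{x_k}$ fixed, so each summand of the cost satisfies $\EEb{\|x_k-\hat x_k\|^2}=\tr{\Sigma_{x_k}}+\tr{\Sigma_{\hat x_k}}-2\tr{\EEb{x_k\hat x_k^\T}}$ and hence depends on the estimator only through the cross-covariances $\EEb{x_k\hat x_k^\T}$. Thus $\C$ is an affine function of the tuple of cross-covariances, and minimizing it over the feasible class amounts to minimizing an affine functional over the set of cross-covariance tuples attainable by feasible joint laws.

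Next I would establish that the infimum $\C^\star=\inf\C$ over feasible estimators is attained. Each cross-covariance block is norm-bounded by Cauchy--Schwarz through the fixed marginals $\Sigma_{x_k}$ and $\Sigma_{\hat x_k}=\Sigma_{x_k}$, so the attainable set of cross-covariance tuples is bounded; one then checks it is closed, since limits of feasible second-order statistics remain feasible (both the causality \eqref{eq:causality} and the perfect-perception \eqref{eq:perfect_perception} requirements translate into closed conditions on the joint covariance and on the conditional-independence structure). Compactness of this set together with continuity of the affine objective then yields a minimizing tuple, hence a feasible estimator achieving $\C^\star$.

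Then I would invoke Theorem~\ref{thm::pkl:joint gaussian equivalent}: applied to this cost-minimizing feasible estimator (in the role of $\mathcal J_0^T$), it produces a jointly Gaussian law $(J_0^T,Y_0^T)$ satisfying conditions (i)-(ii) and achieving the same cost $\C^\star$, for which the induced recursive estimator $\hat x_k=A_k\hat x_{k-1}+J_k$ meets both constraints \eqref{eq:causality}-\eqref{eq:perfect_perception}. The same theorem supplies the explicit decomposition $J_k=\pi_k\II_k+\phi_k\UII_k+w_k$ with $w_k$ white and independent of $Y_0^T,J_0^{k-1}$, where $\UII_k=\II_0^{k-1}-\EEb{\II_0^{k-1}|J_0^{k-1}}$. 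Since this filter is feasible and attains $\C^\star$, it is optimal, which is precisely the assertion of the Corollary.

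I expect the main obstacle to be the attainment step: Theorem~\ref{thm::pkl:joint gaussian equivalent} only converts a \emph{given} feasible estimator into an equivalent linear Gaussian one, so on its own it does not guarantee that a minimizer exists. The argument must therefore verify closedness of the feasible cross-covariance set — in particular that a limit of causal-and-perfect-perception joint laws remains causal and perfectly perceptual at the level of second moments — before the reduction to linear form can be applied to an actual optimizer rather than merely to a minimizing sequence. An alternative route that avoids taking limits of general laws is to parametrize directly within the Gaussian linear class of Theorem~\ref{thm:generalFilter}, imposing the constraint $\Sigma_{w_k}=Q_k-\pi_kS_k\pi_k^\T-\phi_k\Sigma_{\UII_k}\phi_k^\T\succeq0$ and checking that the resulting feasible set of coefficients is compact and the objective continuous; this ultimately reduces to the same closedness and compactness verification.
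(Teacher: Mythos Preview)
Your proposal is correct and matches the paper's approach: both argue that the cost depends only on second-order statistics, establish that the feasible set of such statistics is compact (non-empty, closed, bounded), conclude that a minimizer exists, and then apply Theorem~\ref{thm::pkl:joint gaussian equivalent} to obtain the linear form. The only minor difference is that the paper parametrizes directly by the joint covariance of $(\mathcal J_0^T,\II_0^T)$---with fixed diagonal blocks $\diag\{Q_k\}$, $\diag\{S_k\}$ and, by Theorem~\ref{thm::pkl:causal jk independent of innovation}, a lower quasi-triangular off-diagonal block $L$---so that feasibility is explicitly the intersection of the PSD cone with finitely many hyperplanes, making closedness and boundedness immediate rather than something to be checked separately.
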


\begin{proof} Under the perfect perceptual quality constraint, an estimate sequence
$\chi_{k}$ must satisfy that
\begin{equation}
\mathcal{J}_{k}=\chi_{k}-A_{k}\chi_{k-1}
\end{equation}
 is a white Gaussian process with covariances $Q_{k}$. If, in addition, $\chi_{k}$ satisfies the causality condition \eqref{eq:causality}, so does
$\mathcal{J}_{k}$. We conclude from Theorem~\ref{thm::pkl:joint gaussian equivalent} that there
exists a causal linear filter $J_k$ that achieves the same
expected objective $\C$ as $\chi_k$.

Now, note again that from \eqref{eq:D_k::Lyapunov_general}, for perfect-perceptual quality causal filters, the objective $C$ is a continuous function of the covariance matrix \begin{equation}
    \EEb{\mathcal{J}_0^T \left(\II_0^T\right)^\top}
        =
        \begin{bmatrix}
        \mathrm{diag}\{P_{0},Q_{1},\ldots,Q_{T}\} & L \\ L^\top & \mathrm{diag}\{S_{0},S_{1},\ldots,S_{T}\}
        \end{bmatrix}
        \succeq 0,
    \end{equation}
    where, due to the causality demand, $L$ is a quasi lower triangular matrix. The set of such feasible matrices is non-empty, closed (since it is the intersection of the closed cone of PSD matrices with a finite set of hyperplanes) and bounded. Hence, $C$ attains a minimal value on some joint distribution $p_{\mathcal{J}_{0}^{T},Y_{0}^{T}}$, which can be chosen to be joint-Gaussian as we have seen.

\end{proof}

\newpage
\section{A Direct optimization approach to perfect-perceptual quality filtering}
\label{app::direct approach}

For the sake of completeness, we now discuss a method for optimizing non-recursive perfect-perceptual quality filter coefficients. This approach leads to convex programs. However, as we will see next, it might become impractical for large configurations. 

Let $J=J_0^T \sim \ndist{0}{Q} $, where $Q=\diag{\{Q_k\}_{k=0}^T}$, be a causal function of the measurements, $J=\varPhi \II + W$, where $\II=\II_0^T$ is the innovation process with covariance $S=\diag{S_k}$ and $W$ is an independent noise. Now, $\hat{X}=\hat{X}_0^T=A_J J$ is the filter's output, where
\begin{equation}
A_J = \begin{bmatrix}
I & 0 & \dots & 0 \\
A_1 & I & \dots & 0  \\
\vdots & \vdots & \ddots & \vdots
\\
\prod_{k=0}^{T-1}A_{T-k} & \prod_{k=1}^{T-1}A_{T-k} & \dots & I  
\end{bmatrix}.
\end{equation}

Recall $X^*$ is the Kalman filter output given by $X^* = A_{J}K\II$, where $K=\diag{K_k}$. Let
$\W=\diag{\w_k}\otimes I
_{n_x}$ be a weighting matrix. 
The objective \eqref{eq::cost_C:def} is now given by
\begin{align}
\C(\hat{X})&=\EEb{(\hat{X}-X^*)^\T \W (\hat{X}-X^*)} \nonumber\\ &= \tr{\W \EEb{\hat{X}\hat{X}^T} + \W \EEb{X^*X^{*\T}} -2\W\EEb{\hat{X}X^{*\T}} }.
\end{align}
 Hence, we have to maximize 
\begin{align}
\C(\varPhi)&= 2\tr{\W\EEb{\hat{X}X^{*\T}} } \nonumber\\&=2\tr{\W A_J\varPhi S K^\T A_{J}^\T} \nonumber\\& =2\tr{ (\varPhi S) K^\T A_{J}^\T \W A_J} \nonumber\\&=2\tr{ \varPhi S K^\T B},
\end{align}
where $B\triangleq A_{J}^\T \W A_J$. This is subject to the perfect perceptual-quality constraint
\begin{equation}
Q-\varPhi S \varPhi^\T \succeq 0, \ \text{or equivalently} \ \begin{bmatrix}
Q & \varPhi S \\
S \varPhi^\T & S
\end{bmatrix} \succeq 0,
\end{equation}
where $\varPhi$ is a lower quasi-triangular matrix (causality constraint)
\begin{equation}
\varPhi = \begin{bmatrix}
\varPhi_{0,0} & 0 & \dots & 0 \\
\varPhi_{1,0} & \varPhi_{1,1} & \dots & 0  \\
\vdots & \vdots & \ddots & \vdots
\\
\varPhi_{T,0} & \varPhi_{T,1} & \dots & \varPhi_{T,T}  
\end{bmatrix}.
\end{equation}

Again, under this formulation, 
\begin{equation}
J = \varPhi \II +W,
\end{equation}
where $W \sim \ndist{0}{Q-\varPhi S \varPhi^\T}$ is a Gaussian noise independent of $\II$. Note that $W_0^T$ might not be a white sequence in this case, since its covariance might not be a block-diagonal matrix. As a result, the noise sequence has to be sampled dependently. 
Also note that this problem possesses the same memory complexity as
\eqref{eq::pkl::Jk:innovatonUpsilon}. 
To conclude, this method leads to convex, but large optimization programs, and is impractical for high dimensional settings or long temporal sequences.
\newpage
\section{Derivation of recursive perfect-perceptual quality filters}\label{app:filter_deriv}

We now derive the recursive expression \eqref{eq:pkf::recursiveUpsSigma}-\eqref{eq:pkf::recursivePsi} for the filter given in \eqref{eq::pkl:J_k::fullJk},
\begin{align}
\hat{x}_{k}&=A_{k}\hat{x}_{k-1}+J_k,
\\
J_k &= \varPhi_{k}A_{k}\Upsilon_{k}+\Pi_{k}K_{k}\II_{k}+w_{k},\,w_{k}\sim\mathcal{N}\left(0,\Sigma_{w_{k}}\right),
\end{align}
 defined by the coefficients $\left\{ \Pi_{k},\varPhi_{t}\right\} _{t=0}^{T}$ fulfilling the constraints \eqref{eq::pkl:Sigma_w_full}.
Recall
\begin{equation}
\Upsilon_{k}\triangleq
\hat{x}_{k-1}^{*}-\EEb{\hat{x}_{k-1}^{*}|\hat{x}_{0},\ldots,\hat{x}_{k-1}}
=\hat{x}_{k-1}^{*}-\EEb{\hat{x}_{k-1}^{*}|{J}_{0},\ldots,{J}_{k-1}}
\end{equation}
where $\hat{x}_{k}^{*}$ is the Kalman state. ${J}_{0}^{k-1},\varUpsilon_{k},\II_k,w_k$ are jointly-Gaussian and independent, and we have
\begin{align}
    \EEb{J_n J_k^\T} &= Q_k \delta_{n=k},
    \\
    \EEb{\II_k J_k^\T} &= S_k K_k^\T \Pi_k^\T,
    \\
    \EEb{\varUpsilon_k J_k^\T} &= \Sigma_{\varUpsilon_k} A_k^\T \varPhi_k^\T.
\end{align}
We can write
\begin{align}
    \varUpsilon_{k+1}-A_{k}\varUpsilon_{k} & = \hat{x}^*_{k}-A_{k}\hat{x}^*_{k-1} - \left[ \EEb{\hat{x}^*_{k}|J_0^k}-A_{k}\EEb{\hat{x}^*_{k-1}|J_0^{k-1}} \right]
    \nonumber\\
    &=K_k\II_k - K_k \EEb{\II_k|J_0^k} - A_{k}
    \left[ \EEb{\hat{x}^*_{k-1}|J_0^k}-\EEb{\hat{x}^*_{k-1}|J_0^{k-1}} \right]
\end{align}
Since $J_0^k$ is an independent sequence, and since $\II_k$ depends only on $J_k$,
\begin{equation}
  K_k\EEb{\II_k|J_0^k} 
  = K_k\EEb{\II_k|J_k}
  = K_k S_k K_k^\T \Pi_k^\T  Q_k^\dagger J_k.  
\end{equation}
We also have that $\varUpsilon_k,J_k$ are independent of $J_0^{k-1}$, implying
\begin{align}
    \EEb{\hat{x}^*_{k-1}|J_0^k}-\EEb{\hat{x}^*_{k-1}|J_0^{k-1}} 
    &= \EEb{\hat{x}^*_{k-1}-\EEb{\hat{x}^*_{k-1}|J_0^{k-1}}|J_0^k}
    \nonumber\\&=
    \EEb{\varUpsilon_k|J_0^k} =
    \EEb{\varUpsilon_k|J_k}
    \nonumber\\
    &=
    \Sigma_{\varUpsilon_k} A_k^\T \varPhi_k^\T Q_k^\dagger J_k. 
\end{align}
Hence,
\begin{equation}
    \varUpsilon_{k+1} = A_k\varUpsilon_{k} +K_k \II_k -  \varPsi_k Q_k^\dagger J_k, 
\end{equation}
where we denote
\begin{equation}
\varPsi_k \triangleq M_k \Pi_k^\top +A_k \Sigma_{\varUpsilon_k} A_k^\T \varPhi_k^\top. 
\end{equation}

The covariance is then given by the recursive form
\begin{align}
\Sigma_{\varUpsilon_{k+1}} &= A_k \Sigma_{\varUpsilon_k} A_k^\top +M_k +\varPsi_k      Q_k^\dagger \varPsi_k^\T 
\nonumber\\
& - A_k \Sigma_{\varUpsilon_k} A_k^\T \varPhi_k^\T Q_k^\dagger \varPsi_k^\T
-
K_k S_k K_k^\T \Pi_k^\T Q_k^\dagger \varPsi_k^\T
\\
& - \left[ A_k \Sigma_{\varUpsilon_k} A_k^\T\varPhi_k^\T Q_k^\dagger \varPsi_k^\T \right]^\T
-
\left[ K_k S_k K_k^\T \Pi_k^\T Q_k^\dagger \varPsi_k^\T \right]^\T
\\
&= A_k \Sigma_{\varUpsilon_k} A_k^\top +M_k -\varPsi_k      Q_k^\dagger \varPsi_k^\T.
\end{align}

At time $k=0$ we have $\varUpsilon_0=0$ and $\Sigma_{\varUpsilon_0}=0$.

\begin{remark}[The non-reduced case]
For the full, non-reduced linear filter \eqref{eq::pkl::Jk:innovatonUpsilon}-    \eqref{eq::pkl:uii_k}
, we have the following similar formula
\begin{equation}
    \UII_k = \begin{bmatrix}
        \II_{k-1} \\ \UII_{k-1}
    \end{bmatrix}
    - 
    \begin{bmatrix}
        S_{k-1} & 0 \\ 0 & \Sigma_{\UII_{k-1}}
    \end{bmatrix}
    \begin{bmatrix}
        \pi_{k-1}^\T \\ \phi_{k-1}^\T
    \end{bmatrix}
    Q_{k-1}^\dagger J_{k-1} 
\end{equation}
and
\begin{equation}
    \Sigma_{\UII_k} = \begin{bmatrix}
        S_{k-1} & 0 \\ 0 & \Sigma_{\UII_{k-1}}
    \end{bmatrix}
    - 
    \begin{bmatrix}
        S_{k-1} & 0 \\ 0 & \Sigma_{\UII_{k-1}}
    \end{bmatrix}
    \begin{bmatrix}
        \pi_{k-1}^\T \\ \phi_{k-1}^\T
    \end{bmatrix}
    Q_{k-1}^\dagger 
    \begin{bmatrix}
        \pi_{k-1}^\T \\ \phi_{k-1}^\T
    \end{bmatrix}^\T
    \begin{bmatrix}
        S_{k-1} & 0 \\ 0 & \Sigma_{\UII_{k-1}}
    \end{bmatrix}.
\end{equation}
Notice, however, that the dimension of $\UII_k$ grows with time $k$.
\end{remark}
\newpage

\newpage
\section{A Generalized extremal problem with semidefinite constraints (proof of Thm.~\ref{Thm:pkf:optimal_perc_gain})}
\label{app::extremalSDP}

In this section we prove Theorem~\ref{Thm:pkf:optimal_perc_gain}. We start with a brief overview of the extremal problem of finding a minimal distance between distributions, and of general semi-definite programs.

To prove the Theorem we observe that \eqref{eq:stepk:traceproblem}, 
is a generalization of the extremal problem, and suggest a non-trivial dual form where, under our assumptions, strong duality holds.

\subsection{Minimal distance between distributions}

Consider two Gaussian distributions on $\mathbb{R}^{n}$ with
zero means and PSD covariance matrices $\Sigma_{1},\Sigma_{2}$ respectively.
We consider the problem of constructing a Gaussian vector $\left[X,Y\right]$
minimizing $\mathbb{E}\|X-Y\|^{2}$ while inducing the given marginal
distributions, 
$X\sim\mathcal{N}\left(0,\Sigma_{1}\right),Y\sim\mathcal{N}\left(0,\Sigma_{2}\right)
$. 
This problem is equivalent to the following maximization of correlation \citep{olkin1982distance}
\begin{align}
\tr{ 2\Pi}  & \rightarrow\max_\Pi,\quad \st \,\Sigma=\left[\begin{array}{cc}
\Sigma_{1} & \Pi\\
\Pi^\T & \Sigma_{2}
\end{array}\right]\succeq0.\label{eq:Olkin::maxCorreProblem}
\end{align}
We have the following results of \citet{olkin1982distance}.
\begin{lemma} \label{Olkin-1982,-Lemma}
\citep[Lemma 1]{olkin1982distance}. Let $\Sigma_{2}^{g}$ be any generalized inverse of $\Sigma_{2}$. Then $\Sigma\succeq 0$ iff 
\begin{equation}
\Sigma_{2}\Sigma_{2}^{g}\Pi^{\T}=\Pi^{\T}\label{eq:olkin_lem:cond1}
\, \mathrm{and} \, 
\Sigma_{1}-\Pi\Sigma_{2}^{g}\Pi^{\T}\succeq0.
\end{equation}
\end{lemma}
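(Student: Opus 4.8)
The plan is to prove the claim through a generalized Schur-complement argument, reducing the positive semidefiniteness of the block matrix $\Sigma$ to a completion-of-squares identity for the scalar quadratic form $\begin{bmatrix}u\\v\end{bmatrix}^\T \Sigma \begin{bmatrix}u\\v\end{bmatrix}=u^\T\Sigma_1 u + 2u^\T\Pi v + v^\T\Sigma_2 v$. Working directly with vectors rather than with congruence matrices sidesteps the technical annoyance that a generalized inverse $\Sigma_2^g$ (satisfying only $\Sigma_2\Sigma_2^g\Sigma_2=\Sigma_2$) need not be symmetric.

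First I would establish the range condition $\Sigma_2\Sigma_2^g\Pi^\T=\Pi^\T$ as a \emph{necessary} consequence of $\Sigma\succeq0$. Taking $v\in\Ker\Sigma_2$ and evaluating the quadratic form on $[u;\,tv]$ yields $u^\T\Sigma_1 u + 2t\,u^\T\Pi v$, which remains nonnegative for all $t\in\R$ only if $u^\T\Pi v=0$ for every $u$, i.e. $\Pi v=0$. Hence $\Ker\Sigma_2\subseteq\Ker\Pi$, equivalently $\im{\Pi^\T}\subseteq\im{\Sigma_2}$ (using $\Sigma_2=\Sigma_2^\T$). Since $\Sigma_2\Sigma_2^g$ fixes every element of $\im{\Sigma_2}$ (as $\Sigma_2\Sigma_2^g\Sigma_2 w=\Sigma_2 w$), this inclusion is exactly the stated identity $\Sigma_2\Sigma_2^g\Pi^\T=\Pi^\T$, and conversely that identity forces the columns of $\Pi^\T$ into $\im{\Sigma_2}$.

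Under the range condition I would then factor $\Pi^\T=\Sigma_2 B$ for some $B$ (each column of $\Pi^\T$ lies in $\im{\Sigma_2}$). This factorization pays off twice. It shows $\Pi\Sigma_2^g\Pi^\T=B^\T\Sigma_2\Sigma_2^g\Sigma_2 B=B^\T\Sigma_2 B$, which is independent of the particular $g$-inverse, and it lets me complete the square to obtain the identity
\[\begin{bmatrix}u\\v\end{bmatrix}^\T \Sigma \begin{bmatrix}u\\v\end{bmatrix} = u^\T\!\left(\Sigma_1 - B^\T\Sigma_2 B\right)\!u + (v+Bu)^\T\Sigma_2(v+Bu).\]

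With this identity both directions are immediate. If the range condition holds and $\Sigma_1-\Pi\Sigma_2^g\Pi^\T=\Sigma_1-B^\T\Sigma_2 B\succeq0$, then the right-hand side is a sum of two nonnegative terms (recall $\Sigma_2\succeq0$), so $\Sigma\succeq0$; conversely, if $\Sigma\succeq0$ the range condition is already in hand, and substituting $v=-Bu$ annihilates the second term, leaving $u^\T(\Sigma_1-\Pi\Sigma_2^g\Pi^\T)u\ge0$ for all $u$. I expect the main obstacle to be the bookkeeping around the generalized inverse: proving the range inclusion cleanly and, crucially, verifying that $\Pi\Sigma_2^g\Pi^\T$ is well-defined regardless of the choice of $\Sigma_2^g$. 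The factorization $\Pi^\T=\Sigma_2 B$ is precisely the device that renders both of these transparent.
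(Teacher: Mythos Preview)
The paper does not supply its own proof of this lemma; it is quoted verbatim from \citet{olkin1982distance} and used as a black box in Section~\ref{app::extremalSDP}. So there is no in-paper argument to compare against.

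Your proof is correct and is exactly the standard generalized Schur-complement argument. The necessity of the range condition via $[u;tv]$ with $v\in\Ker\Sigma_2$, the factorization $\Pi^\T=\Sigma_2 B$ to render $\Pi\Sigma_2^g\Pi^\T=B^\T\Sigma_2 B$ independent of the choice of $g$-inverse, and the completion of squares are all sound. One small point worth making explicit in a final write-up: you use that $\Sigma_2$ is symmetric when passing from $\Ker\Sigma_2\subseteq\Ker\Pi$ to $\im{\Pi^\T}\subseteq\im{\Sigma_2}$; this is fine here since $\Sigma_2$ is a covariance, but it is the only place the symmetry of $\Sigma_2$ enters before the square-completion step.
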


\begin{theorem}
\label{thm:=00005BOlkin-1982,-Thm.4=00005D}\citep[Thm. 4]{olkin1982distance}.
If $\im{\Sigma_{2}}\subseteq \im{\Sigma_{1}}$, then an optimal
solution to (\ref{eq:Olkin::maxCorreProblem}) is given by
\begin{equation}
\max_\Pi \tr{ 2\Pi} =2\tr{ \left(\Sigma_{2}^{1/2}\Sigma_{1}\Sigma_{2}^{1/2}\right)^{1/2}} ,\label{eq:olkin_thm:optimal_value}
\end{equation}
achieved by the argument
\begin{equation}
\Pi^{*}=\Sigma_{1}\Sigma_{2}^{1/2}\left[\left(\Sigma_{2}^{1/2}\Sigma_{1}\Sigma_{2}^{1/2}\right)^{1/2}\right]^{g}\Sigma_{2}^{1/2}.\label{eq:olkin_thm:optimal_correlation}
\end{equation}
In the case where $\im{\Sigma_{2}}=\im{\Sigma_{1}}$, $\Pi^{*}$
is a unique optimal argument.
\end{theorem}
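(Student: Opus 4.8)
The plan is to use Lemma~\ref{Olkin-1982,-Lemma} to convert the semidefiniteness of the block matrix in \eqref{eq:Olkin::maxCorreProblem} into an explicit constraint on $\Pi$, and then reduce the resulting trace maximization to the classical duality between operator and nuclear norms. Taking the generalized inverse in \eqref{eq:olkin_lem:cond1} to be the Moore--Penrose inverse $\Sigma_2^\dagger$, feasibility is equivalent to $\im{\Pi^\T}\subseteq\im{\Sigma_2}$ together with $\Sigma_1-\Pi\Sigma_2^{\dagger}\Pi^\T\succeq0$. From the latter, $0\preceq\Pi\Sigma_2^{\dagger}\Pi^\T\preceq\Sigma_1$ forces $\im{\Pi}\subseteq\im{\Sigma_1}$ as well (a standard fact for the PSD order, using the range condition on $\Pi^\T$ to identify $\im{\Pi\Sigma_2^{\dagger}\Pi^\T}=\im{\Pi}$). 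Writing $P_i=\Sigma_i^{\hlf}(\Sigma_i^{\hlf})^\dagger$ for the orthogonal projection onto $\im{\Sigma_i}$, these two range conditions let me factor any feasible $\Pi$ as $\Pi=\Sigma_1^{\hlf}H\Sigma_2^{\hlf}$, with $H=(\Sigma_1^{\hlf})^\dagger\Pi(\Sigma_2^{\hlf})^\dagger$ satisfying $H=P_1HP_2$.

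First I would substitute this factorization into the constraint. Using $\Sigma_2^{\hlf}\Sigma_2^{\dagger}\Sigma_2^{\hlf}=P_2$ and $HP_2=H$, the inequality $\Sigma_1-\Pi\Sigma_2^{\dagger}\Pi^\T\succeq0$ collapses to $\Sigma_1^{\hlf}(P_1-HH^\T)\Sigma_1^{\hlf}\succeq0$, which holds exactly when $H$ is a contraction, i.e. $HH^\T\preceq P_1$. The objective becomes $\tr{2\Pi}=2\tr{\Sigma_2^{\hlf}\Sigma_1^{\hlf}H}=2\tr{NH}$ with $N\triangleq\Sigma_2^{\hlf}\Sigma_1^{\hlf}$, by cyclicity of the trace. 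Hence the whole problem reduces to maximizing $2\tr{NH}$ over contractions $H$.

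Next I would invoke the standard variational identity $\max_{\|H\|_{\mathrm{op}}\le1}\tr{NH}=\|N\|_*=\tr{(N^\T N)^{\hlf}}$, attained at the partial isometry $H^*=VU^\T$ arising from a reduced SVD $N=UDV^\T$ (so that $NH^*=UDU^\T$ has trace $\tr{D}$). Since $N^\T N=\Sigma_1^{\hlf}\Sigma_2\Sigma_1^{\hlf}$ and $NN^\T=\Sigma_2^{\hlf}\Sigma_1\Sigma_2^{\hlf}$ share the same nonzero spectrum, the optimal value equals $2\tr{(\Sigma_2^{\hlf}\Sigma_1\Sigma_2^{\hlf})^{\hlf}}$, matching \eqref{eq:olkin_thm:optimal_value}. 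To recover the argument \eqref{eq:olkin_thm:optimal_correlation}, I would either translate $\Pi^*=\Sigma_1^{\hlf}H^*\Sigma_2^{\hlf}$ through the SVD and simplify using $[(\Sigma_2^{\hlf}\Sigma_1\Sigma_2^{\hlf})^{\hlf}]^{g}$, or---more cleanly---directly verify that the stated $\Pi^*$ satisfies both conditions of Lemma~\ref{Olkin-1982,-Lemma} and attains the value \eqref{eq:olkin_thm:optimal_value}; the latter route sidesteps the choice of a particular SVD and the ambiguity of the generalized inverse $[\,\cdot\,]^{g}$.

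Finally, for the uniqueness claim when $\im{\Sigma_2}=\im{\Sigma_1}$, I would argue that on the common range $N$ has full rank, so the maximizing contraction is forced to be the unique polar factor of $N$; consequently $H^*$, and hence $\Pi^*$, is uniquely determined. The main obstacle I anticipate is the pseudoinverse and range bookkeeping in the singular case---validating the factorization $\Pi=\Sigma_1^{\hlf}H\Sigma_2^{\hlf}$ and confirming that the contraction constraint is \emph{exactly} equivalent to the PSD constraint---together with checking that the closed form \eqref{eq:olkin_thm:optimal_correlation}, written with an arbitrary generalized inverse of the matrix square root, indeed realizes the optimum.
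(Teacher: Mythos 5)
Your proposal is correct, and it takes a genuinely different route from the paper's treatment. The paper itself never proves this theorem---it is imported from \citet{olkin1982distance}---and the paper's own proof of its generalization (Lemma~\ref{lem:duality} and Theorem~\ref{thm::Strong-duality}, which recover this statement when $B=I$) is a dual-certificate argument in the original Olkin--Pukelsheim style: a weak-duality inequality extracted from the block-PSD matrix \eqref{eq:X_PSD}, followed by exhibiting an explicit dual pair $(S^{*},S^{-*})$ and a primal-feasible $\Pi^{*}$ whose objective values coincide. You instead parametrize the feasible set via Lemma~\ref{Olkin-1982,-Lemma} and range bookkeeping, writing $\Pi=\Sigma_{1}^{\hlf}H\Sigma_{2}^{\hlf}$ with $H=P_{1}HP_{2}$ a contraction, and reduce to the nuclear-norm identity $\max_{\|H\|_{\mathrm{op}}\leq1}\tr{NH}=\tr{(N^{\T}N)^{\hlf}}$ with $N=\Sigma_{2}^{\hlf}\Sigma_{1}^{\hlf}$, attained at the polar factor. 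I checked your delicate steps and they all hold: (i) the PSD constraint is \emph{exactly} equivalent to $HH^{\T}\preceq P_{1}$, because $HH^{\T}$ is supported on $\im{P_{1}}$, so positivity of $\Sigma_{1}^{\hlf}(P_{1}-HH^{\T})\Sigma_{1}^{\hlf}$ (which tests the quadratic form only on $\im{\Sigma_{1}}$) suffices; (ii) the side constraint $H=P_{1}HP_{2}$ costs nothing in the maximization, since $N=P_{2}NP_{1}$ implies $\tr{NH}=\tr{N(P_{1}HP_{2})}$---state this explicitly; (iii) for uniqueness you need the Cauchy--Schwarz rigidity step: equality $\tr{NH}=\|N\|_{*}$ with $\|H\|_{\mathrm{op}}\leq1$ forces $Hu_{i}=v_{i}$ for every singular pair of $N=UDV^{\T}$ with $d_{i}>0$, pinning $H$ down on $\im{N}$; under $\im{\Sigma_{1}}=\im{\Sigma_{2}}$ one has $\im{N}=\im{\Sigma_{2}}$, and the support constraint kills $H$ off $\im{N}$, so $H$ and hence $\Pi$ are unique. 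As for what each approach buys: yours delivers the optimal value, the optimizer, and the uniqueness claim in one stroke from a standard variational fact, with a transparent geometric picture (optimal couplings are polar factors); the paper's dual-certificate route is the one it actually reuses for the weighted objective $\tr{\Pi_{k}M_{k}B_{k}}$ in \eqref{eq:stepk:traceproblem} under the assumption $\im{B_{k}M_{k}B_{k}}\subseteq\im{Q_{k}}$ of Theorem~\ref{Thm:pkf:optimal_perc_gain}, and it copes naturally with arbitrary generalized inverses. Indeed, the hypothesis $\im{\Sigma_{2}}\subseteq\im{\Sigma_{1}}$ is precisely what makes \eqref{eq:olkin_thm:optimal_correlation} independent of the choice of $[\,\cdot\,]^{g}$ (with the Moore--Penrose choice, your SVD translation gives $\Pi^{*}=\Sigma_{1}^{\hlf}H^{*}\Sigma_{2}^{\hlf}$ even without that hypothesis); this is exactly the degenerate-case detail your plan flags and must verify by hand.
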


Under the setting discussed in Sec.~\ref{sec::prelim:DP}, Theorem~\ref{thm:=00005BOlkin-1982,-Thm.4=00005D}
implies that in the more general case where $\Sigma_x \succeq 0$, the MSE-optimal perfect perceptual-quality estimator \eqref{eq::Gaussians:optimalT} is obtained by 
\begin{equation} \hat{x}=\TT^*x^*+w, \, \label{appeq::Gaussians:optimalT}
\quad {\TT}^{*}\triangleq \Sigma_{x}\Sigma_{x^*}^{\hlf}(\Sigma_{x^*}^{\hlf}\Sigma_{x}\Sigma_{x^*}^{\hlf})^{\hlf\dagger}\Sigma_{x^*}^{\hlf\dagger}.
\end{equation}
Here again, $w$ is a zero-mean Gaussian noise with covariance 
$\Sigma_{w}=\Sigma_{x}-\TT^{*}\Sigma_{x^*}^{}\TT^{*\T}$,
independent of $y$ and $x$, and $\Sigma_{x^*}^{\dagger}$ is the Moore-Penrose inverse of $\Sigma_{x^*}$.

\subsection{SDP Setting and duality - background}

\textit{Semi-definite programming} (SDP) \citep{freund2004introduction,vandenberghe1996semidefinite} is an optimization problem in
$X\in\mathbb{R}^{n\times n}$ of the form
\begin{align}
C\bullet X & \rightarrow \max_{X}\label{eq:SDP}\\
\st \ & A_{i}\bullet X=b_{i}\,,i=1,\ldots,m,\\
 & X\succeq0.
\end{align}
Here, $C,A_{i}$ are real symmetric matrices of appropriate dimensions,
and $A\bullet X=\mathrm{Tr}\{A^{\T}X\}$ is the Frobenius product. SDPs yield the Lagrangian 
\begin{align}
L(X,\lambda,\nu)
 &= \nu^{\T}b+\left(C-\sum_{i=0}^{m}\nu_{i}A_{i}\right)\bullet X+\lambda\rho_{min}(X) \nonumber\\
&= \nu^{\T}b+\left(C-\sum_{i=0}^{m}\nu_{i}A_{i}\right)\bullet X+\min_{Y\succeq0,\mathrm{Tr}Y=\lambda}Y\bullet X,
\end{align}
where $\lambda\geq0$ and $\rho_{min}$ is the minimal eigenvalue. The Dual
problem (DSP) is given by
\begin{equation}
\nu^{\T}b\rightarrow\min_\nu,\quad\st\,C-\sum_{i=0}^{m}\nu_{i}A_{i}\preceq0.\label{eq:SDP::Dual}
\end{equation}
In this case, strong duality exists iff the SDP is strictly feasible, \textit{i.e.} it
has a feasible solution interior to the feasible set, $X\succ0$. This condition is sometimes referred to as the \textit{Slater condition}.

\subsection{A generalized extremal problem with strong duality} \label{sec:Ageneralizedextremal}

Recall $Q_k,M_k,B_k$ are real, symmetric positive semidefinite $n\times n$ matrices,
and the optimization problem \eqref{eq:stepk:traceproblem}, 
\begin{equation}
\label{eq:stepk:traceproblem_APP}
\tr{\tilde{\Pi}_{k}M_{k}B_{k}}=\tr{\tilde{\Pi}_{k}M_{k}M_{k}^{\dagger}M_{k}B_{k}}\rightarrow\max_{\tilde{\Pi}_{k}},\,\st\,Q_{k}-\tilde{\Pi}_{k}M_{k}\tilde{\Pi}_{k}^{\T}\succeq0.
\end{equation}
Since \eqref{eq:stepk:traceproblem_APP} involves a single time step, we will omit the index $k$. 

We consider $\Pi=\tilde{\Pi}M$, hence $\Pi^{\T}=MM^{\dagger}\Pi^{\T}$,
and since $M=MM^{\dagger}M$ we rewrite \eqref{eq:stepk:traceproblem_APP} as
\begin{equation}
\tr{\Pi B}=\tr{B\Pi}\rightarrow\max_\Pi,\,\st,\,Q-\Pi M^{\dagger}\Pi^{\T}\succeq0,\,\Pi^{\T}=MM^{\dagger}\Pi^{\T}.\label{eq:extrem_prob}
\end{equation}
By Lemma \ref{Olkin-1982,-Lemma}, the constraints in \eqref{eq:extrem_prob} are equivalent
to 
\begin{equation}
X\triangleq\begin{bmatrix}\begin{array}{cc}
Q & \Pi\\
\Pi^\T & M
\end{array}\end{bmatrix}\succeq0.\label{eq:X_PSD}
\end{equation}
 This can be formulated as the semi-definite program, 
\begin{equation}
C\bullet X\rightarrow\max_X,\,\st\,\begin{cases}
A_{ij}^{Q}\bullet X & =Q_{ij}\\
A_{ij}^{M}\bullet X & =M_{ij}
\end{cases},0\leq i\leq j\leq n-1,\,X\succeq0,\label{eq:extem_prob:SDPform}
\end{equation}
where $C=\hlf \left[\begin{array}{cc}
0 & B\\
B & 0
\end{array}\right]$, and $A_{ij}^{Q}=\begin{bmatrix}\begin{array}{cc}
\Lambda_{ij} & 0\\
0 & 0
\end{array}\end{bmatrix},A_{ij}^{M}=\begin{bmatrix}\begin{array}{cc}
0 & 0\\
0 & \Lambda_{ij}
\end{array}\end{bmatrix}$, $\Lambda_{ij}=\frac{1}{2}(e_{ij}+e_{ji})$. 

Note that when $B$ is a scalar
matrix, \eqref{eq:extrem_prob} is similar to the problem studied in \citet{olkin1982distance}. Their approach was later extended by \citet{shapiro1985extremal} to general linear objectives, where the Slater condition holds.

\subsubsection{Strong duality}

The SDP \eqref{eq:extem_prob:SDPform} yields the standard dual formulation
\begin{equation}
Q\bullet\nu_{Q}+M\bullet\nu_{M}\rightarrow\min_{\nu_{Q},\nu_{M}},\,\st\,\begin{bmatrix}\begin{array}{cc}
\nu_{Q} & -\hlf B\\
-\hlf B & \nu_{M}
\end{array}\end{bmatrix}\succeq0,\nu_{Q},\nu_{M}\in\mathbb{R}^{n_{x}\times n_{x}}.\label{eq:extrem_prob:DPform}
\end{equation}
This should give us a hint about the optimal solution to \eqref{eq:extrem_prob}.
Pay attention, however, that according to the theory, strong duality in
\eqref{eq:extrem_prob:DPform} is guaranteed only if $Q,M\succ0$,
which might not be the case (see \textit{e.g.} \citep{shapiro1985extremal}). To get a tight bound for the general case
$Q,M\succeq0$, we now provide an alternative form of duality to
\eqref{eq:extrem_prob}.

The following is an adaptation of techniques used in \citet{olkin1982distance}. We start with the following Lemma. 
\begin{lemma}
\label{lem:duality}Let $\Pi$ be a feasible solution to (\ref{eq:extrem_prob}),
$R,G\in\mathbb{R}^{n_{x}\times n_{x}}$ are general matrices. Then,
\begin{equation}
\tr{QRR^{\T}+BMBGG^{\T}}\geq2\tr{\Pi BGR^{\T}}.
\end{equation}
\end{lemma}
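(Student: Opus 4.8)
The plan is to exploit the single positive-semidefiniteness certificate that already encodes feasibility, and to test it against a cleverly chosen block matrix. Recall that by Lemma~\ref{Olkin-1982,-Lemma} applied with $\Sigma_1 = Q$ and $\Sigma_2 = M$, the two constraints defining the feasible set in \eqref{eq:extrem_prob}, namely $\Pi^\T = MM^\dagger \Pi^\T$ and $Q - \Pi M^\dagger \Pi^\T \succeq 0$, are together equivalent to the block condition \eqref{eq:X_PSD},
\begin{equation}
X = \begin{bmatrix} Q & \Pi \\ \Pi^\T & M \end{bmatrix} \succeq 0.
\end{equation}
Thus any feasible $\Pi$ gives us one PSD matrix $X$, and the entire inequality should follow from a single application of $Z^\T X Z \succeq 0$ for a suitable $Z$.

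The key step is to choose the stacked test matrix
\begin{equation}
Z = \begin{bmatrix} R \\ -BG \end{bmatrix},
\end{equation}
built from the two given matrices $R,G$ together with $B$ (which is symmetric PSD, so $B^\T = B$). Since $X \succeq 0$ we have $Z^\T X Z \succeq 0$, and in particular $\tr{Z^\T X Z} \geq 0$. I would then simply expand the quadratic form, obtaining
\begin{equation}
\tr{Z^\T X Z} = \tr{R^\T Q R} + 2\tr{R^\T \Pi(-BG)} + \tr{(BG)^\T M (BG)}.
\end{equation}
Here the two off-diagonal contributions coincide under the trace because $\tr{W^\T \Pi^\T R} = \tr{R^\T \Pi W}$, which is what produces the factor of $2$.

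The last step is a routine rearrangement using the cyclic property of the trace and the symmetry of $B$: the first term is $\tr{Q R R^\T}$, the cross term is $-2\tr{\Pi B G R^\T}$, and the third term is $\tr{G^\T B M B G} = \tr{B M B G G^\T}$. Nonnegativity of the whole expression then reads
\begin{equation}
\tr{Q R R^\T + B M B G G^\T} - 2\tr{\Pi B G R^\T} \geq 0,
\end{equation}
which is exactly the claim. There is essentially no analytic obstacle once feasibility is rephrased as $X \succeq 0$; the only real content is guessing the correct block structure of $Z$, and in particular that the bottom block must be $-BG$ (rather than, say, $-G$) so that the diagonal $M$-term becomes $B M B G G^\T$ and the cross term reproduces the target quantity $\tr{\Pi B G R^\T}$. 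This identity is precisely the inequality whose tightness, for appropriate choices of $R$ and $G$, will later certify strong duality for the extremal problem \eqref{eq:extrem_prob}.
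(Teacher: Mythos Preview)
Your proposal is correct and follows essentially the same approach as the paper's own proof: both exploit the block PSD certificate $X\succeq 0$ from \eqref{eq:X_PSD}, test it against the stacked matrix $Z=\begin{bmatrix}R\\-BG\end{bmatrix}$, expand the quadratic form, and take the trace. The only difference is presentational---you spell out the cyclic-trace rearrangement and the role of $B^\T=B$ a bit more explicitly than the paper does.
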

\begin{proof}
From the non-negativity of $X$ in \eqref{eq:X_PSD} we have
\begin{equation}
\begin{array}{l}
\left[R^{\T},-G^{\T}B\right]\begin{bmatrix}\begin{array}{cc}
Q & \Pi\\
\Pi^\T & M
\end{array}\end{bmatrix}\left[\begin{array}{c}
R\\
-BG
\end{array}\right]=\\
R^{\T}QR+G^{\T}BMBG-R^{\T}\Pi BG-G^{\T}B\Pi^{\T}R\succeq0.
\end{array}
\end{equation}
The trace is nonegative, hence we have the desired result.
\end{proof}
\begin{remark}
Similarly, we can obtain
\begin{equation}
\tr{QBRR^{\T}B+MGG^{\T}}\geq2\tr{B\Pi GR^{\T}}.
\end{equation}
\end{remark}
Now, we suggest an alternative to (DSP) \eqref{eq:extrem_prob:DPform},
where strong duality will hold.
\begin{theorem}
\label{thm::Strong-duality}{[}Strong duality{]}. Let
\begin{align}
\Omega & =\left\{\Pi\in\mathbb{R}^{n_{x}\times n_{x}}:Q-\Pi M^{\dagger}\Pi^{\T}\succeq0,\Pi^{\T}=MM^{\dagger}\Pi^{\T}\right\},\\
\mathcal{S}= & \left\{\left(S,S^{-}\right):S,
S^{-}\succeq 0,
SS^{-}S=S,S^{-}SS^{-}=S^{-},\,BM=SS^{-}BM\right\},
\end{align}
and denote $M_{B}\triangleq BMB$. Assume $\im{M_{B}}\subseteq \im{Q}$.
Then,
\begin{equation}
\begin{array}{c}
\min_{\left(S,S^{-}\right)\in\mathcal{S}}\left\{ Q\bullet S+M\bullet\left(BS^{-}B\right)\right\} =\max_{\Pi\in\Omega}\tr{2\Pi B} \\ =2\tr{ \left(M_{B}^{1/2}QM_{B}^{1/2}\right)^{1/2} }.
\end{array}
\end{equation}
The extreme value is obtained for
\begin{align}
S^{*}&=M_{B}^{1/2}\left(M_{B}^{1/2}QM_{B}^{1/2}\right)^{1/2\dagger}M_{B}^{1/2},\\
S^{-*}&=M_{B}^{1/2\dagger}\left(M_{B}^{1/2}QM_{B}^{1/2}\right)^{1/2}M_{B}^{1/2\dagger},\\
\Pi^{*}&=QS^{*}M_{B}^{\dagger}BM=QM_{B}^{1/2}\left(M_{B}^{1/2}QM_{B}^{1/2}\right)^{1/2\dagger}M_{B}^{1/2\dagger}BM.
\end{align}
Optimal solution $\Pi^{*}$ is generally not unique.
\end{theorem}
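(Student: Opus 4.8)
The plan is to prove the claimed equality by establishing weak duality between the primal problem \eqref{eq:extrem_prob} over $\Omega$ and the proposed dual over $\mathcal{S}$, and then exhibiting the explicit pair $(S^*,S^{-*})$ together with $\Pi^*$ as a matching saddle point that forces the gap to close. It is worth noting at the outset that the target value $2\tr{(M_B^{1/2}QM_B^{1/2})^{1/2}}$ is precisely the Olkin extremal value of Theorem~\ref{thm:=00005BOlkin-1982,-Thm.4=00005D} with $\Sigma_1=Q$ and $\Sigma_2=M_B=BMB$, so morally the problem is a $B$-reweighted correlation maximization; the only reason we cannot simply invoke the standard SDP dual \eqref{eq:extrem_prob:DPform} (or Slater's condition) is that $Q$, $M$, and $B$ may all be rank deficient. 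The nonstandard dual set $\mathcal{S}$, built from reflexive generalized-inverse pairs plus the compatibility condition $BM=SS^-BM$, is engineered exactly to restore a tight bound in this singular regime.

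For the weak-duality direction, I would fix any feasible $\Pi\in\Omega$ and any $(S,S^-)\in\mathcal{S}$ and factor the PSD dual variables as $S=RR^\T$ and $S^-=GG^\T$. Because $S^-$ is a \emph{reflexive} generalized inverse of $S$, these full-rank factors can be taken biorthogonal, $R^\T G=G^\T R=I$ on the common range, so that $RG^\T$ is the identity on $\im{S}$. Substituting into Lemma~\ref{lem:duality} gives $Q\bullet S+M\bullet(BS^-B)=\tr{QRR^\T}+\tr{M_B GG^\T}\geq 2\tr{\Pi B G R^\T}$, and the remaining task is to show the cross term collapses to $2\tr{\Pi B}$. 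This is the technical heart: feasibility forces the block matrix \eqref{eq:X_PSD} to be PSD, hence $\im{\Pi^\T}\subseteq\im{M}$, which combined with $\im{BM}\subseteq\im{S}$ (the compatibility condition, since $SS^-$ is idempotent with range $\im S$) places $\im{(\Pi B)^\T}$ inside $\im{S}$; the biorthogonality then yields $\Pi B = \Pi B\,GR^\T$ and therefore $\tr{\Pi B G R^\T}=\tr{\Pi B}$. This establishes $\min_{\mathcal{S}}\geq\max_{\Omega}$.

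To close the gap I would verify that the explicit candidates attain this common bound. For the dual: $S^*,S^{-*}\succeq 0$ by construction and are mutually reflexive generalized inverses, while the compatibility $BM=S^*S^{-*}BM$ follows by computing $S^*S^{-*}=P_{M_B}$ (the orthogonal projection onto $\im{M_B}$) and using that $\im{BM}\subseteq\im{BM^{1/2}}=\im{M_B}$ for symmetric PSD $M$ --- here the assumption $\im{M_B}\subseteq\im{Q}$ enters, since it guarantees $\im{(M_B^{1/2}QM_B^{1/2})^{1/2}}=\im{M_B}$ so that $S^*$ has full range and the projection identities close up. A direct trace computation, repeatedly using $BMB=M_B$ and $M_B^\dagger M_B=P_{M_B}$, then gives $Q\bullet S^*+M\bullet(BS^{-*}B)=2\tr{(M_B^{1/2}QM_B^{1/2})^{1/2}}$. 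For the primal: the range condition $\Pi^{*\T}=MM^\dagger\Pi^{*\T}$ holds because $\Pi^{*\T}$ begins with the factor $MB$, and $\tr{2\Pi^* B}=2\tr{QS^*M_B^\dagger BMB}=2\tr{QS^*}=2\tr{(M_B^{1/2}QM_B^{1/2})^{1/2}}$ as well. Since a primal-feasible point and a dual-feasible point produce equal objective values, weak duality forces both to be optimal.

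I expect the main obstacle to be the bookkeeping of ranges and pseudo-inverses in the singular case. Two points are delicate. First, the cross-term collapse in weak duality requires care because $SS^-$ need not be symmetric for a general $(S,S^-)\in\mathcal{S}$; the biorthogonal re-choice of factors (replacing $G$ by $GU^\T$ with $U=R^\T G$ orthogonal) is what makes $GR^\T$ act as the identity on $\im S$. Second, and most importantly, verifying the semidefinite constraint $Q-\Pi^*M^\dagger\Pi^{*\T}\succeq0$: substituting $\Pi^*=QS^*M_B^\dagger BM$ and simplifying reduces $\Pi^*M^\dagger\Pi^{*\T}$ to $Q^{1/2}\big(F^\T(FF^\T)^\dagger F\big)Q^{1/2}$ with $F=M_B^{1/2}Q^{1/2}$, so the residual becomes $Q^{1/2}(I-P)Q^{1/2}\succeq0$ for the projection $P=F^\T(FF^\T)^\dagger F$; the assumption $\im{M_B}\subseteq\im{Q}$ is what keeps every Moore--Penrose manipulation valid. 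Non-uniqueness of $\Pi^*$ then follows routinely from the freedom in the generalized inverse appearing in the optimizer.
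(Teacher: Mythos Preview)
Your proposal is correct and follows essentially the same route as the paper: weak duality via Lemma~\ref{lem:duality}, then verification that the explicit triple $(\Pi^*,S^*,S^{-*})$ is feasible and attains the common value $2\tr{(M_B^{1/2}QM_B^{1/2})^{1/2}}$, with the range identities of Lemmas~\ref{lem:images_eq}--\ref{lem:images_ineq} handling the singular pseudo-inverse bookkeeping. The only difference is in the weak-duality step: you introduce general full-rank factorizations $S=RR^\T$, $S^-=GG^\T$ and then re-orthogonalize to make $R^\T G=I$, whereas the paper simply takes $R=S^{1/2}$ and $G=S^-S^{1/2}$, so that $GR^\T=S^-S$ directly and the cross term becomes $\tr{\Pi B S^-S}=\tr{\Pi B}$ by the compatibility condition $MB=MBS^-S$ (transpose of $BM=SS^-BM$) together with $\Pi=\Pi M^\dagger M$. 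This specific choice spares you the biorthogonal re-factorization but is otherwise the same argument.
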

To prove strong duality, we will use the following lemmas.
\begin{lemma}
\label{lem:images_eq}Assume PSD matrices $Q,M_{B}$ such that $\im{M_{B}}\subseteq \im{Q}$,
then $\im{M_{B}}=\im{M_{B}^{1/2}QM_{B}^{1/2}}$.
\end{lemma}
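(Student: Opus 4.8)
The plan is to reduce the claimed equality of images to a statement about kernels, exploiting that every matrix in sight is symmetric and positive semidefinite, so that image and kernel are orthogonal complements. First I would record the elementary facts that for a symmetric PSD matrix one has $\im{M_{B}}=\im{M_{B}^{1/2}}$ and $\Ker(M_{B})=\Ker(M_{B}^{1/2})$. These follow because $M_{B}^{1/2}v=0$ iff $v^{\T}M_{B}v=\|M_{B}^{1/2}v\|^{2}=0$ iff $M_{B}v=0$, which identifies the two kernels, and then symmetry turns this into the identification of the two images via orthogonal complementation.

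Writing $N\triangleq M_{B}^{1/2}QM_{B}^{1/2}$, one inclusion is immediate: every column of $N$ lies in $\im{M_{B}^{1/2}}=\im{M_{B}}$, so $\im{N}\subseteq\im{M_{B}}$. For the reverse inclusion $\im{M_{B}}\subseteq\im{N}$, I would pass to orthogonal complements, which is legitimate since both $N$ and $M_{B}$ are symmetric; the goal then becomes the equivalent statement $\Ker(N)\subseteq\Ker(M_{B})$. I would compute $\Ker(N)$ using positive semidefiniteness of $Q$: for any $v$, $Nv=0$ iff $v^{\T}Nv=\|Q^{1/2}M_{B}^{1/2}v\|^{2}=0$ iff $Q^{1/2}M_{B}^{1/2}v=0$, that is, $M_{B}^{1/2}v\in\Ker(Q^{1/2})=\Ker(Q)$.

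The key step is invoking the hypothesis $\im{M_{B}}\subseteq\im{Q}$. Given $v\in\Ker(N)$, the vector $M_{B}^{1/2}v$ lies in $\im{M_{B}^{1/2}}=\im{M_{B}}\subseteq\im{Q}$, while the kernel computation shows it also lies in $\Ker(Q)$. Since $Q$ is symmetric, $\im{Q}$ and $\Ker(Q)$ are orthogonal complements, so $\im{Q}\cap\Ker(Q)=\{0\}$; hence $M_{B}^{1/2}v=0$, i.e.\ $v\in\Ker(M_{B}^{1/2})=\Ker(M_{B})$. This yields $\Ker(N)\subseteq\Ker(M_{B})$, and therefore $\im{M_{B}}\subseteq\im{N}$, which together with the first inclusion gives $\im{M_{B}}=\im{N}$.

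I do not expect a serious obstacle, as the argument is a short kernel chase; the only subtlety worth flagging is that the orthogonality of image and kernel is applied only to the genuinely symmetric matrices $Q$, $N$, and $M_{B}$, and never to non-symmetric products such as $M_{B}^{1/2}Q$. The hypothesis $\im{M_{B}}\subseteq\im{Q}$ enters in exactly one place, to force the ``doubly located'' vector $M_{B}^{1/2}v$ into the trivial intersection, so this is the step I would state most carefully.
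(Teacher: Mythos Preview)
Your proof is correct and follows essentially the same kernel-chase as the paper: both show $\Ker(N)=\Ker(M_{B})$ by taking $v\in\Ker(N)$, deducing $M_{B}^{1/2}v\in\Ker(Q)$, and then using the hypothesis $\im{M_{B}}\subseteq\im{Q}$ to force $M_{B}^{1/2}v=0$. The only cosmetic difference is that the paper phrases the key step as the inclusion $\Ker(Q^{1/2})\subseteq\Ker(M_{B}^{1/2})$ (the orthogonal-complement form of the hypothesis), whereas you phrase it as $M_{B}^{1/2}v\in\im{Q}\cap\Ker(Q)=\{0\}$; these are equivalent one-line applications of the same assumption.
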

\begin{proof}
Recall $M_{B},M_{B}^{1/2}QM_{B}^{1/2}$ are real symmetric matrices. 

Let $v\in \Ker\{M_{B}^{1/2}QM_{B}^{1/2}\},$ we have $\|Q^{1/2}M_{B}^{1/2}v\|=0$
hence $M_{B}^{1/2}v\in \Ker\{Q^{1/2}\}\subseteq \Ker\{M_{B}^{1/2}\}$,
which yields $M_{B}v=0,$ implying $\Ker\{M_{B}^{1/2}QM_{B}^{1/2}\}\subseteq \Ker\{M_{B}\}$.
Opposite relation is trivial. 

We have
\begin{equation}
\im{M_{B}}=\Ker\{M_{B}\}^{\perp}=\Ker\{M_{B}^{1/2}QM_{B}^{1/2}\}^{\perp}=\im{M_{B}^{1/2}QM_{B}^{1/2}}.
\end{equation}
\end{proof}
\begin{lemma}
\label{lem:images_ineq}$\im{BM}\subseteq \im{BMB}.$
\end{lemma}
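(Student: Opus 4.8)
The plan is to avoid any spectral computation and instead reduce the claim to two elementary image identities: the trivial inclusion $\im{XY}\subseteq\im{X}$, valid for any matrices of compatible size, and $\im{AA^\T}=\im{A}$, valid for any real matrix $A$. Together with the symmetric PSD square root $M^{\hlf}$ of $M$, the statement should fall out in a couple of lines.

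First I would factor $M=M^{\hlf}M^{\hlf}$, where $M^{\hlf}$ is the symmetric PSD root, and use the symmetry of both $B$ and $M^{\hlf}$ to write
\begin{equation}
BMB = BM^{\hlf}M^{\hlf}B = \bigl(BM^{\hlf}\bigr)\bigl(BM^{\hlf}\bigr)^{\T}.
\end{equation}
Applying the identity $\im{AA^\T}=\im{A}$ with $A=BM^{\hlf}$ then gives $\im{BMB}=\im{BM^{\hlf}}$, which replaces the awkward triple product by the single factor $BM^{\hlf}$.

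Next I would write $BM=\bigl(BM^{\hlf}\bigr)M^{\hlf}$ and invoke the inclusion $\im{XY}\subseteq\im{X}$ with $X=BM^{\hlf}$ and $Y=M^{\hlf}$, obtaining $\im{BM}\subseteq\im{BM^{\hlf}}$. Chaining the two steps yields $\im{BM}\subseteq\im{BM^{\hlf}}=\im{BMB}$, which is exactly the assertion.

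The only point that warrants a line of justification is the identity $\im{AA^\T}=\im{A}$: the inclusion $\im{AA^\T}\subseteq\im{A}$ is immediate, and for the reverse one notes that $AA^\T v=0$ forces $\|A^\T v\|^2=v^\T AA^\T v=0$, so $\Ker\{AA^\T\}=\Ker\{A^\T\}$ and the two images have equal dimension. I do not anticipate any genuine obstacle, as this is a purely linear-algebraic fact; the PSD and symmetry hypotheses enter only to license the clean factorization $BMB=\bigl(BM^{\hlf}\bigr)\bigl(BM^{\hlf}\bigr)^{\T}$, and the positivity of $M$ (beyond symmetry) is needed solely to guarantee a real symmetric square root $M^{\hlf}$.
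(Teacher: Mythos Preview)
Your proof is correct and follows essentially the same approach as the paper: both use the square root $M^{\hlf}$ and the norm argument $AA^{\T}v=0\Rightarrow\|A^{\T}v\|=0\Rightarrow A^{\T}v=0$. The only cosmetic difference is that the paper works directly on kernels (showing $\Ker\{BMB\}\subseteq\Ker\{MB\}$ and then taking orthogonal complements), whereas you package the same idea as the reusable identity $\im{AA^{\T}}=\im{A}$ and work on images throughout.
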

\begin{proof}
Let $v\in \Ker\{BMB\}$, then $\|M^{1/2}Bv\|=0$ and $Bv\in \Ker\{M^{1/2}\}=Ker\{M\}$.
Hence $\Ker\{BMB\}\subseteq \Ker\{MB\}$ . We have
\begin{equation}
\im{BM}=\Ker\{MB\}^{\perp}\subseteq \Ker\{BMB\}^{\perp}=\im{BMB}.
\end{equation}
\end{proof}
We are now ready to prove Theorem \ref{thm::Strong-duality}.
\begin{proof}
{[}Theorem \ref{thm::Strong-duality}{]}. Let $\Pi\in\Omega,$ then
$X\succeq0$ in (\ref{eq:X_PSD}). For any $\left(S,S^{-}\right)\in\mathcal{S}$
we can choose $R=S^{^{1/2}},G=S^{-}R$ . From the result of Lemma
\ref{lem:duality} it follows that
\begin{equation}
\begin{array}{c}
Q\bullet S+M\bullet\left(BS^{-}B\right)=\tr{QRR^{\T}+BMBGG^{\T}} \\
\geq 2\tr{\Pi BGR^{\T}}=2\tr{\Pi BS^{-}S}=2\tr{\Pi B}.
\end{array}
\end{equation}
The last equality holds since $BM=SS^{-}BM$, and $\im{\Pi^{T}}\subseteq \im{M}$.

We now prove that $\Pi^{*}\in\Omega$.
\begin{equation}
\begin{array}{ll}
Q-\Pi^{*}M^{\dagger}\Pi^{*\T} &\\ = Q - QM_{B}^{1/2}\left(M_{B}^{1/2}QM_{B}^{1/2}\right)^{1/2\dagger}M_{B}^{\dagger1/2}BMM^{\dagger}MBM_{B}^{\dagger1/2}\left(M_{B}^{1/2}QM_{B}^{1/2}\right)^{1/2\dagger}M_{B}^{1/2}Q\\
=  Q-QM_{B}^{1/2}\left(M_{B}^{1/2}QM_{B}^{1/2}\right)^{\dagger}M_{B}^{1/2}Q\\
= Q^{1/2}\left[I-Q^{1/2}M_{B}^{1/2}\left(M_{B}^{1/2}QM_{B}^{1/2}\right)^{\dagger}M_{B}^{1/2}Q^{1/2}\right]Q^{1/2}\\
= Q^{1/2}\left[I-Q^{1/2}M_{B}^{1/2}\left(M_{B}^{1/2}QM_{B}^{1/2}\right)^{\dagger}M_{B}^{1/2}Q^{1/2}\right]^{2}Q^{1/2}\succeq0.
\end{array}
\end{equation}
The last equality holds since it is easy to see that $\left[I-Q^{1/2}M_{B}^{1/2}\left(M_{B}^{1/2}QM_{B}^{1/2}\right)^{\dagger}M_{B}^{1/2}Q^{1/2}\right]$
is a symmetric (orthogonal) projection. 

We further prove that $S^{*},S^{-*}\in\mathcal{S}$. It is easy to
show that $S^{*},S^{-*}$ are symmetric generalized inverses, reflexive
to each other ($S^{-*}$ is in fact the Moore-Penrose inverse of $S^{*}$):
\begin{align}
S^{*}S^{-*} & =M_{B}^{1/2}\left(M_{B}^{1/2}QM_{B}^{1/2}\right)^{1/2\dagger}M_{B}^{1/2}M_{B}^{1/2\dagger}\left(M_{B}^{1/2}QM_{B}^{1/2}\right)^{1/2}M_{B}^{1/2\dagger}\\
 & =M_{B}^{1/2}\left(M_{B}^{1/2}QM_{B}^{1/2}\right)^{1/2\dagger}\left(M_{B}^{1/2}QM_{B}^{1/2}\right)^{1/2}M_{B}^{1/2\dagger}\\
 & =M_{B}^{1/2}M_{B}^{1/2\dagger}=M_{B}^{1/2\dagger}M_{B}^{1/2}\\
 & =S^{-*}S^{*}.
\end{align}
The equalities hold since by Lemma \ref{lem:images_eq}, 
\begin{equation}
\im{M_{B}^{1/2}}=Im\{M_{B}\}=\im{M_{B}^{1/2}QM_{B}^{1/2}}=\im{ \left(M_{B}^{1/2}QM_{B}^{1/2}\right)^{1/2}},
\end{equation}
 and since for a PSD matrix $R$, $RR^{\dagger}=R^{\dagger}R$ is
an orthogonal projection onto its image. Using Lemma \ref{lem:images_ineq}
we have 
\begin{equation}
S^{*}S^{-*}BM=M_{B}^{1/2\dagger}M_{B}^{1/2}BM=BM.
\end{equation}

It is now easy to verify that
\begin{equation}
Q\bullet S^{*}+M\bullet\left(BS^{-*}B\right)=2\tr{\Pi^{*}B}=2\tr{ \left(M_{B}^{1/2}QM_{B}^{1/2}\right)^{1/2} },
\end{equation}
which completes the proof.
\end{proof}

\begin{corollary}
    Under the  assumption,
$\im{M_{B}}\subseteq \im{Q_k}$,
the optimal gain in \eqref{eq:stepk:traceproblem}
 is given by
\begin{equation}
\Pi_k^{*}=Q_kM_{B}^{1/2}\left(M_{B}^{1/2}Q_kM_{B}^{1/2}\right)^{1/2\dagger}M_{B}^{\dagger1/2}B_k.\label{appeq:stepk:optimalPercepGain}
\end{equation}

\end{corollary}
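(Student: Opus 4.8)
The plan is to read this Corollary as the translation of Theorem~\ref{thm::Strong-duality} back into the original gain variable of \eqref{eq:stepk:traceproblem}, so that essentially no new work beyond that theorem is required. The problem \eqref{eq:stepk:traceproblem} is posed in terms of the gain $\tilde\Pi_k$ (written $\Pi_k$ in the main text), whereas Theorem~\ref{thm::Strong-duality} solves the reparametrized problem \eqref{eq:extrem_prob} in the variable $\Pi=\tilde\Pi_k M_k$. I would first make this substitution explicit and confirm it identifies the two feasible sets: with $\Pi=\tilde\Pi_k M_k$ one automatically has $\im{\Pi^\T}\subseteq\im{M_k}$, and using $M_k M_k^\dagger M_k=M_k$ one obtains $\Pi M_k^\dagger \Pi^\T=\tilde\Pi_k M_k\tilde\Pi_k^\T$, so the original constraint $Q_k-\tilde\Pi_k M_k\tilde\Pi_k^\T\succeq0$ is exactly the condition $\Pi\in\Omega$ of Theorem~\ref{thm::Strong-duality}. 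The objective likewise maps as $\tr{\tilde\Pi_k M_k B_k}=\tr{\Pi B_k}$. The key structural observation, already implicit in the derivation around \eqref{eq:extrem_prob}, is that both the objective and the feasibility constraint depend on $\tilde\Pi_k$ only through the product $\Pi=\tilde\Pi_k M_k$.

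The second step is to verify the hypotheses. Writing $M_B=B_k M_k B_k$, the stated assumption $\im{M_B}\subseteq\im{Q_k}$ is precisely the hypothesis of Theorem~\ref{thm::Strong-duality}. Applying that theorem then yields directly $\max_\Pi\tr{\Pi B_k}=\tr{(M_B^{1/2}Q_k M_B^{1/2})^{1/2}}$ (half of the $2\tr{\cdot}$ value stated there), which is the claimed optimal value of \eqref{eq:stepk:traceproblem}, and it supplies the optimal reparametrized argument $\Pi^*=Q_k M_B^{1/2}(M_B^{1/2}Q_k M_B^{1/2})^{1/2\dagger}M_B^{1/2\dagger}B_k M_k$ from the theorem's displayed formula.

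The final step, recovering the gain stated in the Corollary, is where the only genuine subtlety sits. I must exhibit a $\tilde\Pi_k$ satisfying $\tilde\Pi_k M_k=\Pi^*$. The proposed $\Pi_k^{*}=Q_k M_{B}^{1/2}(M_{B}^{1/2}Q_k M_{B}^{1/2})^{1/2\dagger}M_{B}^{\dagger1/2}B_k$ differs from $\Pi^*$ only in its trailing factor ($B_k$ rather than $B_k M_k$); since $M_B^{\dagger1/2}=M_B^{1/2\dagger}$, right-multiplying by $M_k$ recovers $\Pi^*$ exactly, i.e.\ $\Pi_k^* M_k=\Pi^*$. Because the first step showed that both $\tr{\tilde\Pi_k M_k B_k}$ and the feasibility $Q_k-\tilde\Pi_k M_k\tilde\Pi_k^\T\succeq0$ depend only on $\tilde\Pi_k M_k$, this $\Pi_k^*$ is feasible and attains the optimum, and the freedom in choosing $\tilde\Pi_k$ off $\im{M_k}$ is exactly the noted non-uniqueness. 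The main obstacle is thus purely bookkeeping: keeping the pseudo-inverse identities ($M_B^{\dagger1/2}=M_B^{1/2\dagger}$ and $M_k M_k^\dagger M_k=M_k$) straight so that the reparametrization is provably reversible and the dropped trailing $M_k$ is shown to be immaterial to both value and feasibility.
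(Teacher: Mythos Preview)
Your proposal is correct and matches the paper's (implicit) approach: the paper establishes the reparametrization $\Pi=\tilde\Pi_k M_k$ in \eqref{eq:stepk:traceproblem_APP}--\eqref{eq:extrem_prob} before stating Theorem~\ref{thm::Strong-duality}, and then records the Corollary without a separate proof, treating it as the obvious back-translation you spell out. Your bookkeeping on the pseudo-inverse identities and the observation that both objective and constraint depend on $\tilde\Pi_k$ only through $\tilde\Pi_k M_k$ are exactly the details one needs to make the translation rigorous.
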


\begin{remark}
Under the alternative assumption,
$\im{M_{k}}\subseteq \im{Q_k}$,
the optimal gain in \eqref{eq:stepk:traceproblem}
 is given by
\begin{equation}
\Pi_k^{*}=Q_k \bb M_{b}^{1/2}\left(M_{b}^{1/2}Q_bM_{b}^{1/2}\right)^{1/2\dagger}M_{b}^{\dagger1/2}\bb,
\label{eq:stepk:optimalPercepGain:alternative}
\end{equation}
where $\bb = B_k^{1/2},\ Q_b=\bb Q_k \bb,\ M_b=\bb M_k \bb$.
\end{remark}

\begin{proof}
Recall our goal in \eqref{eq:stepk:traceproblem} is to maximize $\tr{\Pi M B}=\tr{\bb \Pi M \bb}$ under the condition $Q-\Pi M \Pi^\top \succeq 0$ (we omit the index $k$). This is equivalent to minimizing $\EEb{\|\bb X-\bb Y\|^2}$ \emph{w.r.t} $\Pi$, where $(X,Y) \sim \ndist{0}{\Sigma}$ and $\Sigma = \begin{bmatrix} Q & \Pi M \\ M\Pi^\top & M\end{bmatrix}\succeq 0$. 

In this case, $(\bb X,\bb Y) \sim \ndist{0}{\Sigma_b}$ where $\Sigma_b = \begin{bmatrix} \bb Q \bb & \bb \Pi M \bb \\ \bb M\Pi^\top \bb & \bb M \bb\end{bmatrix}$. According to Thm.~\ref{thm:=00005BOlkin-1982,-Thm.4=00005D}, under the assumption $\im{\bb M \bb} \subseteq \im{\bb Q \bb}$, the minimal distance is achieved when 
\begin{equation}
\bb \Pi M \bb = \bb Q \bb M_b^\hlf  \left( M_b^\hlf Q_b M_b^\hlf \right)^{\hlf \dagger} M_b^\hlf.
\end{equation}
Note that $\im{M} \subseteq \im{Q}$ implies $\im{\bb M \bb} \subseteq \im{\bb Q \bb}$, and it is straightforward to verify that $Q-\Pi^*M\Pi^{*\top} \succeq 0$.
\end{proof}

\newpage
\section{Stationary settings}
\label{app::stationary}

A note is in place regarding the stationary perceptual Kalman filter. In
the Kalman steady-state regime, where dynamics \eqref{appeq:kalman:setting1}
-\eqref{appeq:eq:kalman:setting2}
 are time-invariant
and $T\rightarrow\infty$, the matrices $K$ and $S$ in Algorithm
\ref{alg:Kalman-Filter} are determined by the covariance matrix $P$,
\begin{align}
K=PC^{\T}(CPC^{\T}+R)^{-1},\quad
S=CPC^{\T}+R.
\end{align}
Here,  $C$ stands for the time-invariant observation matrix ($y_k=C x_k +r_k$) and $P$ is a solution to the Discrete-Time Algebraic Riccati equation
(DARE)
\begin{equation}
P=APA^{\T}-APC^{\T}(CPC^{\T}+R)^{-1}CPA^{\T}+Q.
\end{equation}
Similarly, under the steady-state regime, (\ref{eq:Agen:traceproblem})
becomes 
\begin{equation}
\begin{cases}
\tr{D} & \rightarrow \min_{\Pi}\\
\st & D=ADA^{\T}+Q+M-\Pi M-M\Pi^{\T}, M=KSK^{\T},\,Q-\Pi M\Pi^{\T}\succeq0
\end{cases}
\end{equation}
where $D$ obeys an (Algebraic) Lyapunov equation. 
If $A$ is stable, 
\begin{equation}
D(\Pi)=\sum_{k=0}^{\infty}A^{k}(Q+M-\Pi M-M\Pi^{\T})\left(A^{k}\right)^{\T}. \label{eq::pkl:stationaryDmatrix}
\end{equation}
Hence, stationary perceptual filter is of the form
\begin{align}
\hat{x}_k &= A\hat{x}_{k-1} + J_k, \label{eq::pkl:stationaryfilter} \\
J_{k}&=\Pi K\II_{k}+w_{k}, 
 \\ w_{k}&\sim\mathcal{N}\left(0,Q-\Pi M\Pi^{\T}\right),
\end{align}
and in order to find optimal gain $\Pi$, minimizing $\tr{D(\Pi)}$,  we have to solve
\begin{equation} \label{eq::pkl:stationaryMaxOptimization}
\max_{\Pi}\tr{\Pi MB} \, \st \, Q-\Pi M\Pi^{\T}\succeq0,
\end{equation}
where we define 
$
B\triangleq\sum_{k=0}^{\infty}\left(A^{k}\right)^{\T}A^{k}
$, 
and the solution (under the assumption $\im{B M B}\subseteq \im{Q}$) is given again by \eqref{eq:stepk:optimalPercepGain}.
\newpage
\section{List of notations}
\label{app::notation}

We summarize our notations in the following Table.

\begin{table}[H]
\caption{\label{tab:pkl:Definitions-and-Notations}Definitions and Notations}

\centering \begin{tabular}{|c|c|c|c|}
\hline 
Notation & Description & Definition & Dimensions\tabularnewline
\hline 
\hline 
$n_{x}$ & state dimension &  & \tabularnewline
\hline 
$n_{y}$ & measurement dimension &  & \tabularnewline
\hline 
$A_{k}$ & system dynamics &  & $n_{x}\times n_{x}$\tabularnewline
\hline 
$C_{k}$ & measurement function &  & $n_{y}\times n_{x}$\tabularnewline
\hline 
$Q_{k},R_{k}$ & noise covariances &  & $n_{x}\times n_{x}$,$n_{y}\times n_{y}$\tabularnewline
\hline 
$x_{k}$ & system state (ground-truth) &  & $n_{x}$\tabularnewline
\hline 
$y_{k}$ & measurements &  & $n_{y}$\tabularnewline
\hline 
$\hat{x}_{k}$ & state estimator &  & $n_{x}$\tabularnewline
\hline 
$\hat{x}_{k}^{*}$ & optimal Kalman state & see Algorithm \ref{alg:Kalman-Filter} & $n_{x}$\tabularnewline
\hline 
$\hat{x}_{k|s}$ & best MSE state esimators, up to time $s$ &  & $n_{x}$\tabularnewline
\hline 
$\II_{k}$ & innovation process & see Algorithm \ref{alg:Kalman-Filter} & $n_{y}$\tabularnewline
\hline 
$S_{k}$ & innovation covariance & see Algorithm \ref{alg:Kalman-Filter} & $n_{y}\times n_{y}$\tabularnewline
\hline 
$K_{k}$ & Kalman gain & see Algorithm \ref{alg:Kalman-Filter} & $n_{x}\times n_{y}$\tabularnewline
\hline 
$\Pi_{k}$ & innovation perceptual gain &  & $n_{x}\times n_{x}$\tabularnewline
\hline 
$M_{k}$ & Kalman update covariance & $M_{k}=K_{k}S_{k}K_{k}^{\T}$ & $n_{x}\times n_{x}$\tabularnewline
\hline 
$\UII_{k}$ & unutilized information process & see     \eqref{eq::pkl:uii_k}
 & $kn_{y}$\tabularnewline
\hline
$\varUpsilon_{k}$ & unutilized information process (recursive) & see \eqref{eq::pkl:ups_k} & $n_{x}$\tabularnewline
\hline

$\Sigma_{\varUpsilon_{k}}$ & unutilized information covariance &  & $n_{x}\times n_x $\tabularnewline
\hline 
$\varPhi_{k}$ & unutilized information perceptual gain &  & $n_{x}\times n_{x}$\tabularnewline
\hline
$B_{k}$ & weight matrix & $B_{k}=\sum_{t=k}^{T}\w_{t}(A^{t-k})^{\T}A^{t-k}$ & $n_{x}\times n_{x}$\tabularnewline
\hline 
$D_{k}$ & deviation from MMSE & $D_{k}=\mathbb{E}\left[\hat{x}_{k}^{*}-\hat{x}_{k}\right]\left[\hat{x}_{k}^{*}-\hat{x}_{k}\right]^{\T}$ & $n_{x}\times n_{x}$\tabularnewline
\hline 
$T$ & Termination time (horizon) &  & \tabularnewline
\hline 
$\C(\hat{X}_0^T)$ & minimization objective &  $\C = 
\sum_{k=0}^{T}\w_{k}\EEb{\|{x}_{k}-\hat{x}_{k}\|^{2}} $ &\tabularnewline
\hline 
\end{tabular}

\end{table}

\newpage
\section{Numerical demonstrations}\label{app:numerical}

In this section we provide full details for the experimental settings of Sec.~\ref{sec::numericals}, with additional numerical and visual results. 
In the following, we compare the performance of several filters; $\hat{x}^*_{\kal}$ and $\hat{x}_{\ntc}$  correspond to the Kalman filter and the temporally-inconsistent filter \eqref{eq::pkl:ntc-estimator} (which does not possess perfect-perceptual quality). The estimate 
 $\hat{x}_{\opt}$ is generated by a  perfect-perception filter obtained by numerically optimizing the coefficients in \eqref{eq::pkl:J_k::fullJk}, where the cost is the MSE at termination time, \textit{i.e.}~the \textit{terminal cost}
\begin{equation}
\C_{\mathrm{T}}=\EEb{\|\hat{x}_{T}-x_{T}\|^{2}}.\label{appeq::pkl:TerminalCost}
\end{equation}
The estimates $\hat{x}_{\auc},\hat{x}_{\minT}$ correspond to PKF outputs (Alg.~\ref{alg:Perceptual-Kalman-Filter}) minimizing the \textit{total cost} (area under curve) 
\begin{equation}
\C_{\auc} = \sum_{k=0}^{T}\EEb{\|\hat{x}_{k}-x_{k}\|^{2}},
\label{appeq::pkl:TotalCost}
\end{equation} 
and the \textit{terminal cost}, respectively.
Finally, $\hat{x}_{\stat}$ is the stationary PKF, discussed in App.~\ref{app::stationary}.
The filters are  summarized in Table \ref{apptab::pkl:filters demonstrated}. 

\begin{table}[H]
\caption{List of demonstrated filters.}
    \label{apptab::pkl:filters demonstrated}
    \centering
    \begin{tabular}{c|c|c|c|c|c}
         & \multirow{2}{*}{\textbf{description}} & \multirow{2}{*}{\textbf{definition}}& \multicolumn{2}{c|}{\textbf{perfect-perception}} 
         \\ 
         & & & per-sample & temporal 
         \\
         \hline \rowcolor{Gray}
         $\hat{x}_{\kal}^*$& Kalman filter & Algorithm \ref{alg:Kalman-Filter} & \xmark & \xmark 
         \\
         \multirow{2}{*}{$\hat{x}_{\ntc}$} & Per-sample perceptual quality &\multirow{2}{*}{\eqref{eq::pkl:ntc-estimator} } & \multirow{2}{*}{ \cmark }& \multirow{2}{*}{\xmark} 
         \\
         & (temporally-inconsistent) & & & 
         \\ \rowcolor{Gray}
          & Optimized perfect-perceptual  & \multirow{2}{*}{\eqref{eq::pkl:J_k::fullJk} }& \multirow{2}{*}{\cmark} & \multirow{2}{*}{\cmark} 
         \\ \rowcolor{Gray}
         \multirow{-2}{*}{$\hat{x}_{\opt}$} &quality filter & \multirow{-2}{*}{\eqref{eq::pkl:J_k::fullJk} }& \multirow{-2}{*}{\cmark} & \multirow{-2}{*}{\cmark} 
         \\
         $\hat{x}_{\auc}$ & PKF with total cost & Algorithm \ref{alg:Perceptual-Kalman-Filter}& \cmark & \cmark 
         \\ \rowcolor{Gray}
         $\hat{x}_{\minT}$ & PKF with terminal cost & Algorithm \ref{alg:Perceptual-Kalman-Filter} & \cmark & \cmark 
         \\
         $\hat{x}_{\stat}$ & Stationary PKF & \eqref{eq::pkl:stationaryfilter}& \xmark & \xmark 
         \end{tabular}
    
\end{table}

\subsection{Example: Harmonic oscillator}

We start with a simple $2$-D example, where we demonstrate the  differences in MSE distortion between the optimized perfect-perceptual quality filter $\hat{x}_{\opt}$, the temporally inconsistent filter $\hat{x}_{\ntc}$ and the efficient sub-optimal (perceptual) PKF. 
Consider the harmonic oscillator, where the entries of the state $x_k \in \R^2$ correspond to position and velocity, and evolve as
\begin{equation}
x_{k+1}=Ax_{k}+q_{k},\quad q_{k}\sim\mathcal{N}\left(0, I\right)
\end{equation}
with
\begin{equation} 
A=I+\begin{bmatrix}   0 & 1 \\ -2 & 0 \end{bmatrix} 
\times \Delta_t,
\end{equation}
where $\Delta_t = 5 \times  {10}^{-3}$ is the sampling interval. Assume we have access to noisy and delayed scalar observations of the position (corresponding to time $t-\tfrac{1}{2}\Delta_t$) so that  $y_k = \begin{bmatrix} 1 & -\frac{1}{2} \end{bmatrix} x_k +r_k$, where $ r_k \sim \mathcal{N}\left(0,1\right) $ and $x_0 \sim \mathcal{N}\left(0,0.8 I\right) $.

We numerically optimize the coefficients $\left\{\Pi_k , \varPhi_k \right\}_{k=0}^T$ in \eqref{eq::pkl:J_k::fullJk}, to minimize the terminal error \eqref{appeq::pkl:TerminalCost} ($\tr{D_T}$ in \eqref{eq:D_k::Lyapunov_fullJk}) at time $T=255$ under the constraints \eqref{eq::pkl:Sigma_w_full}.
Figure \ref{appfig::harmonic_osc:MSE_fullview} shows the MSE distortion for the optimized perfect-perception filter $\hat{x}_{\opt}$ defined by \eqref{eq::pkl:J_k::fullJk} and $\left\{\Pi_k , \varPhi_k \right\}_{k=0}^T$, and the sub-optimal PKF outputs $\hat{x}_{\auc},\hat{x}_{\minT}$, minimizing the {total cost} 
\eqref{appeq::pkl:TotalCost}
and the {terminal cost}
\eqref{appeq::pkl:TerminalCost}
 (see Table \ref{apptab::pkl:filters demonstrated}). We observe that PKF estimations are indeed not MSE optimal at time $T$, However, their RMSE at time $T$ is only $\sim 30\%$ higher than that of $\hat{x}_{\opt}$ and they have the advantage that they can be solved analytically and require computing only half of the coefficients ($\Pi_k$).
 
The estimates $\hat{x}^*_{\kal}$ and $\hat{x}_{\ntc}$ achieve lower MSE than $\hat{x}_{\opt}$, however they do not possess perfect-perceptual quality. We can see the difference in MSE distortion between the filters $\hat{x}_{\opt}$ and $\hat{x}_{\ntc}$, with and without perception constraint in the temporal domain. \textit{This is the cost of temporal consistency in online estimation for this setting}.

\begin{figure}
\includegraphics[viewport=5bp 10bp 1375bp 675bp,clip,width=1.0\linewidth]
{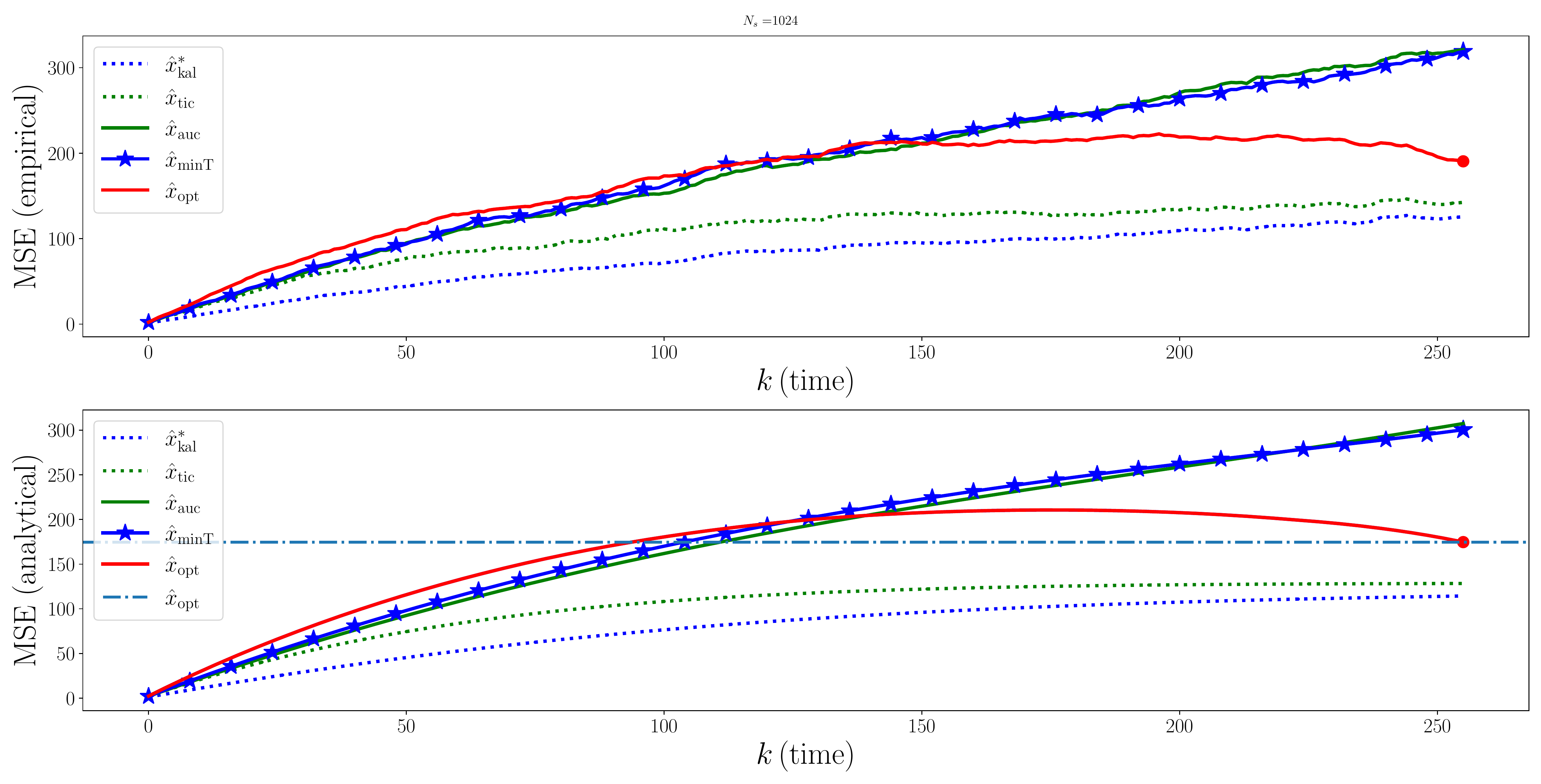}
\includegraphics[viewport=12bp 10bp 1380bp 675bp,clip,width=1.0\linewidth]
{plots/hosc/hosc603_RMSEgap.pdf}
\caption[MSE distortion on Harmonic oscillator for optimized perfect-perception filter vs. sub-optimal (PKF) estimations.]{
\textbf{MSE distortion on Harmonic oscillator.} \label{appfig::harmonic_osc:MSE_fullview}  $\hat{x}_{\opt}$ is a numerically optimized perfect-perceptual quality filter's output (minimizing error at time $T=255$, dashed horizontal line).  $\hat{x}_{\auc},\hat{x}_{\minT}$ are PKF outputs minimizing different objectives. Observe that PKF estimations are not MSE optimal, but require less computations. $\hat{x}^*_{\kal}$ and $\hat{x}_{\ntc}$  are not perfect-perceptual quality filters. 
\textbf{(top)} Empirical error, over $N=1024$ sampled trajectories.
\textbf{(bottom)} Analytical error. The difference in distortion between the perfect-perceptual state $\hat{x}_{\opt}$, optimized according to \eqref{eq::pkl:J_k::fullJk}, and $\hat{x}_{\ntc}$ is due to the perceptual constraint on the joint distribution. This is the cost of temporal consistency in online estimation for this setting. The gap between the MSE of the optimized estimator and $\hat{x}_{\minT}$ is due to the sub-optimal choice of coefficients, $\varPhi_k=0$.}
\end{figure}

\subsection{Example: Two coupled inverted pendulums}

Next, we demonstrate the quantitative behavior of perceptual Kalman filters, by comparing the MSE
between the PKF outputs when minimizing different cost functions, and between non-perceptual filters outputs. More specifically, this experiment demonstrates:
\begin{enumerate}
    \item How minimizing different objectives in Algorithm \ref{alg:Perceptual-Kalman-Filter} leads to different filters.
    \item The cost of perfect-perceptual quality filtering, given by Algorithm~\ref{alg:Perceptual-Kalman-Filter}, over optimal filters.
\end{enumerate}

We consider a higher-dimensional,
well-studied example of two coupled inverted pendulums, mounted on
carts \citep{freirich2018decentralized, freirich2016decentralized}. The cart positions, pendulum deviations, and
their velocities (Fig. \ref{fig::invPendulum}), are given by the discretized stable closed-loop system
with perturbation
\begin{equation}
x_{k+1}=Ax_{k}+q_{k},\quad q_{k}\sim\mathcal{N}\left(0,Q\right),
\end{equation}
where $x_k\in \mathbb{R}^8$. The initial state is distributed as
\begin{equation}
x_{0}\sim\mathcal{N}\left(0,P_{0}\right).
\end{equation}
The system matrices are given by 
\begin{equation}
A=I+A_{cl}\cdot\Delta_{t},
\end{equation}
 where $\Delta_{t}=5\times10^{-4}$ is the sampling interval and 
\begin{equation}
A_{cl}=\left[\begin{array}{cc}
A_{1}+BK_{1} & F\\
F & A_{2}+BK_{2}
\end{array}\right],
\end{equation}

\begin{equation}
A_{1}=A_{2}=\begin{bmatrix}0 & 1 & 0 & 0\\
2.9156 & 0 & -0.0005 & 0\\
0 & 0 & 0 & 1\\
-1.6663 & 0 & 0.0002 & 0
\end{bmatrix},\quad B=\begin{bmatrix}0\\
-0.0042\\
0\\
0.0167
\end{bmatrix}.
\end{equation}
The $\emph{{coupling}}$ is given by
\begin{equation}
F=\begin{bmatrix}0 & 0 & 0 & 0\\
0.0011 & 0 & 0,0.0005 & 0\\
0 & 0 & 0 & 0\\
-0.0003 & 0 & -0.0002 & 0
\end{bmatrix},
\end{equation}
and stabilizing state-feedback controllers (each acts on a single cart) are
\begin{equation}
K_{1}=\begin{bmatrix}11396.0 & 7196.2 & 573.96 & 1199.0\end{bmatrix},\quad K_{2}=\begin{bmatrix}29241 & 18135 & 2875.3 & 3693.9\end{bmatrix}.
\end{equation}

The partial measurements are given by $y_{k}=C x_{k}+r_{k}$, where  $r_{k}\sim\mathcal{N}\left(0,R\right)$,
with coefficients
\begin{equation}
C=\left[\begin{array}{cc}
\bar C_{1} & 0\\
0 & \bar C_{2}
\end{array}\right],\quad \bar C_{1}=\bar C_{2}=\begin{bmatrix}1 & 0 & 0 & 0\\
0 & 0 & 1 & 0
\end{bmatrix}.
\end{equation}
Namely, we observe only position and angle for each cart/pendulum,
while velocities are not being measured.

The perturbation covariances are given by 
\begin{equation}
P_{0}=\left[\begin{array}{cc}
\bar{P}_{0} & 0\\
0 & \bar{P}_{0}
\end{array}\right],\quad Q=\left[\begin{array}{cc}
\bar{Q} & 0\\
0 & \bar{Q}
\end{array}\right],\quad R=\left[\begin{array}{cc}
\bar{R} & \frac{1}{8}\bar{R}\\
\frac{1}{8}\bar{R} & \bar{R}
\end{array}\right],
\end{equation}
where
\begin{equation}
\bar{P}_{0}=\begin{bmatrix}0.154 & 0.142 & -0.143 & 0.093 \\ 0.142 & 0.144 & -0.124 & 0.058 \\ -0.143 & -0.124 & 0.167 & -0.148 \\ 0.093 & 0.058 & -0.148 & 0.192 \end{bmatrix}\cdot 5 \times10^{-4},
\end{equation}
\begin{equation}
\bar{Q}={10}^{-2}\cdot\begin{bmatrix}0.642 & -0.136 & 0.78 & 0.262 \\ -0.136 & 0.894 & -0.248 & 0.074 \\ 0.78 & -0.248 & 1.284 & -0.314 \\ 0.262 & 0.074 & -0.314 & 1.766\end{bmatrix}\times \Delta_t,\,
\bar{R}={10}^{-2}\cdot\begin{bmatrix}0.375 & -0.33 \\ -0.33 & 0.771\end{bmatrix}\times \Delta_t.
\end{equation}

\begin{figure}[H]
\centering \includegraphics[width=0.75\linewidth]{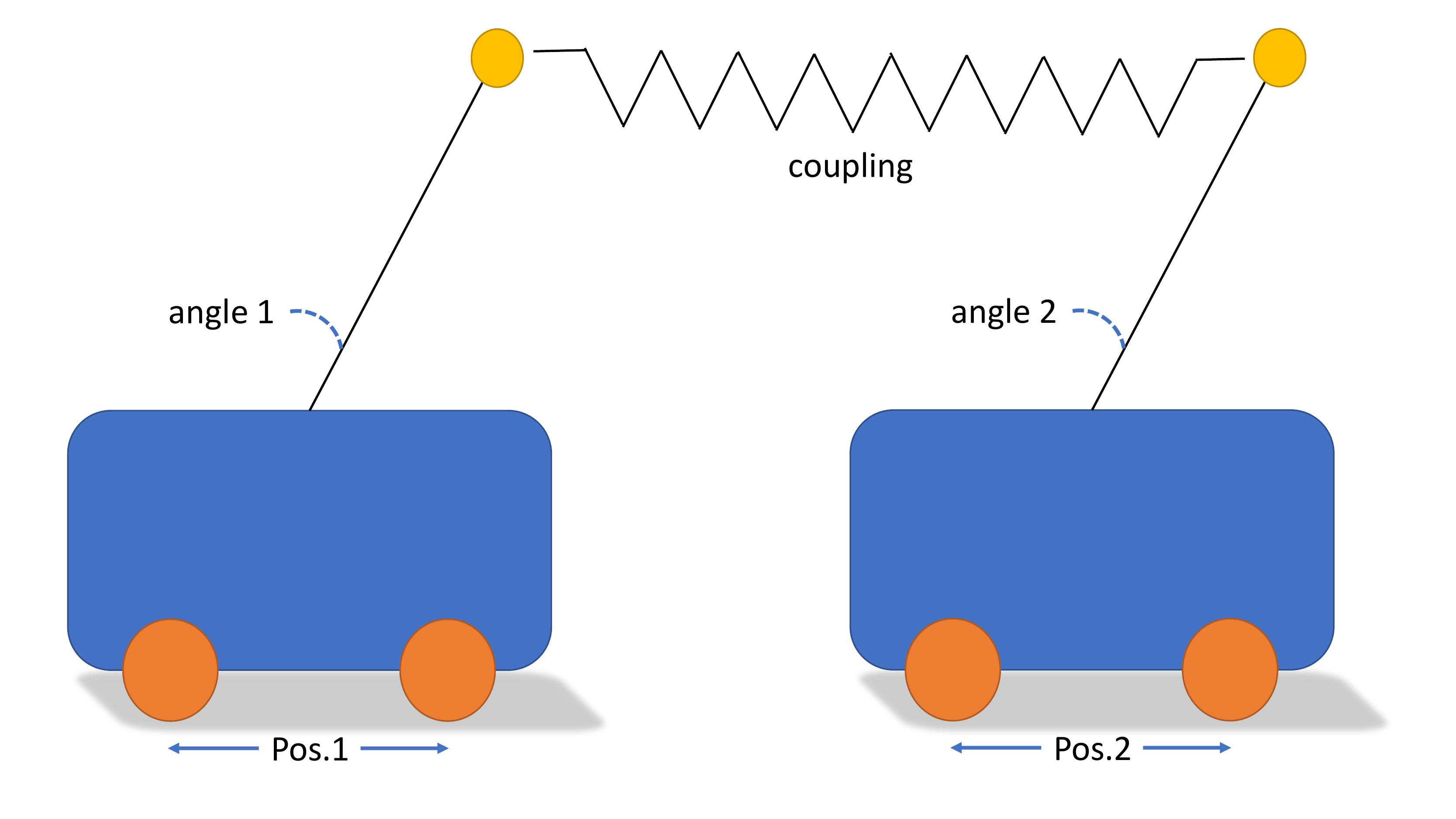}\caption[Coupled inverted pendulums.]{\textbf{Coupled inverted pendulums.}}\label{fig::invPendulum}
\end{figure}

We simulate the system for $2^{10}$ time steps ($T=2^{10}-1$), over $N=2^{10}$ independent experiments.
In Figures~\ref{appfig::invPendulum:MSE} and~\ref{appfig::invPendulum:MSE_fullview} we show the MSE distortion as a function of time, $\EEb{\|\hat{x}_k -x_k\|^2}$, for the different
filters of Table \ref{apptab::pkl:filters demonstrated}; $\hat{x}_{\kal}^{*}$ is the optimal Kalman filter. $\hat{x}_{\ntc}$
is the perceptual filter without consistency constraints, given in \eqref{eq::pkl:ntc-estimator}. $\hat{x}_{\auc}$ is the PKF output minimizing the {total cost} \eqref{appeq::pkl:TotalCost}.
$\hat{x}_{\minT}$ (marked by `$\star$') is the PKF output minimizing the terminal cost \eqref{appeq::pkl:TerminalCost}.

\begin{figure}
\includegraphics[viewport=5bp 5bp 1165bp 685bp,clip,width=0.99\linewidth]
{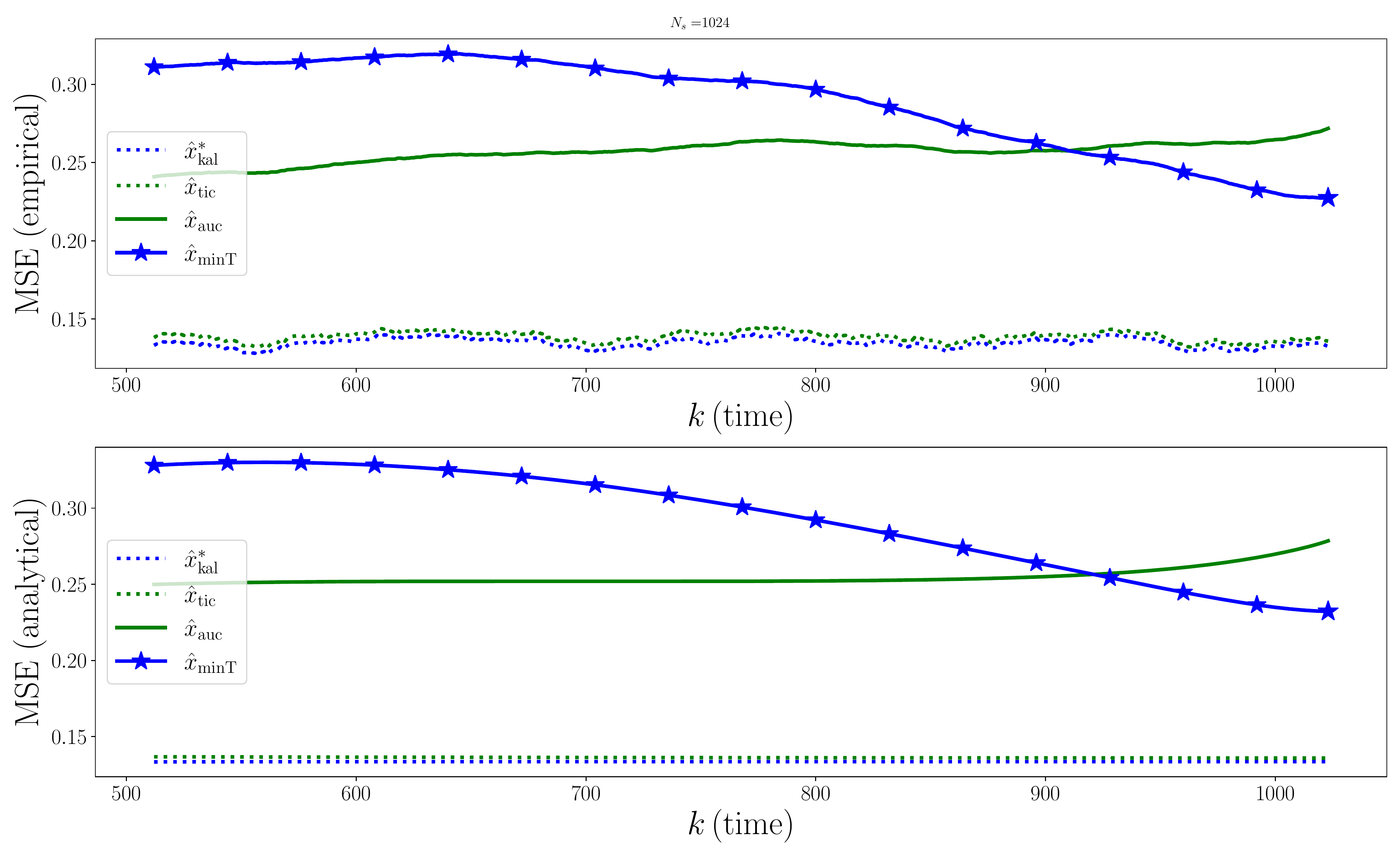}
\caption[MSE distortion on coupled inverted pendulums for perceptual and non-perceptual filters (near the time $T$).]{\textbf{MSE distortion on Coupled inverted pendulums for perceptual and non-perceptual filters (near the time $T$).} \label{appfig::invPendulum:MSE}
$\hat{x}_{\auc},\hat{x}_{\minT}$ are PKF outputs minimizing different objectives. Observe that while both possess perfect-perceptual quality, they yield different estimations. Also, pay attention to the MSE gap between the MSE-optimal, but not perceptual, Kalman filter and the PKFs.}
\end{figure}

\begin{figure}
\includegraphics[viewport=5bp 5bp 1165bp 685bp,clip,width=0.99\linewidth]
{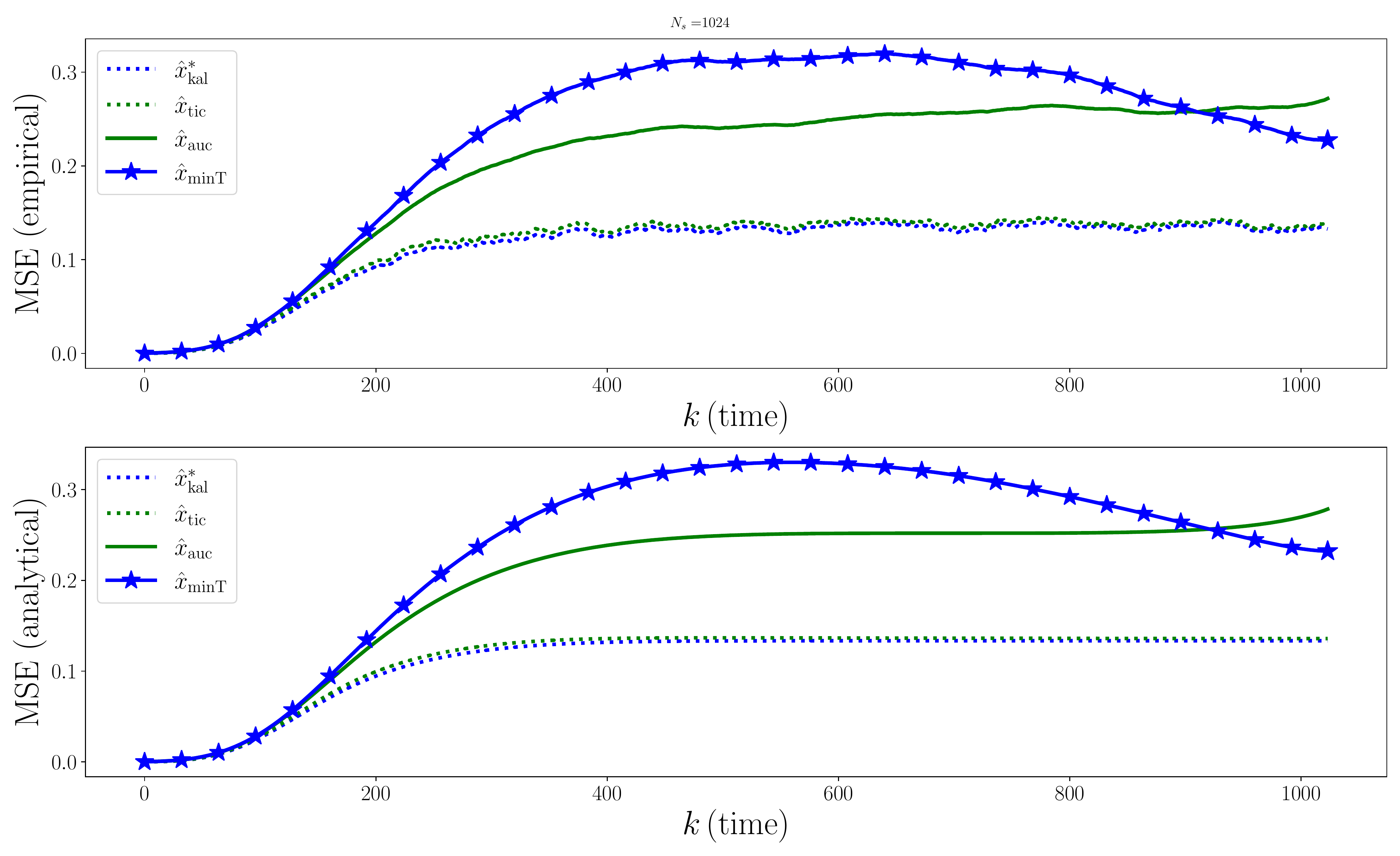}
\caption[MSE distortion on coupled inverted pendulums for perceptual and non-perceptual filters (full view).]{
\textbf{MSE distortion on Coupled inverted pendulums for perceptual and non-perceptual filters (full view).}} \label{appfig::invPendulum:MSE_fullview}
\end{figure}

We observe that filters satisfying the perfect perceptual quality
constraint ($\hat{x}_{\auc}$ and $\hat{x}_{\minT}$) achieve higher distortions
compared to the per-sample only perceptual filter $\hat{x}_{\ntc}$, which in turn attains MSE distortion slightly higher than that of the MSE-optimal Kalman filter. This
demonstrates again the cost of temporal consistency in online estimation. Note also
that PKFs minimizing different cumulative objectives, yield different
estimations; while $\hat{x}_{minT}$ is optimal at termination time $T$, $\hat{x}_{\auc}$ achieves a lower MSE on average. As we will see next, both filters attain the same perceptual quality.

\begin{figure}
\includegraphics[viewport=5bp 5bp 1165bp 668bp,clip,width=0.95\linewidth]{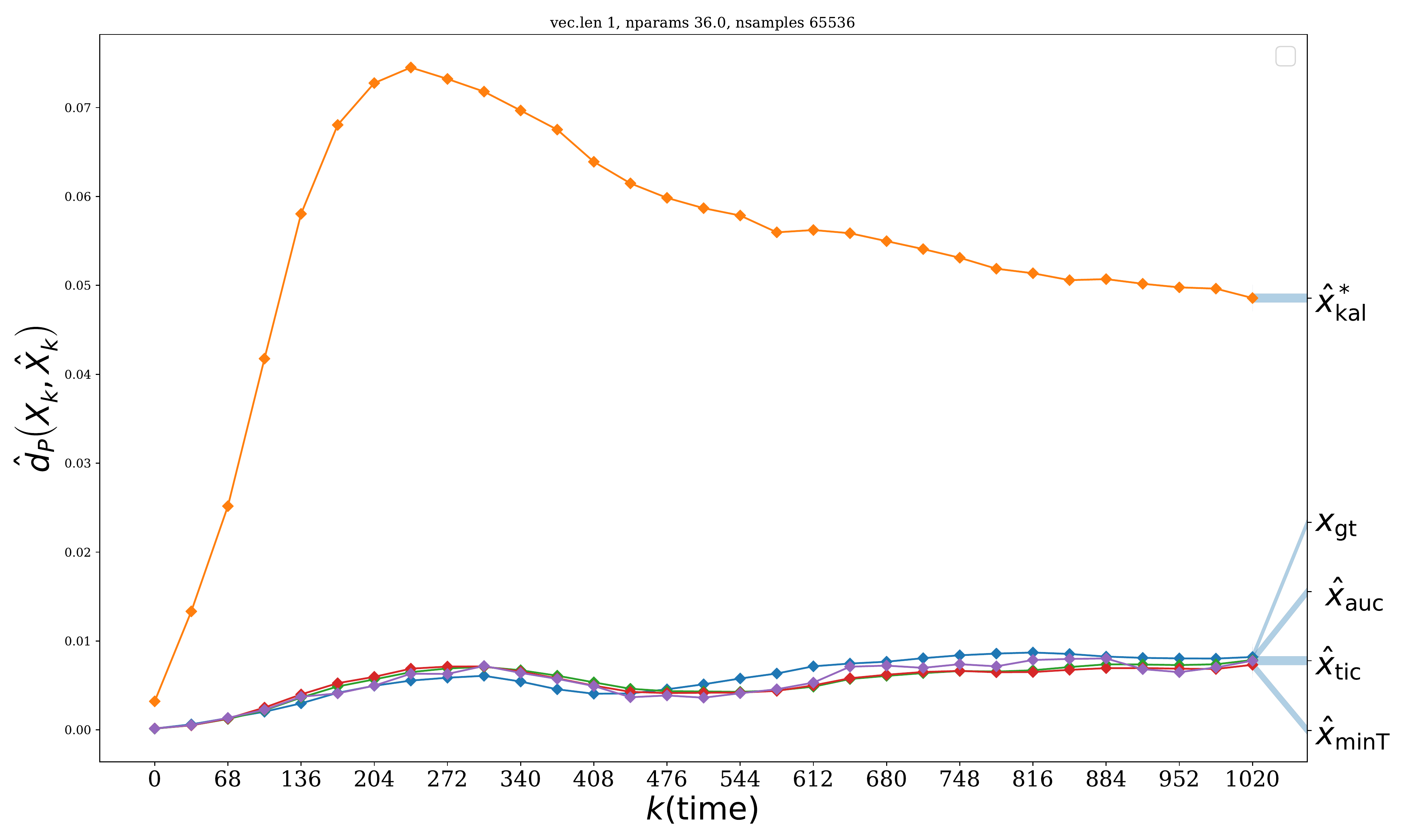}
\includegraphics[viewport=5bp 5bp 1165bp 668bp,clip,width=0.95\linewidth]{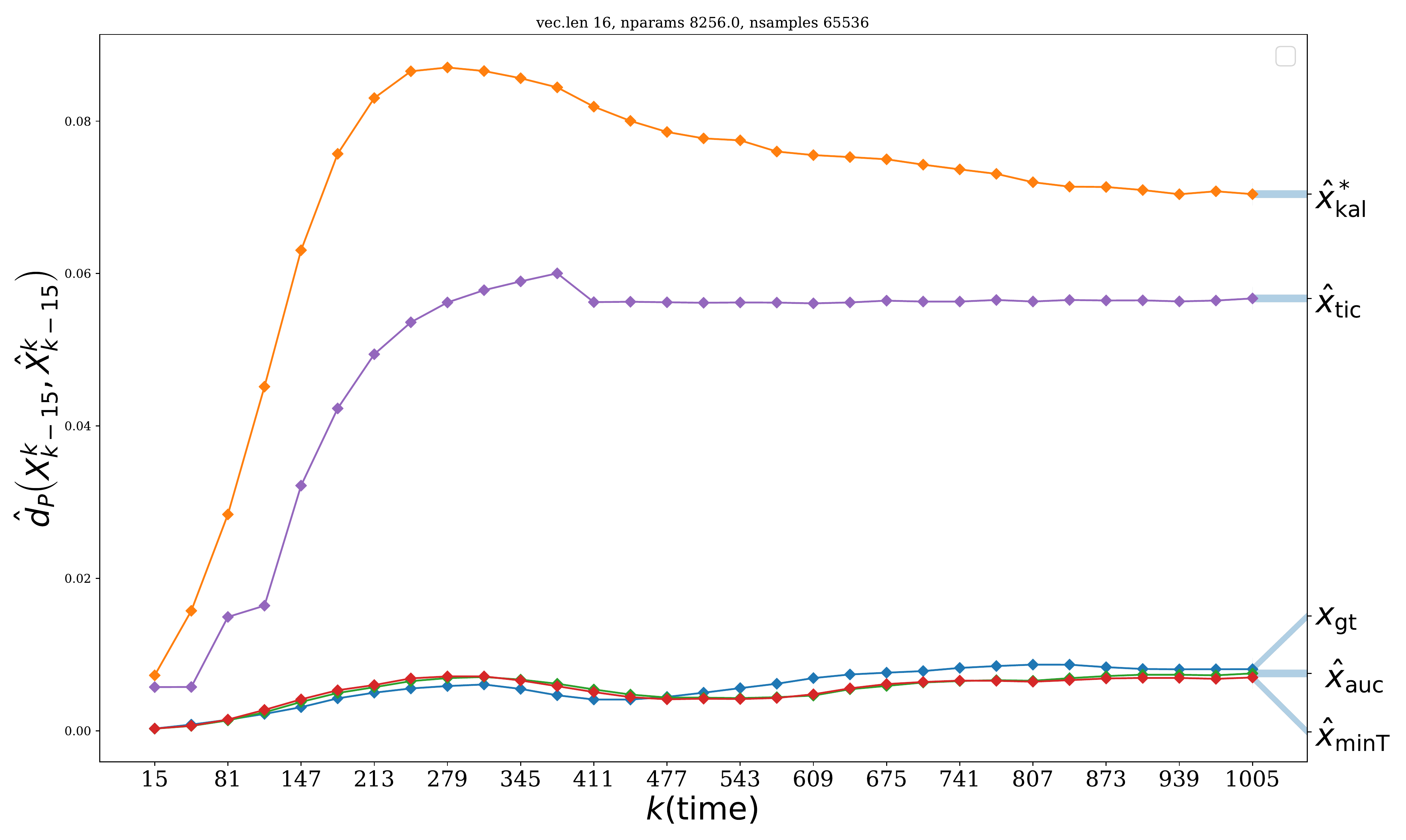} \caption[Perceptual quality measured by empirical Gelbrich distance.]{ \textbf{Perceptual quality measured by estimated Wasserstein distance $\hat{d}_P$ (lower is better).} \label{appfig::invPendulum:Gelbrich}
{\textbf{(top)}} Distance between distributions of single samples $p_{x_k}$ and $p_{\hat{x}_k}$. {\textbf{(bottom)}} Distance between distributions of $16$-state vectors (at times $[k-15,k]$), $p_{{X}^k_{k-15}}$ and $p_{\hat{X}^k_{k-15}}$. Observe that $\hat{x}_{\ntc}$ single samples are distributed similarly to the ground-truth signal, but they fail to attain the reference joint distribution between timesteps. PKF outputs $\hat{x}_{\auc}$ and $\hat{x}_{\minT}$ attain high measured quality in both cases. }
\end{figure}

In Fig.~\ref{appfig::invPendulum:Gelbrich}  we estimate the perceptual quality, given by the Wasserstein
distance 
between the ground-truth distribution and the empirical Gaussian
distributions of the different filters outputs. In Fig. \ref{appfig::invPendulum:Gelbrich}(top) we
estimate the distance between single-sample distributions, while in
Fig. \ref{appfig::invPendulum:Gelbrich}(bottom) we consider the joint distributions of $16$ state-vectors,
$x_{t},t\in[k-15,k]$. Observe that while each sample of $\hat{x}_{\ntc}$
is distributed similarly to its reference sample, it fails to attain
perfect perceptual quality where we measure the distance from the real
process distribution. PKF outputs attain low perceptual index (high quality) in both scenarios. We also present the perceptual quality measured
for the ground-truth signal $x_{\gt}$ empirical distribution, as a reference. 

Figure \ref{appfig::invPendulum:MSEstat} shows the asymptotic behavior (empirical error for large horizon $T$) of $\hat{x}_{stat.}$, the stationary PKL \eqref{eq::pkl:stationaryfilter}. The figure also presents the empirical errors for Kalman filter and its stationary version (multiplied by a factor of $2$, which is an upper bound on the MSE distortion of perceptual estimators without temporal constraints, see \citep{blau2018perception}), and the theoretical steady-state error of \eqref{eq::pkl:stationaryfilter}, obtained by optimizing \eqref{eq::pkl:stationaryMaxOptimization} (dashed horizontal line) for comparison. The error of the non-stationary perceptual filter $\hat{x}_{\auc}$ is also shown.

\begin{figure}
\includegraphics[viewport=5bp 5bp 1165bp 670bp,clip,width=0.99\linewidth]{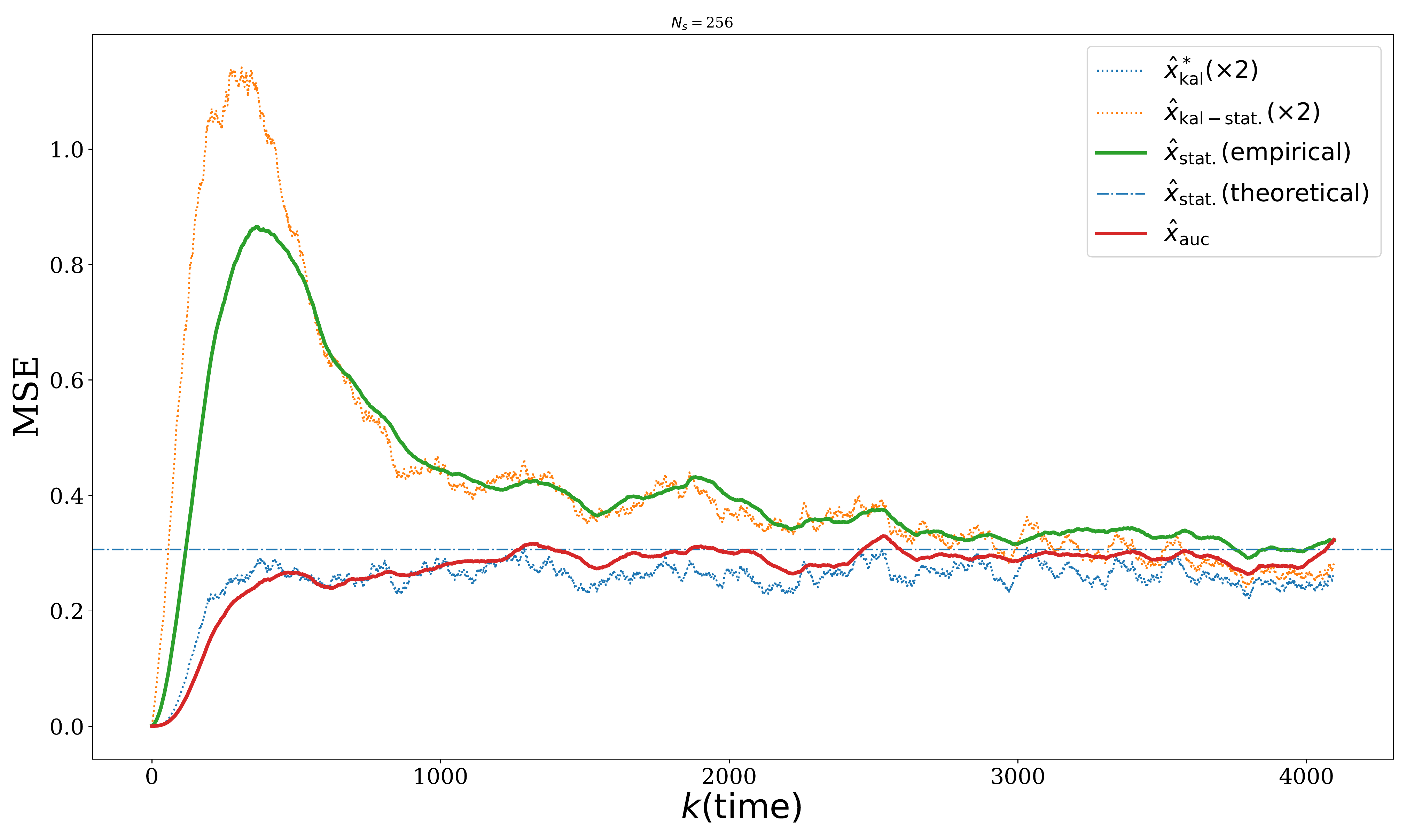} \caption[MSE distortion on Coupled inverted pendulums for stationary filters.]{
\textbf{MSE distortion on Coupled inverted pendulums for stationary filters.}} \label{appfig::invPendulum:MSEstat}
\end{figure}

\subsection{Dynamic texture}

Here we illustrate the qualitative effects of perceptual (temporally consistent) estimation in a simplified video restoration setting. Please see the supplementary video for the full videos. 
This setup visually demonstrates how:
\begin{enumerate}
    \item Filters with no perfect perceptual quality tend to generate non-realistic images or atypical motion (random or slow movement, flickering artifacts etc.).  
    \item PKF outputs are natural to the domain, both spatially and temporally.
\end{enumerate}

For this extent, we introduce the `Dynamic Texture' domain. In this domain, video frames are generated from a latent state which represents their \textit{Factor-Analysis} (FA) decomposition (see \textit{e.g.} \citet[Sec. 12.2.4]{bishop2006pattern} for more details). The dynamics in the FA domain are assumed to be 
linear, with a small Gaussian perturbation, 
\begin{equation}
    x^{\FA}_k = A^{\FA} x^{\FA}_{k-1} + q_k,\quad x_0^{\FA}\sim\mathcal{N}\left(0,I\right), \quad x^{\FA}_k\in\R^{128}.
\end{equation}
The state vector with the given  dynamics creates frames of a wavy lake in the video domain \footnote{Original frames are taken from `river-14205' by OjasweinGuptaOJG via pixabay.com, and are free to use under the content licence.}, through an affine transformation, 
\begin{equation}
    x^{vid}_k = W_{\FA\rightarrow vid}\left(x_k^{\FA} + \varepsilon^{\FA}\right).
\end{equation}
$W_{\FA\rightarrow vid}$ is a linear transformation from $\R^{128}$ latent states to $\R^{512\times512\times3}$ frames, and $\varepsilon^{\FA}$ is a constant vector. 
$A^{FA}$ and the noise $q_k$ parameters are estimated similarly to \cite{doretto2003dynamic}.
Linear observations $y_k \in \R^{32\times 32}$ are given in the frame (pixel) domain, by 
\begin{equation}
y_k = C_{k} x^{\FA}_{k} + r_k.
\label{eq::pkl:dyntex:obs}
\end{equation}
At times where information is being observed,
\begin{equation}
\label{appeq::pkl:C:obs}
    C_k = C_{\times 16}W_{RGB\rightarrow y}W_{\FA\rightarrow vid},
\end{equation}
where $W_{RGB\rightarrow y}$ is a projection onto the $Y$-channel (grayscale) and $C_{\times 16}$ is a matrix that performs $16\times$ downsampling in both axes. At times where there is no observed information, $C_k=0$. Here, $r_k$ is a Gaussian noise.

In our first  experiment, measurements are supplied as in \eqref{appeq::pkl:C:obs}
 up to frame $k=127$ and then vanish ($C_k=0,k\geq128$), letting the different filters predict the next, unobserved, frames of the sequence. 
We pass $y_k$ as an input to the various filters (see Table \ref{apptab::pkl:filters demonstrated}); $\hat{x}^*_{\kal}$ is the Kalman filter output. $\hat{x}_{\ntc}$ is the perceptual filter in the spatial domain, given in \eqref{eq::pkl:ntc-estimator}. $\hat{x}_{\auc}$ is our Algorithm (PKF) output reducing the total cost in the latent space,
$
    \C_{\auc}=\sum_{k=0}^T \EEb{\| x^{\FA}_k - \hat{x}_k\|^2}
$. 
All filtering is done in the latent domain, and then transformed to the pixel domain. MSE is also calculated in the FA domain.
In (Fig. \ref{appfig::pkf:river_demo:predictexp}) we can see that until frame $k = 127$, all filters reconstruct the reference frames well. Starting at time $k=128$, when measurements disappear, we observe that the Kalman filter slowly fades into a static, blurry output which is the average frame value in this setting. This is definitely a non-`realistic' video; Neither the individual frames nor the static behavior are natural to the domain.
Our perfect-perceptual filter, $\hat{x}_{\auc}$, keeps generating a `natural' video, both spatially and temporally. This makes its MSE grow faster.

\begin{figure}
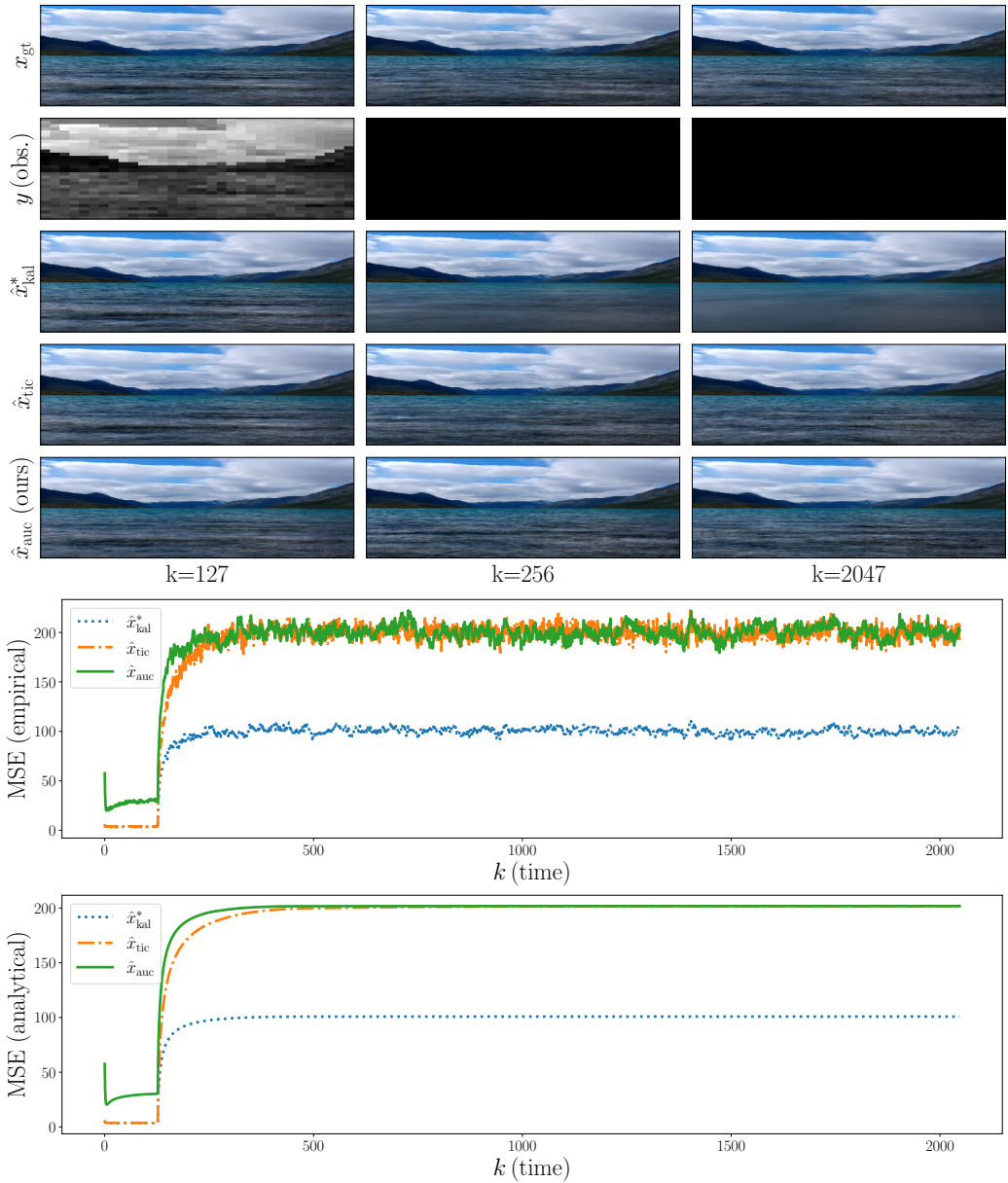

\centering \centering 
\includegraphics[viewport=5bp 5bp 1165bp 685bp,clip,width=0.99\linewidth]{plots/river/river_wc10_ours.pdf}
\includegraphics[viewport=5bp 5bp 1165bp 685bp,clip,width=0.99\linewidth]{plots/river/river_wc10_oursMSE.pdf}
\caption{\label{appfig::pkf:river_demo:predictexp}
\textbf{Frame prediction on a dynamic texture domain.}   
In this experiment, measurements are supplied only up to frame $k=127$. The filter's task here is to predict the unobserved future frames of the sequence.  Observe that the $\hat{x}^*_{\kal}$ fades into a  blurred average frame, while the perceptual filter $\hat{x}_{\auc}$ generates a natural video, both spatially and temporally. This makes its MSE grow faster,
}     
\end{figure}

We now perform a second experiment, where 
$C_k$ is
set to zero until frame $k=512$.  At times $k \geq 513$ 
measurements are
given again by the noisy, downsampled frames as described in \eqref{eq::pkl:dyntex:obs}-\eqref{appeq::pkl:C:obs}.
In Fig.~\ref{appfig::pkf:river_demo:genexp} we present the outcomes of the different filters. 
We first observe that up to frame $k=512$, there is no observed information, hence outputs are actually being generated according to priors.
The Kalman filter outputs  a static, average frame. 
$\hat{x}_{\ntc}$ randomizes each frame independently, which creates  the impression of rapid, random movement with flickering features, which is unnatural to the reference domain.
At frame $k=513$, when observations become available, we can see that $\hat{x}^*_{\kal}$ and $\hat{x}_{\ntc}$ are being updated immediately, creating an inconsistent, non-smooth motion between frames $512$ and $513$. PKF output $\hat{x}_{\auc}$, on the other hand, keeps  maintaining a smooth motion. 
Since non-consistent filters outputs rapidly becomes similar to the ground-truth, their errors drop. The perfect-perceptual filter, $\hat{x}_{\auc}$, remains consistent with its previously generated frames and the natural dynamics of the model, hence its error decays more slowly.

\begin{figure}
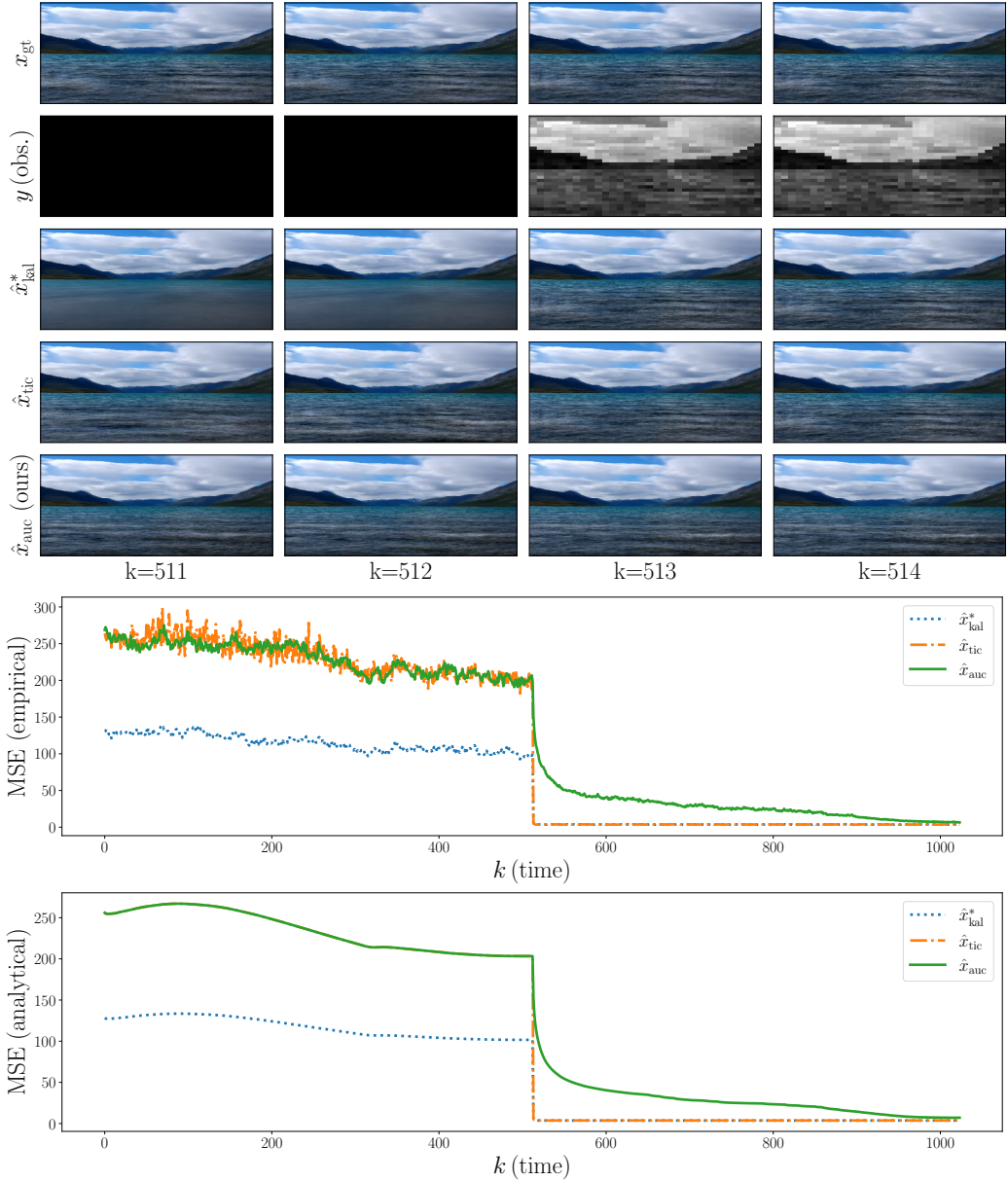

\centering \centering 
\includegraphics[viewport=5bp 5bp 1165bp 685bp,clip,width=0.99\linewidth]{plots/river/river_wc5_T1024.pdf}
\includegraphics[viewport=5bp 5bp 1165bp 685bp,clip,width=0.99\linewidth]{plots/river/river_wc5_T1024MSE.pdf}
\caption{\label{appfig::pkf:river_demo:genexp}
\textbf{Frame generation on Dynamic texture domain.}   
In the first half of the demo ($k\leq 512$), there are no observations, hence the reference signal is restored according to prior distribution. We observe that filters with no perfect-perceptual quality constraint in the temporal domain generate non-realistic frames (Kalman filter  output $\hat{x}_{\kal}^*$) or unnatural motion ($\hat{x}_{\ntc}$).
Perceptual filter $\hat{x}_{\auc}$ is constrained by previously generated frames and the natural dynamics of the domain, hence its MSE decays slower.
}     
\end{figure}

\end{document}